\begin{document}

\title {\huge On the Global Convergence of Actor-Critic: A Case for Linear Quadratic Regulator with Ergodic Cost}

\author{Zhuoran Yang\thanks{Department of Operations Research and Financial Engineering, Princeton University}\quad~ Yongxin Chen\thanks{School of Aerospace Engineering, Georgia Institute of Technology}\quad~Mingyi Hong\thanks{Department of Electrical and Computer Engineering, University of Minnesota} \quad~Zhaoran Wang\thanks{Department of Industrial Engineering and Management Sciences, Northwestern  University}}

\date{}

\maketitle


\begin{abstract}
Despite the empirical success of the actor-critic algorithm, its theoretical understanding lags behind. In a broader context, actor-critic can be viewed as an online alternating update algorithm for bilevel optimization, whose convergence is known to be fragile. To understand the instability of actor-critic, we focus on its application to linear quadratic regulators, a simple yet fundamental setting of reinforcement learning. We establish a nonasymptotic convergence analysis of actor-critic in this setting. In particular, we prove that actor-critic finds a globally optimal pair of actor (policy) and critic (action-value function) at a linear rate of convergence. Our analysis may serve as a preliminary step towards a complete theoretical understanding of bilevel optimization with nonconvex subproblems, which is NP-hard in the worst case and is often solved using heuristics. 
\end{abstract}


\section{Introduction}\label{eq:intro}
The actor-critic algorithm \citep{konda2000actor} is one of the most used algorithms in reinforcement learning \citep{mnih2016asynchronous}. Compared with the classical policy gradient algorithm \citep{williams1992simple}, actor-critic tracks the action-value function (critic) in policy gradient in an online manner, and alternatively updates the policy (actor) and the critic. On the one hand, the online update of critic significantly reduces the variance of policy gradient and hence leads to faster convergence. On the other hand, it also introduces algorithmic instability, which is often observed in practice \citep{islam2017reproducibility} and parallels the notoriously unstable training of generative adversarial networks \citep{pfau2016connecting}. Such instability of actor-critic originates from several intertwining challenges, including (i) function approximation of actor and critic, (ii) improper choice of stepsizes, (iii) the noise arising from stochastic approximation, (iv) the asynchrony between actor and critic, and (v) possibly off-policy data used in the update of critic. As a result, the convergence of actor-critic remains much less well understood than that of policy gradient, which itself is open. Consequently, the practical use of actor-critic often lacks theoretical guidance. 

In this paper, we aim to theoretically understand the algorithmic instability of actor-critic. In particular, under a bilevel optimization framework, we establish the global rate of convergence and sample complexity of actor-critic for linear quadratic regulators (LQR) with ergodic cost, a simple yet fundamental setting of reinforcement learning \citep{recht2018tour}, which captures all the above challenges. Compared with the classical two-timescale analysis of actor-critic \citep{borkar1997actor}, which is asymptotic in nature and requires finite action space, our analysis is fully nonasymptotic and allows for continuous action space. Moreover, beyond the convergence to a stable equilibrium obtained by the classical two-timescale analysis, we for the first time establish the linear rate of convergence to a globally optimal pair of actor and critic. In addition, we characterize the required sample complexity. As a technical ingredient and byproduct, we for the first time establish the sublinear rate of convergence for the gradient temporal difference algorithm \citep{sutton2009fast,sutton2009convergent} for ergodic cost and dependent data, which is of independent interest.  

Our work adds to two lines of works in machine learning, stochastic analysis, and optimization: 
\vskip4pt
\noindent (i) Actor-critic falls into the more general paradigm of bilevel optimization \citep{luo1996mathematical, dempe2002foundations, bard2013practical}. Bilevel optimization is defined by two nested optimization problems, where the upper-level optimization problem relies on the output of the lower-level one. As a special case of bilevel optimization, minimax optimization is prevalent in machine learning. Recent instances include training generative adversarial neural networks \citep{goodfellow2014generative}, (distributionally) robust learning \citep{sinha2017certifiable}, and imitation learning \citep{ho2016generative, cai2019global}. Such instances of minimax optimization remain challenging as they lack convexity-concavity in general \citep{du2018linear, sanjabi2018solving, chen2018training, rafi2018noncon, lin2018solving, dai2017learning, dai2018boosting, dai2018sbeed, lu2019understand}. The more general paradigm of bilevel optimization remains even more challenging, as there does not exist a unified objective function for simultaneous minimization and maximization. In particular, actor-critic couples the nonconvex optimization of actor (policy gradient) as its upper level and the convex-concave minimax optimization of critic (gradient temporal difference) as its lower level, each of which is challenging to analyze by itself. Most existing convergence analysis of bilevel optimization is based on two-timescale analysis \citep{borkar1997stochastic}. However, as two-timescale analysis abstracts away most technicalities via the lens of ordinary differential equations, which is asymptotic in nature, it often lacks the resolution to capture the nonasymptotic rate of convergence and sample complexity, which are obtained via our analysis. 
\vskip4pt
\noindent (ii) As a proxy for analyzing more general reinforcement learning settings, LQR is studied in a recent line of works \citep{bradtke1993reinforcement, recht2018tour, fazel2018global, tu2017least, tu2018gap, dean2018regret, dean2018safely, simchowitz2018learning, dean2017sample, hardt2018gradient}. In particular, a part of our analysis is based on the breakthrough of \cite{fazel2018global}, which gives the global convergence of the population-version policy gradient algorithm for LQR and its finite-sample version based on the zeroth-order estimation of policy gradient based on the cumulative reward or cost. However, such zeroth-order estimation of policy gradient often suffers from large variance, as it involves the randomness of an entire trajectory. In contrast, actor-critic updates critic in an online manner, which reduces such variance but also introduces instability and complicates the convergence analysis. In particular, as the update of critic interleaves with the update of actor, the policy gradient for the update of actor is biased due to the inexactness of critic. Meanwhile, the update of critic has a ``moving target'', as it attempts to evaluate an actor that evolves along the iterations. A key to our analysis is to handle such asynchrony between actor and critic, which is a ubiquitous challenge in bilevel optimization. We hope our analysis may serve as the first step towards analyzing actor-critic in more general reinforcement learning settings.



 \vspace{5pt}
\noindent{\bf Notation.} For any integer $n>0$, we denote $\{ 1, \ldots, n \}$ to be $[n]$. For any symmetric matrix $X$, let  $\svec(X)$ denote the vectorization of  the upper triangular submatrix of $X$ with off-diagonal entries weighted by $\sqrt{2}$. Meanwhile, let $\smat(\cdot)$ be the inverse operation of $ \svec(\cdot)$, which maps a vector to a symmetric matrix. Besides, we denote by $A \otimes _{s} B$ the symmetric Kronecker product of $A$ and $B$. We use $\| v\|_2$ to denote the $\ell_2$-norm of a vector $v$. Finally, for  a matrix $A$, we use $\| A \|$ and $\| A \|_{\fro}$ to denote  its the operator norm and Frobenius norm, respectively.


\section{Background}

In the following, we introduce the background of actor-critic and LQR. In particular, we show that actor-critic can be cast as a first-order online alternating update algorithm for a bilevel optimization problem \citep{luo1996mathematical, dempe2002foundations,bard2013practical}.

\subsection{Actor-Critic Algorithm} \label{sec:rl}

 We consider a Markov decision process, which is defined by $(  \cX, \cU, P,  c , D_0)$. Here $\cX$ and $\cU$ are the   state  and      action spaces, respectively,  $P\colon \cX \times \cU  \rightarrow \cP(\cX)$ is the Markov transition kernel,  $c\colon \cX \times \cU \rightarrow \RR$ is the cost function, and  $D_0 \in \cP(\cX)$ is the distribution of the initial state $x_0 $. For any $t\geq 0$, at the $t$-th time step, the agent takes action $u_t \in \cU$ at state $x_t \in \cX$,
  which incurs a cost $c(x_t, u_t)$ and moves the environment  into a new state  $x_{t+1} \sim P(\cdot \given x_t, u_t)$. A policy  specifies how the action $u_t$ is taken at a given state $x_t$. Specifically,  in order to handle infinite state and action spaces, we focus on a parametrized policy class $ \{ \pi_{\omega} \colon \cX \rightarrow\cP(  \cU) ,   \omega \in \Omega \} $, where $ \omega$ is the parameter of policy $\pi_{\omega}$,  and the agent takes action $u \sim  \pi_{\omega}(\cdot \given x)$ at a given state $x \in \cX$. 
 The agent aims to find a policy that minimizes the infinite-horizon time-average cost, that is, 
  \# \label{eq:cost}
 \minimize _{ \omega \in \Omega}~ J( \omega) = \lim_{T\rightarrow \infty}     \EE    \biggl [ \frac{1}{T}    \sum_{t =  0}^T    c(x_t, u_t)  \bigggiven x_0 \sim D_0, u_t  \sim  \pi_{ \omega }   ( \cdot \given  x_t), \forall t\geq 0\biggr ]. 
  \#
 Moreover, for policy $\pi_{ \omega}$, we define the (advantage) action-value and state-value functions respectively as 
 \#\label{eq:value_function}
 Q_{ \omega} (x, u ) =   \sum_{t\geq 0}\EE_{ \omega} \bigl[ c(x_t, u_t) \given x_0 = x, u_0 = u \bigr] - J( \omega ), \qquad  V_{ \omega} (x ) =  \EE_{u\sim \pi_{\omega} (\cdot \given x) } \bigl[ Q_{ \omega} ( x, u)\bigr],  
 \#
 where we use $\EE_{\omega}$ to indicate that the state-action pairs $\{ (x_t, u_t)\}_{t\geq 1}$ are obtained from policy $\pi_{ \omega}$.
  
 Actor-critic is based on the idea of solving the minimization problem in \eqref{eq:cost} via first-order optimization, which uses an estimator of $\nabla _{\omega}J(\omega)$. In detail, by the policy gradient theorem \citep{sutton2000policy, baxter2001infinite, konda2000actor}, we have
\#\label{eq:pg_thm}
\nabla_{\omega} J(\omega) = \EE_{x \sim \rho_\omega, u \sim \pi_\omega(\cdot \given x)}  \big [ \nabla _{\omega} \log \pi_\omega (u \given x) \cdot Q_{\omega} (x,u)\big ], 
\#
where $ \rho_\omega \in \cP(\cX)$ is the stationary distribution induced by $\pi_\omega$. 
Based on \eqref{eq:pg_thm}, actor-critic \citep{konda2000actor} consists two steps: (i) a policy evaluation step that estimates the action-value function $Q_{\omega}$ (critic) via temporal difference learning \citep{dann2014policy}, where $Q_{\omega}$ is estimated using a parametrized function class $\{Q^{\theta} \colon \theta \in \Theta \}$,  and (ii) a policy improvement step that updates the parameter $\omega$ of policy $\pi_\omega$ (actor) using a stochastic version of the policy gradient in \eqref{eq:pg_thm}, where $Q_{\omega}$ is replaced by the corresponding estimator $Q^\theta$. 
 
  As shown in 
 \cite{yang2018convergent}, actor-critic can be cast as solving a bilevel optimization problem, which takes the form
 \#
  \minimize_{\omega \in \Omega} \quad & \EE_{x\sim \rho_{\omega}, u \sim \pi_{\omega}(\cdot \given x)  }  \bigl [ Q^\theta(x, u) \bigr ], \label{eq:upperbilevel} \\
 \text{subject to}\quad  &( \theta, J) = \argmin_{\theta \in \Theta, J \in \RR}  \EE_{x\sim \rho_{\omega}, u\sim \pi_{\omega}(\cdot \given s)  } \Bigl \{ \bigl [  Q^\theta(x,u)  + J - c(x, u) - ( \cB^\omega Q ^{\theta} )(x, u) \bigr]^2 \Bigr \},\label{eq:lowerbilevel}
 \#
 where $\cB^\omega$ is an operator that depends on $\pi_{\omega}$.  In this problem, the actor and   critic  correspond to  the upper-level and lower-level variables, respectively.  Under this framework, the policy update can be viewed as a stochastic gradient step for the upper-level problem in \eqref{eq:upperbilevel}. 
 The objective in  \eqref{eq:lowerbilevel}  is usually the mean-squared Bellman error or mean-squared projected Bellman error \citep{dann2014policy}.
 Moreover, when $\cB^{\omega}$ is the Bellman evaluation operator associated with $\pi_{\omega}$ and we solve the lower-level problem in \eqref{eq:lowerbilevel} via stochastic  semi-gradient descent, we obtain the TD(0) update  for policy evaluation \citep{sutton1988learning}.  
 Similarly, when  $\cB^{\omega} $ is the projected Bellman evaluation operator  associated with  $\pi_{\omega}$, solving the lower level problem naturally 
 recovers the GTD2 and TDC algorithms for policy evaluation \citep{bhatnagar2009convergent}. 
  Therefore, the actor-critic algorithm is a first-order online algorithm for the bilevel optimization problem in \eqref{eq:upperbilevel} and \eqref{eq:lowerbilevel}. We remark  that bilevel optimization contains a family of extremely challenging  problems. Even when the objective functions are linear, bilevel programming is NP-hard \citep{hansen1992new}. In practice, various heuristic algorithms are applied  to solve them approximately \citep{sinha2018review}.

 \subsection{Linear Quadratic Regulator}\label{sec:lqr}
 
As the simplest optimal control problem,   linear quadratic regulator  serves as a perfect baseline to examine the performance of reinforcement learning methods.   Viewing LQR from the lens of MDP,  the state and action spaces are  $\cX=\real^{d}$ and $\cU=\real^{k}$, respectively. Besides,  the  state transition dynamics and cost function are specified by
\#\label{eq:lqr_model}
x_{t+1} = A x_t + B u_t + \epsilon_t,   \qquad c(x, u )=x ^\top Q x  + u ^\top R u, 
\#
where $\epsilon_t \sim N(0, \Psi)$ is the random noise that is  i.i.d. for each $t\geq 0$, and $A$, $B$, $Q$, $R$, $\Psi$ are matrices of proper dimensions with 
 $Q, R, \Psi \succ 0$. 
 Moreover, we assume that the dimensions $d$ and $k$ are fixed throughout this paper.
For the problem of  minimizing the infinite-horizon time-average cost $\limsup_{T\rightarrow \infty}    T^{-1} \sum_{t=0}^{T} \EE [ c(x_t, u_t)]$ with $x_0 \sim D_0$, it is known that the optimal action  are linear in the corresponding state \citep{zhou1996robust, anderson2007optimal, bertsekas2012dynamic}. Specifically, the optimal actions  
$\{ u_t^*\}_{t\geq 0}$ satisfy $u_t^* =- K ^*x_t$ for all $t\geq 0$, where  $K^* \in \RR^{k \times d}$ 
can be written as 
$K^*  = ( R + B^\top P^* B)^{-1}  B^\top P^* A $, with  $P^*$ being the solution to the discrete algebraic Riccati equation 
\#\label{eq:riccati}
P^* = Q + A ^\top P^* A + A^\top P^* B  ( R + B^\top P^* B)^{-1}  B^\top P^* A.
\#

In the optimal control literature, it is  common to  solve  LQR by    first estimating matrices $A$, $B$, $Q$, $R$ and then solving the Riccati equation in \eqref{eq:riccati}
with these matrices replaced by their estimates. Such an approach is known as  model-based as  it requires estimating the   model parameters and the performance of the planning step in   \eqref{eq:riccati} hinges on how well the true model is estimated. See, e.g,  \cite{dean2017sample, tu2018gap} for  theoretical guarantees of model-based methods.

In contrast,   from a purely data-driven perspective, the framework of model-free reinforcement learning   offers a general treatment for optimal control problems without the prior knowledge of the model.  Thanks to its  simple structure,  LQR enables us to assess the  performances of reinforcement learning algorithms from a theoretical perspective. Specifically, it is shown that policy iteration  \citep{ bradtke1993reinforcement,bradtke1994adaptive, meyn1997policy}, adaptive dynamic programming \citep{powell2011review}, and policy gradient methods  \citep{fazel2018global, malik2018derivative, tu2018gap} are all able to obtain the optimal policy of LQR. 
Also see \cite{recht2018tour} for a thorough review of reinforcement learning methods in the setting of LQR.


\section{Actor-Critic Algorithm for LQR}

In this section, we establish the actor-critic algorithm for the LQR problem introduced in \S\ref{sec:lqr}. Recall that the optimal policy of LQR is a linear function of the state. 
Throughout the rest of this paper,  we   focus on the family of linear-Gaussian policies 
\#\label{eq:gaussian_policy}
  \bigl \{  \pi_K (\cdot \given x) =  N( - Kx , \sigma^2   I_d), K \in \RR^{k \times d} \bigr \},
\#
where $\sigma > 0$ is a fixed constant. 
That is, for any $t\geq 0$, at state $x_t$, we could write the action $u_t$ by $  u_t = - Kx_t + \sigma \cdot \eta_t$, where $\eta_t \sim N(0, I_k)$.  We note that if $\sigma = 0$, then the optimal policy $ u = - K^* x$ belongs to  our policy class.
 Here, instead of focusing on deterministic policies,  we adopt  Gaussian policies to encourage exploration. 
 For policy $\pi_K$, the corresponding time-average cost $J(K)$,  state-value function  $V_K$, and action-value function $Q_K$ are specified  as in \eqref{eq:cost} and \eqref{eq:value_function}, respectively.
 
In the following, we first establish the policy gradient  and value functions for the ergodic  LQR in \S\ref{sec:pg}. Then, in \S\ref{sec:ac},  we present the on-policy  natural actor-critic algorithm, which is further extended to the off-policy setting in \S\ref{sec:offac}.

\subsection{Policy Gradient Theorem for Ergodic LQR}\label{sec:pg}
For any policy $\pi_K$, 
by \eqref{eq:lqr_model}, 
the state dynamics is given by a linear dynamical system  
\#\label{eq:new_dyn}
x_{t+1} = (A - BK ) x_t + \varepsilon_t , \qquad \text{where}\quad \varepsilon _t = \epsilon_t + \sigma\cdot  B\eta_t \sim N(0, \Psi _{\sigma}  ).
\#
Here we define $ \Psi_{\sigma}:=\Psi + \sigma^2 \cdot B B^\top$   in \eqref{eq:new_dyn}
 to simplify the notation.  
 It is known that, when $\rho(A - BK) < 1$, the Markov chain  in \eqref{eq:new_dyn} has stationary distribution $N(0, \Sigma_K)$, denoted by $\rho_K$ hereafter, where $\Sigma_K$  is the unique positive definite  solution to the  Lyapunov equation
 \#\label{eq:cov_equ}
 \Sigma_K = \Psi_\sigma + (  A - BK)  \Sigma_K ( A- BK)^\top.
 \#
In the following proposition, we establish $J(K)$, the value functions, and the gradient  $\nabla_K J(K)$.

\begin{proposition} \label{prop:pg}
	For any $K \in \RR^{k\times d}$ such that $\rho(A - BK )< 1$, let $P_K$ be  the unique positive definite solution to the Bellman equation
	\#\label{eq:bellman}
	P_K   =   (Q + K^\top R K )+  ( A- BK) ^\top  P_K ( A- BK)  . 
	\#
	In the setting of LQR, for policy $\pi_K$, 	
	 both the state- and action-value functions are     quadratic. Specifically, we have 
	\#
	V_K(x) & = x^\top P_K x - \tr(P_K \Sigma_K) , \label{eq:vk} \\
	Q_K(x,u) & =  x ^\top \Theta_K^{11} x + x^\top \Theta_K ^{12} u + u^\top \Theta_{K}^{21} x + u^\top \Theta_{K}^{22} u -  \sigma ^2 \cdot \tr (R + P_K BB^\top)- \tr(P_K \Sigma_K)  , \label{eq:qk}
	\# 
	where $\Sigma_K$ is specified  in \eqref{eq:cov_equ}, and we define matrix $\Theta_K $ by 
	\#\label{eq:ThetaK}
	\Theta_K = \begin{pmatrix}
	\Theta_K^{11} & \Theta_{K}^{12}\\
		\Theta_{K}^{21}  & \Theta_K^{22} 
	\end{pmatrix}  = 
	\begin{pmatrix}
		Q + A^\top P_K A  & A ^\top P_K B \\
		B^\top P_K A   & R + B^\top P_K B 
	\end{pmatrix} .
	\#
 Moreover, the time-average cost $J(K)$ and its gradient are given by 
	\#
	J(K) &  =  \tr \bigl [  ( Q+ K^\top R K ) \Sigma_K \bigr ] + \sigma^2 \cdot \tr(R) = \tr ( P_K \Psi_{\sigma}) +  \sigma ^2 \cdot \tr (R).\label{eq:cost_K2} \\
  \nabla_K J(K)  & = 2 \bigl [  ( R   +  B^\top P_K B ) K - B^\top P_K A \bigr ] \Sigma _K = 2 E_K \Sigma _K, \label{eq:grad_cost} 
	\# 
	where we define $E_K :=   ( R   +  B^\top P_K B ) K - B^\top P_K A    $.
\end{proposition}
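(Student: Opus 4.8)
The plan is to anchor every claim on the Bellman recursion for the differential value functions in \eqref{eq:value_function}, exploiting that under a linear-Gaussian policy all relevant laws are Gaussian. Write $M = A - BK$ for the closed-loop matrix and $S_K = Q + K^\top R K$ for the state cost incurred under $\pi_K$, so that along the closed loop \eqref{eq:new_dyn} the effective noise has covariance $\Psi_\sigma$. From \eqref{eq:value_function} I first record the relations $V_K(x) = \EE_{u\sim\pi_K(\cdot\given x)}[Q_K(x,u)]$ and
\[
Q_K(x,u) = c(x,u) - J(K) + \EE_{x'\sim P(\cdot\given x,u)}\bigl[V_K(x')\bigr],
\]
the second obtained by splitting off the $t=0$ term of the defining series and identifying the tail with $\EE_{x'}[V_K(x')]$; the series converges because $\rho(M)<1$.

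For the value functions I would use a quadratic ansatz. Computing $\EE[c(x_t,u_t)\given x_0=x]$ from the Gaussian law $x_t\sim N(M^t x,\Sigma_t)$ with $\Sigma_t\uparrow\Sigma_K$ shows the series defining $V_K$ sums to $x^\top P_K x$ plus a finite constant, with quadratic part $P_K = \sum_{t\geq 0}(M^t)^\top S_K M^t$, which is precisely the unique positive definite solution of \eqref{eq:bellman}. The constant is then fixed by the normalization $\EE_{x\sim\rho_K}[V_K(x)] = 0$ (immediate from the series, since $x_t\sim\rho_K$ for all $t$ once $x_0\sim\rho_K$), giving \eqref{eq:vk}. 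Substituting \eqref{eq:vk} into the one-step relation with $x' = Ax + Bu + \epsilon$ and $\epsilon\sim N(0,\Psi)$ — here $u$ is the fixed argument of $Q_K$, so only the process noise $\Psi$ enters rather than $\Psi_\sigma$ — and expanding the Gaussian expectation yields \eqref{eq:qk}; matching its quadratic block against \eqref{eq:ThetaK} is direct, and the constant collapses after invoking the cost formula. For the cost, the first expression in \eqref{eq:cost_K2} follows by evaluating $\EE[x^\top Qx + u^\top R u]$ under $x\sim N(0,\Sigma_K)$, $u=-Kx+\sigma\eta$. Its equivalence with $\tr(P_K\Psi_\sigma)+\sigma^2\tr(R)$ reduces to the trace identity $\tr(S_K\Sigma_K) = \tr(P_K\Psi_\sigma)$, which I would prove by inserting \eqref{eq:cov_equ} into $\tr(P_K\Psi_\sigma)$, \eqref{eq:bellman} into $\tr(S_K\Sigma_K)$, and matching via $\tr(M^\top P_K M\Sigma_K) = \tr(P_K M\Sigma_K M^\top)$.

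The gradient \eqref{eq:grad_cost} is the crux and the part I expect to be hardest. I would differentiate the first form $J(K) = \tr(S_K\Sigma_K) + \sigma^2\tr(R)$, which depends on $K$ both explicitly through $S_K$ and implicitly through $\Sigma_K$. The explicit part contributes $2RK\Sigma_K$ after collapsing its two cross terms by transposition under the trace. The genuine obstacle is the implicit part $\tr(S_K\,\mathrm{d}\Sigma_K)$, since $\Sigma_K$ is only defined implicitly by \eqref{eq:cov_equ}. The device I would use is the adjoint relationship between the operator $\mathcal{T}_M\colon X\mapsto MXM^\top$ in the Lyapunov equation and its trace-adjoint $\mathcal{T}_M^\ast\colon X\mapsto M^\top X M$ in the Bellman equation: differentiating \eqref{eq:cov_equ} gives $\mathrm{d}\Sigma_K = (\mathrm{Id}-\mathcal{T}_M)^{-1}[\mathrm{d}M\,\Sigma_K M^\top + M\Sigma_K\,\mathrm{d}M^\top]$ with $\mathrm{d}M = -B\,\mathrm{d}K$, and sliding $(\mathrm{Id}-\mathcal{T}_M)^{-1}$ across the trace inner product converts $S_K$ into $(\mathrm{Id}-\mathcal{T}_M^\ast)^{-1}S_K = P_K$. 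This yields $\tr(S_K\,\mathrm{d}\Sigma_K) = -2\tr(\mathrm{d}K^\top B^\top P_K M\Sigma_K)$, and combining with the explicit part gives $\mathrm{d}J = 2\tr(\mathrm{d}K^\top[(R+B^\top P_K B)K - B^\top P_K A]\Sigma_K)$, i.e. $\nabla_K J(K) = 2E_K\Sigma_K$. The technical points to secure are the differentiability of $K\mapsto\Sigma_K$ (and of $P_K$), which follows from the implicit function theorem since $\mathrm{Id}-\mathcal{T}_M$ is invertible when $\rho(M)<1$, and the symmetry bookkeeping that makes the two halves of each matrix derivative coincide.

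I could alternatively obtain \eqref{eq:grad_cost} from the policy gradient theorem \eqref{eq:pg_thm}, using $\nabla_K\log\pi_K(u\given x) = -\sigma^{-2}(u+Kx)x^\top$, but this route forces a fourth-moment Gaussian computation and a separate justification for exchanging differentiation with the ergodic limit, so I would keep the direct differentiation above as the main line of argument.
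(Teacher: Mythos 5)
Your proposal is correct and follows essentially the same route as the paper: quadratic ansatz plus the one-step Bellman recursion and the normalization $\EE_{x\sim\rho_K}[V_K(x)]=0$ for the value functions, the stationary-distribution computation for $J(K)$ with the adjoint trace identity $\tr(S_K\Sigma_K)=\tr(P_K\Psi_\sigma)$, and direct differentiation of $\tr(S_K\Sigma_K)$ using the adjointness of the Lyapunov operators $\cT_K$ and $\cT_K^\top$ to produce $2E_K\Sigma_K$. The only (immaterial) difference is that you obtain the key gradient identity by implicitly differentiating the Lyapunov equation and sliding $(\mathrm{Id}-\cT_M)^{-1}$ across the trace pairing, whereas the paper proves the equivalent identity $\nabla_K\tr[W\,\cT_K(\Psi)]=-2B^\top\cT_K^\top(W)(A-BK)\cT_K(\Psi)$ by recursively unrolling the series and passing to the limit.
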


\begin{proof}
	See \S \ref{proof:prop:pg} for a detailed proof.  
	\end{proof}
 
To see the connection between \eqref{eq:grad_cost} and the policy gradient theorem in \eqref{eq:pg_thm}, note that by direct computation we have 
\#\label{eq:score_function}
\nabla_K \log \pi_K ( u \given x) = \nabla_K \bigl [  - (2 \sigma^2 )^{-1} \cdot  (u + K x )^2 \bigr ] = - \sigma^{-2} \cdot (u+K x) x^\top.
\#
Thus, combining,  \eqref{eq:qk}, \eqref{eq:score_function},  and the fact that $u = -Kx + \sigma \cdot \eta $ under $\pi_K$, the right-hand side of \eqref{eq:pg_thm} can be written as 
\$
& \EE_{x\sim \rho_K, u \sim \pi_K} \bigl [ \nabla_K \log \pi_K (u\given x) \cdot Q_K(x, u )  \bigr ] = - \sigma^{-2} \cdot \EE_{x\sim \rho_K, \eta \sim N(0, I_k) }  [ \sigma \cdot \eta x^\top \cdot Q_{K} (x, -Kx+\sigma \cdot\eta  )] . 
\$
Recall that for $\eta \in N(0, I_k)$, Stein's identity \citep{stein1981estimation}, $\EE [ \eta \cdot  f(\eta ) ] = \EE [ \nabla f(\eta)]$, holds for all differentiable function $f \colon \RR^k \rightarrow \RR$, which implies that  
\# 
 \nabla_K J(K) &  =  - \EE_{x\sim \rho_K, \eta \sim N(0, I_d)} [( \nabla _u Q_{K}) (x, -K x+\sigma \cdot \eta )  \cdot x^\top ]  \notag\\
 &   = -  2  \EE_{x\sim \rho_K, \eta \sim N(0, I_d)} \{  [ ( R + B^\top P_K B) ( - Kx + \sigma\cdot \eta)  + B^\top P_K A x ] x^\top \} \notag \\
  & = 2 \bigl [  ( R   +  B^\top P_K B ) K - B^\top P_K A \bigr ] \Sigma _K= 2 E_K \Sigma_K .\label{eq:verify_pg} 
\#
Thus, \eqref{eq:grad_cost} is exactly the policy gradient theorem   \eqref{eq:pg_thm} in the setting of LQR.
Moreover, it is worth  noting that Proposition \ref{prop:pg} also holds for $\sigma = 0$. Thus, setting $\sigma = 0$ in \eqref{eq:verify_pg}, we obtain 
\$
\nabla_K J(K) = - \EE_{x\sim \rho_K } [ ( \nabla _u Q_{K}) (x, -K x  )  \cdot x^\top ] = \EE_{x\sim \rho_K} \bigl [ (\nabla _{u} Q_{K})(x, u) \big \vert_{u = \pi_K(x)} \nabla_{K} \pi_K(x)  \bigr],  
\$ 
where $\pi_K(x) = - K x$. Thus we obtain   the 
 deterministic  
 policy gradient theorem  \citep{silver2014deterministic} for LQR. Although the optimal policy for LQR is deterministic, due to the lack of exploration, behaving according to a deterministic policy
   may lead to suboptimal solutions. Thus, we focus on the family of stochastic policies where   a  Gaussian noise $\sigma \cdot \eta$ is added to the action so as to promote exploration.

\subsection{Natural Actor-Critic Algorithm} \label{sec:ac}
Natural policy gradient updates the variable along the steepest direction with respect to Fisher metric.
For the Gaussian policies defined in \eqref{eq:gaussian_policy}, by \eqref{eq:score_function},   the Fisher's information of policy $\pi_K$, denoted by $\cI(K)$,  is given by 
\#\label{eq:fisher_info}
 [ \cI ( K) ]_{(i,j), (i',j')}  & =  \EE_{x\sim \rho_K, a \sim \pi_K } \bigl [ \nabla_{K_{ij}}  \log \pi_K ( u \given x) \cdot \nabla_{K_{i'j'}}  \log \pi_K ( u \given x) \bigr ]  \notag \\
&    = \sigma^{-2} \cdot \EE_{x\sim \rho_K, \eta \sim N(0, I_k) }  [  \eta_{i}  x_{j} \cdot \eta_{i'} x_{j'}   ] = \sigma^{-2} \cdot \ind\{ i = i'\}  \cdot [\Sigma_K ]_{jj'},
\#
where $i,i' \in [k]$, $j, j' \in [d]$, $K_{ij}$ and $K_{i'j'}$ are the $(i,j)$- and $(i',j')$-th entries of $K$, respectively. Thus, in view of \eqref{eq:grad_cost} in Proposition \ref{prop:pg} and \eqref{eq:fisher_info}, 
natural policy gradient algorithm updates the policy parameter  in the direction of  
\$
[ \cI(K)]^{-1} \nabla_K J(K) = \nabla_K J(K)\Sigma_K^{-1}  = E_K.
\$
  By \eqref{eq:ThetaK}, we can write $E_K $ as $\Theta_K^{22} K - \Theta_{K}^{21}$, where $\Theta_K$ is the coefficient matrix of the quadratic component of $Q_K$.
Such a connection lays the foundation of the natural actor-critic algorithm. Specifically, in each iteration of the   algorithm, the actor updates the policy via $K - \gamma \cdot ( \hat \Theta ^{22} K - \hat \Theta^{21})$, where $\gamma$ is the stepsize and $\hat \Theta$ is an estimator of $\Theta_K$ returned by any policy evaluation algorithm. 
We present such a  general natural actor-critic method in Algorithm~\ref{algo:ac},  under the assumption that we are given a stable  policy $K_0$ for initialization. Such an assumption is standard in  literatures on  model-free methods for LQR \citep{dean2018regret, fazel2018global, malik2018derivative}. 

\begin{algorithm} [htbp]
	\caption{Natural Actor-Critic Algorithm for Linear Quadratic Regulator} 
	\label{algo:ac} 
	\begin{algorithmic} 
		\STATE{{\textbf{Input:}} Initial policy  $\pi_{K_0}$ such that $\rho(A - BK_0) < 1$,   stepsizes $\gamma  $ for policy update, and a policy evaluation algorithm.}  
	 	\STATE{{\textbf{Initialization:}} Set the current policy $\pi_K$ by letting $K \leftarrow K_0$.}
		\WHILE{updating current policy}
		\STATE{{\textbf{Critic step.}} Estimate $\Theta_K$ in \eqref{eq:ThetaK} via a  policy evaluation algorithm, e.g., the on-policy GTD algorithm  (Algorithm \ref{algo:gtd}), which returns an estimator $\hat \Theta$ of $\Theta_K$. 
		}
		\STATE{{\textbf{Actor step.}}  Update the  policy parameter by  
		$ K \leftarrow  K - \gamma \cdot    ( \hat \Theta^{22} K - \hat \Theta^{21}  ) $.}
		\ENDWHILE
		\STATE{{\textbf{Output:}} The final policy $\pi_K$,  matrix   $\hat \Theta$ that estimates $\Theta_K$,  and $\hat J$ that approximates $J(K)$.}
	\end{algorithmic}
\end{algorithm}

To obtain an online actor-critic algorithm, in the sequel, we propose an online policy evaluation algorithm 
 for ergodic LQR based on temporal difference learning.  Let $\pi_K$ be the policy of interest. 
 For  notational simplicity, 
for any state-action pair $(x,u )\in \RR^{d+k}$, we define the feature function 
\#\label{eq:feature_map}
\phi(x, u) = \svec\Biggl [ \begin{pmatrix}
	x \\ u 
\end{pmatrix} \begin{pmatrix}
	x \\ u 
\end{pmatrix}^\top \Biggr ],
\#
and  denote by $\svec(\Theta_K)$ by $\theta_K^*$. 
Using this notation, the quadratic component in $Q_K$ can be written as $\phi(x,u) ^\top \theta_K^*$, and the Bellman equation for $Q_K$ becomes 
\#\label{eq:new_bellmanQ}
 \la \phi(x, u ), \theta_K ^*\ra =  c(x,u)  - J(K) + \bigl \la \EE [ \phi(x', u' ) \given x, u], \theta_K ^* \bigr  \ra , \; \forall (x, u ) \in \RR^{d+k}.
\#
In order to further simplify the notation,   hereafter,   we define   $\vartheta _K^* = ( J (K) , {\theta_K^*} ^\top ) ^\top$, denote by $\EE_{(x,u)}$ the expectation with respect to $x\sim \rho_K$ and $u \sim \pi_K(\cdot \given x)$, and let $(x', u')$ be the state-action pair subsequent to $(x,u)$.

  Furthermore, to estimate $J(K)$ and $\theta_K^*$ in \eqref{eq:new_bellmanQ}  simultaneously,  we define 
\#\label{eq:gtd_mat}
\Xi_K   = \EE_{(x, u)  }  \bigl \{  \phi (x, u )  \bigl [ \phi(x, u ) - \phi (x', u') \bigr ]^{\top } \bigr \}, \qquad  b_K = \EE_{(x ,u)  } \bigl [    c(x,u)    
\phi (x, u) \bigr ],  
\#
 Notice that  $J(K) = \EE_{(x, u) } [ c(x,u)]$. By direct computation, it can be shown that $\vartheta_K^*$ satisfies the following linear equation 
 \#\label{eq:large_le}
 \begin{pmatrix} 
 1 & 0\\
\EE_{(x,u)} [ \phi(x,u)]  & \Xi_K 
 \end{pmatrix}   \begin{pmatrix}
 \vartheta^1\\
 \vartheta^2
 \end{pmatrix}  
 = 
 \begin{pmatrix}
J(K) \\
b_K
 \end{pmatrix},   \#
  whose solution is unique  if and only if $\Xi_K$ in \eqref{eq:gtd_mat} is invertible. The following lemma shows that, when  $\pi_K$ is a stable policy,  $\Xi_K$ is indeed invertible.
 
\begin{lemma} \label{lemma:pe_mat1}
	When $\pi_K$ is stable in the sense that 
	 $\rho(A- BK)< 1$,   $\Xi_K$  defined in \eqref{eq:gtd_mat} is invertible and thus $\vartheta _K^* $ is the unique solution to the linear equation \eqref{eq:large_le}. Furthermore,    that the minimum singular value of the matrix in the left-hand side of \eqref{eq:large_le} is lower bounded by a constant $\kappa_K^*>0$, where $\kappa_K^*$ only depends on $\rho(A- BK)$, $\sigma$, and  $\sigma_{\min} (\Psi)$.   
	  \end{lemma}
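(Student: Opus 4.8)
The plan is to first reduce the statement about the full matrix in \eqref{eq:large_le} to a statement about $\Xi_K$ alone, and then to prove that the symmetric part of $\Xi_K$ is strictly positive definite with a parameter-explicit lower bound. Writing $c := \EE_{(x,u)}[\phi(x,u)]$, the matrix on the left-hand side of \eqref{eq:large_le} is block lower-triangular with diagonal blocks $1$ and $\Xi_K$, so it is invertible if and only if $\Xi_K$ is, and its inverse is $\bigl(\begin{smallmatrix} 1 & 0 \\ -\Xi_K^{-1}c & \Xi_K^{-1}\end{smallmatrix}\bigr)$. Hence its smallest singular value equals the reciprocal of the operator norm of this inverse, which is controlled from below by a function of $\sigma_{\min}(\Xi_K)$ and $\|c\|_2$ alone. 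The two remaining tasks are therefore (i) to lower bound $\sigma_{\min}(\Xi_K)$ and (ii) to upper bound $\|c\|_2$, both in terms of the advertised parameters.

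For task (i), fix $v$, set $M = \smat(v)$, and write $y := (x^\top, u^\top)^\top$, so that $f := v^\top \phi(x,u) = y^\top M y$ is a purely quadratic form in the current state-action pair; let $f'$ denote its value $v^\top \phi(x', u')$ at the subsequent pair. The definition \eqref{eq:gtd_mat} gives $v^\top \Xi_K v = \EE_{(x,u)}[f^2] - \EE_{(x,u)}[f f']$, and since $(x,u)$ and $(x', u')$ share the stationary law $\rho_K$ we have $\EE[f^2] = \EE[(f')^2]$, whence $v^\top \Xi_K v = \tfrac12 \EE[(f - f')^2]$. I would lower bound this by conditioning on the current pair: because $f$ is measurable with respect to $(x,u)$, the law of total variance yields $\EE[(f-f')^2] \geq \EE_{(x,u)}[\mathrm{Var}(f' \given x, u)]$.

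The key computation is that, by \eqref{eq:new_dyn} and the definition \eqref{eq:gaussian_policy} of the Gaussian policy, the conditional law of $(x', u')$ given $(x,u)$ is Gaussian with a mean linear in $(x,u)$ and a covariance $\Gamma$ that is independent of the conditioning value; a direct calculation gives the congruence factorization $\Gamma = L\,\mathrm{diag}(\Psi, \sigma^2 I_k)\,L^\top$ with $L = \bigl(\begin{smallmatrix} I_d & 0 \\ -K & I_k\end{smallmatrix}\bigr)$. Applying the standard formula for the variance of a quadratic form of a Gaussian vector gives $\mathrm{Var}(f' \given x, u) \geq 2\,\tr(M\Gamma M\Gamma) \geq 2\,\sigma_{\min}(\Gamma)^2 \|M\|_{\fro}^2$, and since $\svec$ is an isometry we have $\|M\|_{\fro} = \|v\|_2$. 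Combining these bounds yields $v^\top \Xi_K v \geq \sigma_{\min}(\Gamma)^2 \|v\|_2^2$, so the symmetric part of $\Xi_K$ is positive definite and $\sigma_{\min}(\Xi_K) \geq \sigma_{\min}(\Gamma)^2 > 0$. The factorization further gives $\sigma_{\min}(\Gamma) \geq \min\{\sigma_{\min}(\Psi), \sigma^2\}/\|L^{-1}\|^2$, with $\|L^{-1}\| \leq 1 + \|K\|$, so this lower bound depends only on $\sigma_{\min}(\Psi)$, $\sigma$, and $\|K\|$.

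For task (ii), $c = \svec(\EE_{(x,u)}[y y^\top])$ is the vectorized second-moment matrix of $(x,u)$, whose norm I would control by $\tr(\Sigma_K)$, $\sigma^2$, and $\|K\|$; here $\|\Sigma_K\|$ is bounded through the Lyapunov equation \eqref{eq:cov_equ} and is finite precisely because $\rho(A - BK) < 1$, which is where the dependence on $\rho(A-BK)$ enters. Assembling the bounds from (i) and (ii) through the block-triangular estimate of the first paragraph then produces the explicit constant $\kappa_K^* > 0$ with the stated dependence (together with the norms of the fixed problem data $A, B$ and of $K$). I expect the main obstacle to be the quantitative step (i): the block-triangular reduction is immediate and the moment bound in (ii) is routine, but obtaining a clean, parameter-explicit lower bound on the symmetric part of the \emph{nonsymmetric} matrix $\Xi_K$—via the conditional-variance identity and the uniform control of $\sigma_{\min}(\Gamma)$ through the congruence factorization—is the crux of the argument.
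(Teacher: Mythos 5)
Your proposal is correct, but it reaches the conclusion by a genuinely different route from the paper's. The paper (in Lemma \ref{lemma:pe_mat} of the appendix) derives an \emph{exact closed form} for $\Xi_K$, namely $\Xi_K = (\tilde\Sigma_K \otimes_s \tilde\Sigma_K)(I - L^\top \otimes_s L^\top)$ with $L$ the closed-loop dynamics matrix on the joint state-action pair, using the fourth-moment identity for Gaussian quadratic forms together with the Lyapunov equation for $\tilde\Sigma_K$; invertibility then follows because $\rho(L)<1$ forces all eigenvalues of $L^\top\otimes_s L^\top$ inside the unit disk, and the singular-value bound comes from explicitly inverting both Kronecker factors (via the eigenvalue lemma for $\otimes_s$). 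You instead prove coercivity of the symmetric part directly: $v^\top\Xi_K v = \tfrac12\EE[(f-f')^2] \ge \tfrac12\EE[\mathrm{Var}(f'\,|\,x,u)] \ge \sigma_{\min}(\Gamma)^2\|v\|_2^2$, where $\Gamma$ is the one-step conditional noise covariance. All the individual steps check out (the stationarity identity $\EE[f^2]=\EE[(f')^2]$, the conditional-variance lower bound, the variance formula $2\tr(M\Gamma M\Gamma)+4\mu^\top M\Gamma M\mu \ge 2\sigma_{\min}(\Gamma)^2\|M\|_{\fro}^2$, the isometry of $\svec$, and the passage from a positive-definite symmetric part to $\sigma_{\min}(\Xi_K)\ge\sigma_{\min}(\Gamma)^2$), and the block-triangular reduction for the full matrix in \eqref{eq:large_le} is exactly what the paper does as well. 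Your argument is more elementary --- it avoids the symmetric Kronecker calculus entirely and exposes the structural reason for invertibility (strong monotonicity of the TD operator driven by the one-step injected noise), and your lower bound on $\sigma_{\min}(\Xi_K)$ notably does not degrade as $\rho(A-BK)\to 1$. What it gives up is the explicit formula for $\Xi_K$, which the paper reuses elsewhere (e.g.\ for the upper bound $\|\Xi_K\|\le 2[\sigma^2+(1+\|K\|_{\fro}^2)\|\Sigma_K\|]^2$ invoked in the proof of Theorem \ref{thm:pe}), though a comparable upper bound is easily recovered by your moment computations. Two cosmetic caveats: your $L$ clashes with the paper's use of $L$ for the joint dynamics matrix, and your final constant depends on $\|K\|$ in addition to $\rho(A-BK)$, $\sigma$, $\sigma_{\min}(\Psi)$ --- but you flag this honestly, and the paper's own bound carries the same hidden dependence through $\sigma_{\min}(\tilde\Sigma_K)$.
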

 \begin{proof}
 	See \S\ref{sec:proof_lemma_pe_mat1} for a detailed proof.
 	\end{proof}

By this lemma, when $\rho(A - BK ) <1$, policy evaluation for $\pi_K$ can be reduced to finding the unique solution to a  linear equation. Instead of solving the equation directly, it is equivalent to minimize the least-squares loss:
\#\label{eq:least_squares}
\minimize_{\vartheta} \Bigl \{  [ \vartheta^1 - J(K) ] ^2 +  \bigl \| \vartheta^1 \cdot \EE_{(x,u)} [ \phi(x,u) ] + \Xi_K \vartheta^2 -b_K   \bigr \|_2^2 \Bigr \},
\#
where $\vartheta^1 \in \RR$ and $\vartheta^2$ has the same shape as $\svec(\Theta_K)$, which are the two components of $\vartheta$. It is clear that  
 the  global minimizer of \eqref{eq:least_squares} is $\vartheta_K^*$.  Note that   we have  Fenchel's duality $ x^2 =  \sup_{y } \{ 2 x \cdot y -   \cdot y^2 \} $. 
By this relation, we further write \eqref{eq:least_squares} as a minimax optimization problem 
\#\label{eq:new_opt}
\min_{\vartheta \in \cX_{\Theta} }\max _{\omega\in \cX_{\Omega} }  F(\vartheta, \omega) & =   [ \vartheta^1 - J(K) ]  \cdot \omega ^1  \notag \\
&\qquad \qquad  + \bigl \la  \vartheta^1 \cdot \EE_{(x,u)} [ \phi(x,u) ] + \Xi_K \vartheta^2 -b_K , \omega^2 \bigr \ra   - 1/2 \cdot \| \omega\|_2^2   ,
\#
where the dual variable $\omega = (\omega^1, \omega^2)$ has the same shape as $\vartheta$. Here we restrict the primal and dual variables to compact sets  $\cX_{\Theta}$ and $\cX_{\Omega}$  for algorithmic stability, which will be specified in the next section.  Note that the objective in \eqref{eq:new_opt} can be estimated unbiasedly using two consecutive state-action pairs $(x,u)$ and $(x', u')$.  Solving the minimax optimization in \eqref{eq:new_opt} using stochastic gradient method, we obtain the gradient-based temporal difference (GTD) algorithm  for policy evaluation \citep{sutton2009fast, sutton2009convergent}. See Algorithm \ref{algo:gtd} for details. 
 More specifically, by direct computation, we have 
\#
 \nabla_{\vartheta^1} F(\theta, \omega) &=  \omega^1 + \bigl \la \EE_{(x,u)} (\phi  ), \omega^2 \bigr \ra, \qquad \nabla_{\vartheta^2} F(\theta, \omega)  =  \EE_{(x,u)}  \bigl [ (\phi - \phi'   )  \cdot \phi ^\top \omega^2 \bigr ], \label{eq:grad_theta} \\
 \nabla_{\omega^1} F(\theta, \omega) &=    \vartheta^1 - J(K) - \omega^1 , \qquad ~~~~~ \nabla_{\omega^2}F(\theta, \omega)  = \theta^1 \cdot \EE_{(x,u)} (\phi) + \Xi_K \vartheta^{2} - b_K - \omega^2, \label{eq:grad_omega}
\#
where we denote $\phi(x,u)$ and $\phi(x', u')$ by $\phi$ and $\phi'$, respectively. In the GTD algorithm, we update $\vartheta$ and $\omega$ in gradient directions where  the gradients in \eqref{eq:grad_theta} and \eqref{eq:grad_omega} are replaced by their sample estimates. After $T$ iterations of the algorithm,  we output the averaged update 
$
\hat \vartheta  ^2  =   (  \sum_{t=1}^{T} \alpha_t \cdot \vartheta_t^2  )  /  ( \sum_{t=1}^{T} \alpha _t )
$
and use  $\hat \Theta = \smat(\hat \vartheta^2)$ to estimate  $\Theta_K$  in \eqref{eq:ThetaK}, which  is further 
 used in Algorithm \ref{algo:ac} to update the current policy $\pi_K$.   Therefore, we obtain the  online natural actor-critic algorithm \citep{bhatnagar2009natural} for ergodic LQR.
 
Meanwhile, using the perspective of bilevel optimization, similar to \eqref{eq:upperbilevel} and \eqref{eq:lowerbilevel}, our actor-critic algorithm can be viewed as a first-order online algorithm for 
 \#\label{eq:nac_bilevel}
   \minimize_{K \in \RR^{k\times d} }   \EE_{x\sim \rho_{K} , u \sim \pi_{K}  }  \bigl [  \la \phi(x,u) , \vartheta^2  \ra  \bigr ],   \qquad  \text{subject to}\quad   ( \vartheta ,  \omega ) = \argmin_{\theta \in \cX_{\Theta}} \argmax_{  \omega\in \cX_{\Omega} }   F(\vartheta, \omega) ,  
   \#
 where $F(\vartheta, \omega) $ is  defined in \eqref{eq:new_opt} and depends on $\pi_K$.  
In our algorithm, we solve the upper-level problem via natural gradient descent and solve the lower-level saddle point optimization problem using stochastic gradient updates.
 
 Furthermore, we emphasize that our method defined by Algorithms \ref{algo:ac} and \ref{algo:gtd} is online in the sense that each update only requires a single transition. 
 More specifically, let $\{ (x_n, u_n, c_n) \}_{n \geq 0}$ be the sequence of transitions experienced by the agent.  Combining Algorithms \ref{algo:ac} and \ref{algo:gtd} and neglecting the projections, we can write the updating rule as 
 \#\label{eq:combined_update}
 \begin{split}
 K_{n+1}  = K_n - \overline \gamma_n \cdot & \bigl \{ [ \smat(\vartheta_n^2 ) ]^{22}   K_n -[ \smat(\vartheta_n ^2 ) ]^{22}  \bigr \},    \\
 \vartheta_{n+1}  =   \vartheta_n - \overbar  \alpha _n \cdot g_{\vartheta} ( x_n, u_n, c_n, x_{n+1} , u_{n+1})  , & \quad 
 \omega_{n+1}    = \omega_n +  \overbar  \alpha _n \cdot g_{\omega} ( x_n, u_n, c_n, x_{n+1} , u_{n+1})  ,
 \end{split} 
 \#
 where $g_{\vartheta}$ and $g_{\omega}$ are the upda{te directions of $\vartheta$ and $\omega$ whose definitions are clear from Algorithm \ref{algo:gtd},  $\{\overline \gamma_n\}$  and $\{ \overline \alpha_n \}$ are the stepsizes. Moreover, there exists a monotone increasing  sequence    $\{N_t \}_{t\geq 0}$ such that 
 $\overline \gamma_n = \gamma $ if $n = N_t$ for some $t$ and $\overline \gamma_n = 0$ otherwise. 
 Such a choice of the stepsizes reflects the intuition that, although  both the actor and the critic are updated simultaneously,   the critic should be updated in a faster pace. From the same viewpoint, classical actor-critic algorithms \citep{konda2000actor, bhatnagar2009natural, grondman2012survey} establish convergence results under the assumption that 
  \$
 \sum_{n \geq 0} \overline \gamma_n = \sum_{n \geq 0} \overline \alpha_n = \infty, \qquad \sum_{n \geq 0} (\overline \gamma_n ^2 + \overline \alpha_n^2 ) < \infty, \qquad \lim_{n\rightarrow \infty}  \overline \gamma_n / \overline \alpha_n = 0.
 \$
The condition that $\overline \gamma_n / \overline \alpha_n = 0$ ensures that the critic updates in a faster timescale, which enables the  asymptotic analysis  utilizing two-timescale stochastic approximation \citep{borkar1997stochastic,kushner2003stochastic}.   However, such an approach uses two ordinary differential   equations (ODE) to approximate the updates in \eqref{eq:combined_update} and thus only offers asymptotic convergence results. In contrast, as shown in \S\ref{sec:theory},  our choice of the stepsizes yields nonasymptotic  convergence results which shows that natural actor gradient  converges in linear rate to the global optimum.

 In addition, we note that in Algorithm \ref{algo:gtd} we assume that the initial state $x_0$ is sampled from the stationary distribution $\rho_K$. Such an  assumption is made only to simplify theoretical analysis. In practice, we could start the algorithm after sampling a sufficient number of transitions so that the Markov chain induced by $\pi_K$ approximately mixes. Moreover, as shown in \cite{tu2017least}, when $\pi_K$ is a stable policy such that $\rho(A- BK) < 1$, the Markov chain induced by $\pi_K$ is geometrically $\beta$-mixing and thus mixes rapidly.   
 
 Finally, we remark that the minimax  formulation of  the policy evaluation problem is first proposed in \cite{liu2015finite}, which studies the sample complexity of the GTD algorithm for discounted  MDPs with i.i.d. data.  
Using the same formulation,  \cite{wang2017finite} establishes finite sample bounds with   data  generated from a Markov process.  
Our optimization problem in \eqref{eq:new_opt} can be viewed as the extension of  their minimax formulation to the ergodic setting. Besides, our GTD algorithm can be applied to ergodic MDPs in general  with dependent data, which might be of independent interest.

\begin{algorithm} [ht]
	\caption{On-Policy Gradient-Based Temporal-Difference Algorithm for Policy Evaluation} 
	\label{algo:gtd} 
	\begin{algorithmic} 
		\STATE{{\textbf{Input:}} Policy $\pi_K$,    number of iterations $T$,  and  stepsizes   $\{ \alpha_t\}_{t\in [T] }$.}  
		\STATE{{\textbf{Output:} Estimator  $\hat \Theta $ of  $\Theta_K$  in \eqref{eq:ThetaK}.}}
	 	\STATE{Initialize the primal  and dual variables by $\vartheta_0 \in \cX_{\Theta} $ and $\omega_0\in \cX_{\Omega} $, respectively.}
	 	\STATE{Sample the  initial state $x_0\in \RR^d$ from the stationary distribution  $\rho_K$. Take action $u_0\sim \pi_K(\cdot \given x_0)$ and obtain the reward $c_0$ and the next state $x_1$.}
		\FOR{$t= 1 ,  2, \ldots,  T $}
		\STATE{Take action $u_{t}$ according to policy $\pi_K$, observe the reward $c_t$ and the next state $x_{t+1}$.}
		\STATE{Compute the TD-error $ \delta_t = \vartheta_{t-1}^1  - c_{t-1} +  [ \phi(x_{t-1}, u_{t-1}  ) - \phi(x_{t}, u_{t} )] ^\top \vartheta_{t-1} ^2$. }
		\STATE{Update   $\vartheta^1$ by $ \vartheta_{t} ^1 = \vartheta_{t-1} ^1 -  \alpha _t   \cdot [ \omega_{t-1} ^1 +   \phi(x_{t-1}, u_{t-1}) ^\top \omega_{t-1} ^2] .$} 
		\STATE{Update $\vartheta^2$  by   $ \vartheta_{t} ^2= \vartheta_{t-1}^2 -   \alpha _t   \cdot [  \phi(x_{t-1}, u_{t-1} )- \phi(x_t, u_t )]  \cdot \phi(x_{t-1}, u_{t-1}) ^\top \omega_{t-1}^2. $} 
		\STATE{Update  $\omega^1$  by $ \omega_{t}^1  = ( 1-\alpha_t ) \cdot  \omega_t ^1 + \alpha_t \cdot  ( \vartheta_{t-1}^1 - c_{t-1}) .$} 
		\STATE{Update $\omega^2$ by $ \omega_{t}^2 = ( 1 - \alpha_t ) \cdot \omega_t ^2 + \alpha_t \cdot \delta _t \cdot \phi(x_{t-1}) .$} 
		\STATE{Project $\vartheta_t $ and $\omega_t$ to $\cX_{\Theta}$ and $\cX_{\Omega}$, respectively.} 
		\ENDFOR
		\STATE{Define $\hat \vartheta =(\hat \vartheta^1, \hat \vartheta^2)  =   (  \sum_{t=1}^{T} \alpha_t \cdot \vartheta_t )/ (  \sum_{t=1}^{T} \alpha _t ) $ and $\hat \omega =  (  \sum_{t=1}^{T} \alpha_t \cdot \omega_t ) / (  \sum_{t=1}^{T} \alpha _t ) $}.
		 \STATE{Return $\hat \vartheta^1$ and $\hat \Theta = \smat(\hat \vartheta^2)$ as the estimators  of  $J(K)$ and  $\Theta_K$, respectively.}
	\end{algorithmic}
\end{algorithm}

\subsection{Extension to the Off-Policy Setting} \label{sec:offac}
Recall that in our natural actor-critic algorithm, the critic can apply any policy evaluation algorithm to estimate $\hat \Theta_K$. When using an off-policy method, we obtain an off-policy actor-critic algorithm.  In this section, we extend Algorithm \ref{algo:gtd}  to the off-policy setting using importance sampling.  Specifically, let $\pi_b$ be the behavior policy and suppose it induces a stationary distribution $\rho_b$ over the state space $\RR^d$. Moreover, let $\pi_K$ be the policy of interest  and let $\tau_K(x, u)  = \pi_K (u\given x) / \pi_b( u \given x) $ be the importance sampling ratio. Then, the  Bellman equation in \eqref{eq:new_bellmanQ} can be written  as 
\#\label{eq:offbellman}
\la \phi(x, u ), \theta_K ^*\ra =  c(x,u)  - J(K) + \bigl \la \EE [ \phi(x', u' )\cdot \tau_K(x', u') \given x, u], \theta_K ^* \bigr  \ra , \qquad \forall (x, u ) \in \RR^{d+k},
\#
where $x'$ is the next state given $(x,u)$, and $u' \sim \pi_b(\cdot \given x)$. In the following, we denote by $\EE_{(x,u)}$ the expectation with respect to $x\sim \rho_b$ and $u \sim \pi_b (\cdot \given x)$. Similar to $\Xi_K$ and $b_K$ defined in \eqref{eq:gtd_mat}, for the off-policy setting, we define 
\$
& \overline \Xi_K   = \EE_{(x, u)  }  \bigl \{  \phi (x, u )  \bigl [ \phi(x, u ) -  \tau_K(x', u') \cdot \phi (x', u')   \bigr ]^{\top } \bigr \}, \qquad  \overline  b_K = \EE_{(x ,u)  } \bigl [    c(x,u)    
\phi (x, u) \bigr ],   \notag \\
& \overline  h_K  = \EE _{(x,u)} [ \phi(x,u) - \tau_K(x', u') \cdot \phi (x', u')] , \qquad \overline g_K  =  \EE _{(x,u)} [ \phi(x,u) ] ,\qquad \overbar a_K = \EE_{(x,u)} [ c(x,u)].
\$
Based on \eqref{eq:offbellman} and direct computation, it can be shown that $\vartheta_K^* = (J (K), \svec(\Theta_K) ^\top) ^\top$ is the solution to linear equation 
\$ 
\begin{pmatrix}
1 & \overline h_K^\top 
\\
\overline  g_K  & \overline \Xi_K 
\end{pmatrix}
\begin{pmatrix}
\vartheta^1 \\
\vartheta ^2 
\end{pmatrix} = 
\begin{pmatrix}
\overline a_K \\
\overline b_K
\end{pmatrix}.
\$
Similar to the derivations in \S\ref{sec:ac}, we propose to estimate $\vartheta_K^*$ by solving a minimax optimization problem:
\#\label{eq:new_opt2}
\min_{\vartheta \in \cX_{\Theta} }\max _{\omega\in \cX_{\Omega} } \overline F(\vartheta, \omega) & =   [ \vartheta^1   + \overline h_K ^\top \vartheta^2 - \overline a_K  ]  \cdot \omega ^1    + \bigl \la  \vartheta^1 \cdot \overline g_K + \overline \Xi_K \vartheta^2 - \overline b_K , \omega^2 \bigr \ra   - 1/2 \cdot \| \omega\|_2^2  .
\#
Notice that  both $\overline F(\vartheta, \omega)$  and its gradient can be estimated unbiasedly using transitions sampled from  the behavior policy. Solving \eqref{eq:new_opt2} using stochastic gradient method, we obtain the off-policy GTD algorithm for the ergodic setting. Due to the similarity to Algorithm \ref{algo:gtd}, we defer the details of off-policy GTD  to Algorithm \ref{algo:off-policy} in the appendix. Combining this policy evaluation method with Algorithm \ref{algo:ac}, we establish the off-policy on-line natural actor-critic algorithm.


\section{Theoretical Results}\label{sec:theory}

In this section, we  establish the global convergence of the natural actor-critic algorithm. To this end, we first   focus on the problem of policy evaluation by assessing the finite sample  performance of the on-policy GTD algorithm.  

Note that only $\hat \Theta$ returned by the GTD algorithm is utilized in the natural actor-critic algorithm for policy update. Thus, in  the policy evaluation problem for  a linear policy   $\pi_K$, we only need to study the estimation error $\| \hat \Theta - \Theta_K\|_{\fro}^2$, which characterizes the closeness between the direction of  policy update in Algorithm \ref{algo:ac} and the true natural policy gradient.

 Furthermore, recall that we restrict the primal and dual variables respectively to compact sets $\cX_{\Theta}$ and $\cX_{\Omega}$ for algorithmic stability. We make the following assumption on $\cX_{\Theta}$ and $\cX_{\Omega}$.

\begin{assumption} \label{assume:pe}
Let $\pi_{K_0}$ be the initial policy in Algorithm \ref{algo:ac}. We assume that $\pi_{K_0}$ is a stable policy such that $\rho(A - BK_0) < 1$. Consider the policy evaluation problem for $\pi_K$. We assume that $J(K) \leq J(K_0)$. Moreover, let $\cX_{\Theta } $ and $ \cX_{\Omega}$ in \eqref{eq:new_opt}  be defined as 
 \#
\cX_{\Theta }  & = \bigl \{ \vartheta \colon  0 \leq \vartheta^1   \leq J(K_0),      \| \vartheta^2 \|_{2} \leq \tilde R_{\Theta}   \bigr \},  \label{eq:projectsets} \\ 
 \cX_{\Omega}  &    = \bigl \{ \omega  \colon | \omega^1 | \leq J(K_0),  \| \omega^2\|_{2 } \leq  ( 1+ \| K \|_{\fro}^2  ) ^2  \cdot  \tilde R_\Omega   \bigr \}.  \label{eq:projectsets2} \#
Here,  $\tilde R_\Theta$ and $\tilde R_{\Omega}$ are two parameters that does not depend on $K$. Specifically, we have 
 \#
 \tilde R_{\Theta} & =  \| Q \| _{\fro}+ \| R \|_{\fro} + \sqrt{ d }  / \sigma_{\min} (\Psi)  \cdot   ( \| A \|_{\fro} ^2 + \| B \|_{\fro}^2   )\cdot J(K_0 )  ,   \label{eq:radius_theta} \\
 \tilde R_\Omega & = C \cdot   \tilde R_{\Theta} \cdot \sigma_{\min}^{-2} (Q) \cdot [J(K_0)
]^2  ,\label{eq:radius_omega}
 \#
 where $C > 0$ is a constant.
\end{assumption}

The assumption that we have access to a stable policy $K_0$ for initialization is commonly made in  literatures on model-free methods for LQR  \citep{dean2018regret, fazel2018global, malik2018derivative}. Besides, $\rho(A - BK_0) < 1$ implies that $J(K_0)$ is finite. Here we assume $J(K) \leq J(K_0)$ for simplicity. 
Even if $J(K) > J(K_0)$, we can replace $J(K_0)$ in \eqref{eq:projectsets} -- \eqref{eq:radius_omega} by $J(K)$ and the  theory of policy evaluation still holds.
Moreover,  as  we will show in Theorem \ref{thm:ac}, the actor-critic algorithm creates a sequence policies whose objective values decreases monotonically. Thus, here we assume $J(K) \leq J(K_0)$ without loss of generality.

Furthermore, as shown in the proof, the construction of $\tilde R_{\Theta} $  and $\tilde R_{\Omega}$ ensures that $(\vartheta _K^* , 0)$ is the saddle-point of the minimax optimization in \eqref{eq:new_opt}. In other words, the solution to \eqref{eq:new_opt} is the same as the unconstrained problem $\min_{\vartheta} \max_{\omega} F(\vartheta, \omega)$. When replacing the population problem by a sample-based optimization problem, restrictions on the primal and dual variables ensures that the iterates of the GTD algorithm remains bounded. Thus, setting $\tilde R_{\Theta}$ and $\tilde R_{\Omega}$ essentially guarantees that restricting  $(\vartheta , \omega)$ to $\cX_{\Theta} \times \cX_{\Omega}$ incurs no ``bias'' in the optimization problem. 

We present the theoretical result for the online GTD algorithm as follows.
\begin{theorem} [Policy evaluation] \label{thm:pe}
Let $\hat \vartheta^1$ and $\hat \Theta$ be the output of Algorithm \ref{algo:gtd} based on $T$ iterations. We set the stepsize to be $\alpha_t = \alpha / \sqrt{t}$ with $\alpha > 0$ being a constant. 
Under Assumption \ref{assume:pe}, for any $\rho \in (\rho(A - BK), 1)$, when the number of iterations $T$ is sufficiently large, with probability at least    $1 - T^{-4} $, we have 
	\#\label{eq:stat_rate}
	  \| \hat \Theta - \Theta_K \|_{\fro}^2 \leq   \frac{  \Upsilon \bigl [   \tilde R_{\Theta} ,  \tilde R_{\Omega} , J(K_0), \| K \|_{\fro}, \sigma_{\min}^{-1}(Q)  \bigr ] }{   {\kappa_K^*}^2 \cdot  ( 1- \rho)  } \cdot  \frac{  \log ^6 T      }{ \sqrt{T}},
	\#
	where $  \Upsilon \bigl [   \tilde R_{\Theta} ,  \tilde R_{\Omega} , J(K_0), \| K \|_{\fro}, \sigma_{\min}^{-1}(Q)  \bigr ]$ is a polynomial of $  \tilde R_{\Omega}$, $  \tilde R_{\Omega}$, $J(K_0)$, $\| K\|_{\fro}$, and $ 1/  \sigma_{\min} (Q) $. 
\end{theorem}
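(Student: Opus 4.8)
The plan is to reduce the estimation error $\|\hat\Theta - \Theta_K\|_\fro^2$ to an optimization-accuracy guarantee for the saddle-point problem \eqref{eq:new_opt}, and then to control that accuracy through a stochastic primal-dual analysis that accounts for the Markovian data stream of Algorithm \ref{algo:gtd}. The key structural observation is that $F$ is linear in $\vartheta$ and exactly $1$-strongly concave in $\omega$, so the inner maximization is explicit. Writing the linear system \eqref{eq:large_le} as $M_K \vartheta = \zeta_K$, where $M_K$ is its left-hand-side matrix and $\zeta_K = (J(K), b_K^\top)^\top$, the coefficient of $\omega$ in $F(\hat\vartheta,\cdot)$ is precisely $M_K\hat\vartheta - \zeta_K$; since $\vartheta_K^*$ solves $M_K\vartheta = \zeta_K$ we obtain, whenever the unconstrained maximizer stays inside $\cX_\Omega$,
\[
\max_{\omega \in \cX_\Omega} F(\hat\vartheta, \omega) - F(\vartheta_K^*, 0) = \frac{1}{2} \bigl\| M_K \hat\vartheta - \zeta_K \bigr\|_2^2 .
\]
That the maximizer is interior is exactly what the choice of $\tilde R_\Omega$ in Assumption \ref{assume:pe} is engineered to guarantee. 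Lemma \ref{lemma:pe_mat1} lower-bounds the smallest singular value of $M_K$ by $\kappa_K^*$, so $\|M_K\hat\vartheta - \zeta_K\|_2 \ge \kappa_K^* \|\hat\vartheta - \vartheta_K^*\|_2$, and hence $\|\hat\Theta - \Theta_K\|_\fro^2 = \|\hat\vartheta^2 - \theta_K^*\|_2^2 \le 2 (\kappa_K^*)^{-2} \cdot [\,\max_\omega F(\hat\vartheta,\omega) - F(\vartheta_K^*,0)\,]$. This already accounts for the $(\kappa_K^*)^{-2}$ in \eqref{eq:stat_rate}; it remains to bound the primal gap on the right by $O(\log^6 T/\sqrt T)$.

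Since the saddle value is $F(\vartheta_K^*,0)=0$ and $F(\vartheta_K^*,\hat\omega) = -\tfrac{1}{2}\|\hat\omega\|_2^2 \le 0$, the primal gap is dominated by the standard duality gap $\max_\omega F(\hat\vartheta,\omega) - \min_\vartheta F(\vartheta,\hat\omega)$ of the averaged iterate $(\hat\vartheta,\hat\omega)$, which is the natural quantity produced by the primal-dual scheme. The second step is therefore to bound this duality gap by the monotone-operator (Nemirovski-type) regret analysis for convex-concave saddle points: for every comparator $z=(\vartheta,\omega)$ in the feasible region, the projected stochastic gradient descent-ascent iterates $z_t$ satisfy
\[
\sum_{t=1}^T \alpha_t \bigl\langle \hat g_t, z_t - z \bigr\rangle \le \frac{1}{2}\| z_1 - z\|_2^2 + \frac{1}{2}\sum_{t=1}^T \alpha_t^2 \| \hat g_t\|_2^2 ,
\]
and dividing by $\sum_t \alpha_t$ and maximizing over $z$ bounds the duality gap. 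With $\alpha_t = \alpha/\sqrt t$ one has $\sum_t \alpha_t \asymp \sqrt T$ and $\sum_t \alpha_t^2 \asymp \log T$, while the feasible diameters $\|z_1 - z\|_2$ are controlled by $\tilde R_\Theta$ and $(1+\|K\|_\fro^2)^2\tilde R_\Omega$ from Assumption \ref{assume:pe}; were the stochastic gradients $\hat g_t$ unbiased with bounded second moment, this would already deliver the $\tilde O(1/\sqrt T)$ rate.

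The heart of the argument, and the step I expect to be the main obstacle, is that the transitions feeding $\hat g_t$ come from a single trajectory of the Markov chain induced by $\pi_K$, so $\hat g_t$ is neither conditionally unbiased nor independent across $t$. I would write $\hat g_t = g(z_t) + \xi_t$ with $g$ the population gradient of \eqref{eq:grad_theta}--\eqref{eq:grad_omega}, and split $\sum_t \alpha_t \langle \xi_t, z_t - z\rangle$ into a bias part and a martingale part. Because Algorithm \ref{algo:ac} is initialized at a stable $K_0$ and the projections keep every iterate in $\cX_\Theta\times\cX_\Omega$, the chain is geometrically $\beta$-mixing, as noted after \eqref{eq:combined_update}; conditioning on the $\sigma$-field a window $\tau \asymp \log T/\log(1/\rho)$ into the past brings the current transition within $O(\rho^\tau)$ total variation of $\rho_K$, reducing the bias to a negligible $\mathrm{poly}(T^{-1})$. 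The residual martingale term is then handled by a blocked Freedman/Azuma concentration inequality. The quadratic feature map \eqref{eq:feature_map} evaluated at Gaussian states is sub-exponential with scale set by $\|\Sigma_K\|$ (hence by $J(K_0)$), so a union bound controls $\max_t\|\phi(x_t,u_t)\|_2$ and thus $\max_t\|\hat g_t\|_2$ up to polylogarithmic factors with probability at least $1-T^{-4}$. These compounding sources of logarithms—the mixing window, the maximal control of a feature that is quadratic in a Gaussian (two powers of $\log T$), and the martingale deviation—are what inflate the bound to $\log^6 T$.

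Assembling the two steps, the duality gap of $(\hat\vartheta,\hat\omega)$ is at most $\Upsilon\cdot\log^6 T/[(1-\rho)\sqrt T]$, where $\Upsilon$ is polynomial in $\tilde R_\Theta,\tilde R_\Omega,J(K_0),\|K\|_\fro$ and $\sigma_{\min}^{-1}(Q)$—the last two entering through the dual diameter in Assumption \ref{assume:pe} and the feature variance—and $(1-\rho)^{-1}$ bookkeeps the mixing time. Combining with the first-paragraph reduction gives \eqref{eq:stat_rate} for all sufficiently large $T$ (large enough that $\tau<T$ and the asymptotic step-size sums are valid). The genuinely delicate point is the dependent-data concentration: obtaining a high-probability deviation bound for the martingale term with the correct polynomial prefactor, for gradients built from unbounded sub-exponential features drawn from a single mixing trajectory, while simultaneously certifying that the iterates remain in the compact sets so that the mixing and interior-maximizer properties are available.
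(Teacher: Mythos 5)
Your proposal follows essentially the same route as the paper: the same reduction of $\|\hat\Theta-\Theta_K\|_{\fro}^2$ to the primal-dual gap via the explicit inner maximization and the singular-value bound $\kappa_K^*$ from Lemma \ref{lemma:pe_mat1} (including the role of $\tilde R_\Omega$ in keeping the maximizer interior), and the same treatment of the gap via a stochastic primal-dual regret bound for geometrically $\beta$-mixing data combined with a Hanson--Wright-type truncation of the unbounded quadratic features. The only cosmetic difference is that the paper imports the mixing-data saddle-point bound from \cite{wang2017finite} (Theorem \ref{thm:saddle_gap}) and formalizes the truncation by defining a modified objective $\tilde F$ that coincides with $F$ on the high-probability event, whereas you sketch re-deriving the regret-plus-mixing-window argument by hand; the delicate dependent-data concentration you flag is exactly what that citation resolves.
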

\begin{proof} 
See \S\ref{proof:thm_pe} for a detailed proof.
\end{proof} 

This theorem establishes the statistical rate of convergence for the on-policy GTD algorithm. 
Specifically, if we regard $\Upsilon   [   \tilde R_{\Theta} ,  \tilde R_{\Omega} , J(K_0), \| K \|_{\fro}, \sigma_{\min}^{-1}(Q)   ]$, $\rho$, and $  \kappa_K^*$ as constant, \eqref{eq:stat_rate} implies that the estimation error is of order $\log ^6 T /\sqrt{T}$. 
Thus, ignoring the logarithmic term, we conclude that the GTD algorithm converges in the sublinear rate $\cO(1/\sqrt{T} )$, which is optimal for convex-concave stochastic optimization  \citep{nemirovski2009robust} and is also identical to the rate of convergence of the GTD algorithm in the discounted setting with bounded data \citep{liu2015finite, wang2017finite}. 
Note that we focus on the ergodic case and the feature mapping $\phi(x,u) $ defined  in \eqref{eq:feature_map} is unbounded. We believe this theorem might be of independent interest. 
Furthermore, $ 1/ {\kappa_K^*}$  is approximately the condition number of the linear equation of \eqref{eq:large_le}, which reflects the fundamental   difficulty of estimating $\Theta_{K}$. Specifically, when $ {\kappa_K^*}$ is close to zero, the matrix on the left-hand side of  \eqref{eq:large_le} is close to a singular matrix. In this case, estimating $\Theta_K$ can be viewed as  solving an ill-conditioned regression problem and thus  huge sample size is required for consistent estimation. Finally,  $1/ [1- \rho(A- BK) ] $ also reflects the intrinsic hardness of estimating $\Theta_K$. Specifically,  for any $\rho \in (\rho(A - BK), 1) $,  the Markov chain induced by $\pi_K$ is  $\beta$-mixing where the $k$-th mixing coefficients is bounded by  $C \cdot \rho^k$ for some constant $C >0 $      \citep{tu2017least}. Thus, when $\rho$ is close to one, this Markov chain becomes more dependent, which  makes the estimation problem more difficult.

Equipped with the finite sample error of the policy evaluation algorithm, now we are ready to present the global convergence of the actor-critic algorithm.  For ease of presentation, we assume that $Q$, $R$, $A$, $B$, $ \Psi$ are all constant matrices. 

\begin{theorem} [Global convergence of actor-critic] \label{thm:ac}
Let the initial policy $K_0$ be stable.  We set the stepsize $\gamma = [ \| R \| + \sigma_{\min}^{-1} (\Psi) \cdot \| B \|^2 \cdot J(K_0)   ] $  in Algorithm \ref{algo:ac} and perform $N$  actor updates in the actor-critic algorithm.  Let $\{ K_t\}_{0 \leq t\leq N}$ be the sequence of policy parameters generated by the algorithm. For any sufficiently small $\epsilon > 0$, we set $N > C \cdot \| \Sigma_{K^* } \| / \gamma \cdot   \log \bigl \{ 2 [    J(K_0) - J(K^*) ]  / \epsilon \bigr \} $ for some constant. Moreover, for any $t \in \{0, 1, \ldots, N\}$, in the $t$-th iteration, we set the number $T_t$  of GTD updates in Algorithm \ref{algo:gtd} to be 
\$
T_t \geq \Upsilon  \bigl  [ \| K_t \|, J(K_0) \bigr ] \cdot  {\kappa_{K_t} ^*} ^{- 5}   \cdot [ 1- \rho( A - BK_t) ]^{-5/2} \cdot \epsilon^{-5} ,
\$
where $ \Upsilon   [ \| K_t \|, J(K_0) ] $ is a polynomial of $\|K_t \|$ and $J(K_0)$. Then with probability at least $1 - \epsilon^{10}$, we have $J(K_N) - J(K^*) \leq \epsilon$.
\end{theorem}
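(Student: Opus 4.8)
The plan is to view the actor update in Algorithm~\ref{algo:ac} as an \emph{inexact} natural policy gradient step and to control the error introduced by the critic using Theorem~\ref{thm:pe}. Recall from \S\ref{sec:ac} that the exact natural gradient direction is $E_K = \Theta_K^{22}K - \Theta_K^{21}$, whereas the algorithm moves along $\hat E_{K} = \hat\Theta^{22}K - \hat\Theta^{21}$, where $\hat\Theta$ is the GTD estimate of $\Theta_K$. The argument combines three ingredients: (i) a one-step contraction for the population natural gradient update $K \mapsto K - \gamma E_K$, borrowed from the LQR analysis of \cite{fazel2018global}; (ii) a perturbation bound translating the critic error $\|\hat\Theta - \Theta_K\|_{\fro}$ into an error in the update direction; and (iii) an induction showing that stability and the cost bound $J(K_t)\le J(K_0)$ --- hence Assumption~\ref{assume:pe} --- are preserved along the trajectory, so that Theorem~\ref{thm:pe} applies at every iteration.

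For ingredient (i), the gradient-domination (a Polyak--\L{}ojasiewicz-type) property of the LQR cost established in \cite{fazel2018global}, together with the smoothness of $J$ near a stable $K$, yields for the exact natural gradient step with the prescribed stepsize a contraction of the form
\$
J(K - \gamma E_K) - J(K^*) \le \Bigl( 1 - \gamma \cdot c_1 / \|\Sigma_{K^*}\| \Bigr) \bigl[ J(K) - J(K^*) \bigr],
\$
where $c_1 > 0$ depends only on $\sigma_{\min}(R)$; this is precisely what produces the factor $\|\Sigma_{K^*}\|/\gamma$ in the stated iteration count $N$. For ingredient (ii), since $\hat E_K - E_K = (\hat\Theta^{22}-\Theta_K^{22})K - (\hat\Theta^{21}-\Theta_K^{21})$, we have $\|\hat E_K - E_K\|_{\fro} \le (1+\|K\|)\,\|\hat\Theta - \Theta_K\|_{\fro}$, and Theorem~\ref{thm:pe} makes the right-hand side as small as desired by enlarging $T_t$. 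Feeding this deviation of the actual step from the exact one through the same local Lipschitz estimate for $J$ contributes an additive error, giving the per-iteration recursion
\$
J(K_{t+1}) - J(K^*) \le \Bigl( 1 - \gamma c_1 / \|\Sigma_{K^*}\| \Bigr) \bigl[ J(K_t) - J(K^*) \bigr] + \mathrm{err}_t,
\$
with $\mathrm{err}_t$ a polynomial in $\|K_t\|$ and $J(K_0)$ times $\|\hat\Theta - \Theta_{K_t}\|_{\fro}$.

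The induction (ingredient (iii)) is then carried out as follows. Assuming $K_t$ is stable with $J(K_t)\le J(K_0)$, Assumption~\ref{assume:pe} holds, so Theorem~\ref{thm:pe} gives a high-probability bound on $\|\hat\Theta - \Theta_{K_t}\|_{\fro}$; choosing $T_t$ as in the statement forces $\mathrm{err}_t$ below a fixed small multiple of $\gamma \epsilon / \|\Sigma_{K^*}\|$, so that the fixed point of the recursion is $O(\epsilon)$. This simultaneously (a) keeps $J(K_{t+1})\le J(K_0)$, since $\mathrm{err}_t$ is dominated by the contraction of the gap; (b) preserves stability $\rho(A-BK_{t+1})<1$, because the sublevel set $\{K \colon J(K)\le J(K_0)\}$ consists entirely of stable policies (finite ergodic cost is equivalent to $\rho(A-BK)<1$); and (c) propagates an effective geometric contraction, with the rate possibly halved to absorb $\mathrm{err}_t$. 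Iterating $N$ times with the stated $N$ drives $J(K_N) - J(K^*)$ below $\epsilon$, and a union bound over the $N$ per-iteration failure probabilities $T_t^{-4}$, after substituting the chosen $T_t$, yields the overall guarantee $1 - \epsilon^{10}$.

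The main obstacle is the coupling between stability and the critic error in ingredient (iii): the bound of Theorem~\ref{thm:pe} depends on the mixing quantity $1-\rho(A-BK_t)$ and on $\kappa_{K_t}^*$, both of which degrade as the stability margin shrinks, while controlling the stability of $K_{t+1}$ already requires the critic error to be small. Breaking this circularity requires confining the entire trajectory to the sublevel set $\{J \le J(K_0)\}$ and arguing that $\rho(A-BK)$, $\kappa_K^*$, and $\|\Sigma_K\|$ are uniformly controlled there, so that a single prescribed $T_t$ (depending only on $\|K_t\|$, $J(K_0)$, $\kappa_{K_t}^*$, the mixing rate, and $\epsilon$) suffices and the induction closes.
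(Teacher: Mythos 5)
Your proposal follows essentially the same route as the paper's proof: the paper likewise decomposes each iteration into an exact natural-gradient step $K_{t+1}'=K_t-\gamma E_{K_t}$ (contracting via the gradient-domination lemma adapted from \cite{fazel2018global}), bounds $\|\hat E_{K_t}-E_{K_t}\|_{\fro}\le(1+\|K_t\|)\|\hat\Theta_t-\Theta_{K_t}\|_{\fro}$ and converts this to $|J(K_{t+1})-J(K_{t+1}')|$ via the perturbation lemma for $P_K$, runs an induction keeping $J(K_t)\le J(K_0)$ so that Assumption~\ref{assume:pe} and Theorem~\ref{thm:pe} apply at every step, and finishes with the same choice of $T_t$ and union bound over the $T_t^{-4}$ failure events. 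The only cosmetic difference is that you make explicit the observation that the sublevel set $\{J\le J(K_0)\}$ consists of stable policies, which the paper uses implicitly in its induction.
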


\begin{proof}[Proof Sketch] 
The proof of this Theorem is based on combining the convergence of the natural policy gradient and the finite sample analysis of the GTD algorithm established in Theorem \ref{thm:pe}. Specifically, for each $K_t$, we define $K_{t+1}' = K_t - \eta \cdot E_{K_t}$, which is the one-step natural policy gradient update starting from $K_t$. Similar to \cite{fazel2018global}, for ergodic LQR, it can be shown that   \#\label{eq:linear_converge}
 J(K_{t+1}' ) - J(K^* ) \leq \bigl [ 1 -  C_1\cdot  \gamma \cdot \| \Sigma_{K^*} \|^{-1}  \bigr ] \cdot \bigl [ J(K_{t } ) - J(K^* )   \bigr ] \#
  for some constant $C_1 > 0 $.  In addition,  for policy $\pi_{K_t}$,  when the number of GTD iteration $T_t$ is sufficiently large, $K_{t+1}$ is close to $K_{t+1}'$, which further implies that  $ |  J(K_{t+1}' ) - J(K_{t+1}) | $ is  small. Thus, combining this and \eqref{eq:linear_converge}, we obtain the linear convergence of the actor-critic algorithm. See \S\ref{proof:thm_ac} for a detailed proof.
\end{proof}

This theorem shows that  natural actor-critic algorithm combined with GTD converges linearly to the optimal policy of LQR. Furthermore, the number of policy updates in this theorem matches those obtained by natural policy gradient  algorithm \citep{fazel2018global, malik2018derivative}.  
 To the best of our knowledge, this result seems to be the first nonasymptotic convergence result for actor-critic algorithms with function approximation, whose existing theory are mostly asymptotic and based on ODE approximation. 
 Furthermore, from the viewpoint of bilevel optimization, Theorem \eqref{thm:ac} offers theoretical guarantees for   the actor-critic algorithm as a first-order online method for the bilevel program defined in \eqref{eq:nac_bilevel}, which serves a first attempt of understanding bilevel optimization with possibly nonconvex subproblems.


\section{Proofs of the Main Results} \label{sec:proof_main}

In this section, we provide the proofs of the main results, namely, Theorems \ref{thm:pe} and \ref{thm:ac}, which are proved in \S\ref{proof:thm_pe} and \S\ref{proof:thm_ac}, respectively.
The proofs of the supporting results are deferred to the appendix.

\subsection{Proof of Theorem \ref{thm:pe}}\label{proof:thm_pe}
\begin{proof}
	  Our proof can be decomposed into three steps. In the first step, we show that, with $\cX_{\Theta}$ and $\cX_{\Omega}$ given in \eqref{eq:projectsets} and  \eqref{eq:projectsets2},   $(\vartheta , \omega) = (\vartheta_K^*, 0) $ is the solution to the minimax optimization problem in \eqref{eq:new_opt}. Then, in the second step, we show that the primal-dual gap of this optimization problem yields an upper bound for the  estimation error $\| \hat \Theta -   \Theta_K \|_{\fro}^2 $, where $\hat \Theta = \smat(\hat \vartheta^2)$ is the estimator of $\Theta_K$ returned by the GTD algorithm. Finally, in the last step, we study the performance of such a minimax optimization problem, which enables us to establish the error of policy evaluation.
	  
	  \vspace{5pt}
   {\noindent \textbf{Step 1.}} In the first step, we show that $  (\vartheta , \omega) = (\vartheta_K^*, 0) $ is the saddle point of the optimization problem in \eqref{eq:new_opt}.  
   To simplify the notation, we define a vector-valued function $G(x, u, x', u'; \vartheta)$ by  	\#\label{eq:define_Gfun}
	\begin{split}
    G^1 ( x, u, x', u';   \vartheta)  	& =   \vartheta^1 -  c(x,u),     \\
    G^2 ( x, u, x', u';   \vartheta)  & =  \vartheta^1 \cdot \phi(x,u) + \bigl\{  \bigl  [     \phi(x, u )-  \phi (x', u')   \bigr  ] ^\top \vartheta^2  - c(x,u)\bigr \} \cdot  \phi(x, u)  . 
    \end{split}
  \#
  By definition, $G(x, u, x', u'; \vartheta)$ is of the same shape as $\vartheta $ and $\omega$. Moreover, for all $(\vartheta , \omega)  $, $F(\vartheta , \omega) $ in \eqref{eq:new_opt} can be equivalently written as  
  \#\label{eq:new_F}
  F(\vartheta, \omega) = \bigl \la  \EE_{(x,u, x', u')} [   G(x, u, x', u' ; \vartheta) ] , \omega  \bigr \ra  - 1/2 \cdot \| \omega \|_2^2.
  \#
Thus, for any $\vartheta  $, the solution to the unconstrained maximization problem $\max_{\omega} F(\theta , \omega)$ is 
\#\label{eq:expected_G}
w(\vartheta) = \EE_{(x,u, x', u')} [   G(x, u, x', u' ; \vartheta) ]. 
\#   
   In the following, we  show that  $\vartheta_K^* \in \cX_{\Theta}$.  Moreover, we also prove  that, for any $\vartheta \in \cX_{\Theta}$, $w(\vartheta) $ in  \eqref{eq:expected_G}  belongs to $\cX_{\Omega}$, where $\cX_{\Theta}$ and $\cX_{\Omega}$ are defined in     \eqref{eq:projectsets} and  \eqref{eq:projectsets2}, respectively.   Since $w(\vartheta_K^* )  = 0$,  it holds that $(\vartheta_K^*, 0)$ is the solution to the minimax optimization problem in \eqref{eq:new_opt}.

 Recall  that we assume $J(K) \leq J(K_0)$, where $K_0$ is the initial policy that is stable. Thus,  $J(K_0)$ is finite.
 By the definition of $\vartheta_K^*$, to show  $\vartheta_K^* \in \cX_{\Theta}$, it suffices to bound  $    \| \Theta_K \|_{\fro} $.
By the definition of $\Theta_K$ in \eqref{eq:ThetaK}, we have 
		\$ 
	\Theta_K = 
	\begin{pmatrix}
		Q + A^\top P_K A  & A ^\top P_K B  \\
		B^\top P_K A   & R + B^\top P_K B 
	\end{pmatrix} 
	= \begin{pmatrix} 
	Q &   \\
	& R \end{pmatrix} + \begin{pmatrix}
	A^\top \\
	B ^\top \end{pmatrix}
	P_K 
	\begin{pmatrix}
	A  & B
	 \end{pmatrix} ,
	\$
	which implies that 
	\#\label{eq:bound_tknorm}
	\| \Theta_K \|_{\fro} \leq   (  \| Q \|_{\fro}  + \| R \|_{\fro} ) + ( \| A \|_{\fro} ^2 + \| B \|_{\fro}^2 ) \cdot \| P_K \| _{\fro}.
	\#
	Now we apply the following lemma to obtain an upper bound on $\| P_K \|_{\fro}$.
	
	\begin{lemma}\label{lemma:bound_mats}
	When $\pi_K$ is a stable policy, we have 
	\$
	\| \Sigma_K \| \leq J(K) / \sigma_{\min}(Q), \qquad \| P_K \| \leq J(K) / \sigma_{\min}(\Psi) ,
	\$
	where   $\sigma_{\min}(\cdot )$ denotes the minimal eigenvalue of a matrix.
\end{lemma}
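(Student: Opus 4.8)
The plan is to read both bounds directly off the two equivalent expressions for the time-average cost $J(K)$ established in \eqref{eq:cost_K2}, namely
\[
J(K) = \tr\bigl[(Q + K^\top R K)\Sigma_K\bigr] + \sigma^2 \cdot \tr(R) = \tr(P_K \Psi_\sigma) + \sigma^2 \cdot \tr(R).
\]
The key observation is that, because $Q, R, \Psi \succ 0$ and both $\Sigma_K$ and $P_K$ are positive definite (as guaranteed by \eqref{eq:cov_equ} and \eqref{eq:bellman} under the stability assumption $\rho(A-BK)<1$), each of these expressions is a sum of nonnegative traces. Hence $J(K)$ dominates a single weighted trace of the matrix we wish to bound, and it remains only to convert that weighted trace into an operator-norm bound.

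For $\|\Sigma_K\|$, I would discard the nonnegative terms $\tr(K^\top R K \Sigma_K) \geq 0$ and $\sigma^2 \cdot \tr(R) \geq 0$ from the first expression to obtain $J(K) \geq \tr(Q\Sigma_K)$. Writing $\tr(Q\Sigma_K) = \tr(\Sigma_K^{1/2} Q \Sigma_K^{1/2})$ and using $Q \succeq \sigma_{\min}(Q)\cdot I$ gives $\tr(Q\Sigma_K) \geq \sigma_{\min}(Q)\cdot \tr(\Sigma_K)$. Since $\Sigma_K$ is positive semidefinite, its operator norm (largest eigenvalue) is at most its trace, so $\tr(\Sigma_K) \geq \|\Sigma_K\|$, and chaining the three inequalities yields $\|\Sigma_K\| \leq J(K)/\sigma_{\min}(Q)$.

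The bound on $\|P_K\|$ is entirely analogous using the second expression for $J(K)$. Dropping $\sigma^2\cdot\tr(R) \geq 0$ gives $J(K) \geq \tr(P_K \Psi_\sigma)$. Because $\Psi_\sigma = \Psi + \sigma^2 \cdot BB^\top \succeq \Psi \succeq \sigma_{\min}(\Psi)\cdot I$ and $P_K \succeq 0$, I have $\tr(P_K\Psi_\sigma) \geq \sigma_{\min}(\Psi)\cdot \tr(P_K) \geq \sigma_{\min}(\Psi)\cdot \|P_K\|$, which rearranges to the claimed bound.

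I do not anticipate a genuine obstacle here, as the lemma is essentially a bookkeeping consequence of the cost identities; the only facts used beyond \eqref{eq:cost_K2} are the positive definiteness of $Q, R, \Psi$ (part of the LQR model in \eqref{eq:lqr_model}) together with the elementary inequalities $\tr(MN) \geq \sigma_{\min}(M)\cdot\tr(N)$ for $M, N \succeq 0$ and $\|N\| \leq \tr(N)$ for $N \succeq 0$. The one point that deserves a sentence of care is justifying that $\Sigma_K$ and $P_K$ are positive semidefinite so that these trace inequalities apply, which follows from stability via the Lyapunov and Bellman equations.
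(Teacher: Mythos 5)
Your proof is correct and follows essentially the same route as the paper: both read the bounds off the two expressions for $J(K)$ in \eqref{eq:cost_K2}, drop the nonnegative terms, and chain $\tr(MN)\geq \sigma_{\min}(M)\cdot\tr(N)\geq\sigma_{\min}(M)\cdot\|N\|$ together with $\Psi_\sigma\succeq\Psi$. Your write-up is in fact slightly cleaner than the paper's (whose displayed chain for $\|P_K\|$ contains a typographical slip), but there is no substantive difference.
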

\begin{proof}
By \eqref{eq:cost_K2} in Proposition \ref{prop:pg}, we have 
\$
J(K) & \geq \tr [ (Q+ K^\top R K ) \Sigma_K ] \geq \sigma_{\min}(Q) \cdot \tr ( \Sigma_K)  \geq \sigma_{\min}(Q) \cdot  \| \Sigma_K \| ,\\
J(K) & \geq \tr ( P_K \Psi_{\sigma} ) \geq \sigma_{\min}(\Psi_{\sigma} ) \cdot \tr( P_K) \geq \| P_K \| \geq J(K) / \sigma_{\min} (\Psi) ,
\$ 
where we use the fact that $\Psi_{\sigma} \succeq \Psi$. Therefore, we conclude the proof. 
\end{proof}

Applying Lemma \ref{lemma:bound_mats}
to \eqref{eq:bound_tknorm}, we have 
\#\label{eq:bound_tknorm2}
\| \Theta_K \|_{\fro} \leq (  \| Q \|_{\fro}  + \| R \|_{\fro} ) + ( \| A \|_{\fro} ^2 + \| B \|_{\fro}^2 ) \cdot \sqrt{d} \cdot J(K) / \sigma_{\min} (\Psi) .  
\#
Combining  \eqref{eq:bound_tknorm2} and the definition of $\tilde R_{\Theta}$  in \eqref{eq:radius_theta} we conclude that 
$\vartheta_K^* \in \cX_{\Theta} $. 
 
  Furthermore, it remains to show that the vector in \eqref{eq:expected_G} belongs to $\cX_{\Omega}$ for all $\vartheta \in \cX_{\Theta}$. We consider the two components of $G(x, u, x', u'; \vartheta) $ separately. By \eqref{eq:define_Gfun}, we have 
  \#\label{eq:omega_1bound}
 \bigl | \EE_{(x,u,x',u')} [ G^1 (x, u, x', u'; \vartheta)  ]  \bigr | =|  \vartheta^1 - J(K)  | \leq   J(K_0),
  \#
  where the second inequality follows from the fact that $0\leq \vartheta^1 \leq J(K_0)$.
 Moreover, by \eqref{eq:define_Gfun}, for the second component of  $G(x, u, x', u'; \vartheta) $, we have 
 \#\label{eq:omega_2bound}
 \EE_{(x,u,x',u')} [ G^2 (x, u, x', u'; \vartheta)  ]  =   \vartheta^1 \cdot \EE_{(x,u)}  [ \phi(x,u) ]  + \Xi_K \vartheta^2 - b_K, 
  \#
  where $\Xi_K$ and $b_K$ are defined in \eqref{eq:gtd_mat}. 
  By Lemma \ref{lemma:pe_mat}, we have 
  \#\label{eq:omega_2bound2}
  \| \Xi_K \vartheta^2 \| _2 \leq \| \Xi_K \| \cdot \|  \vartheta^2  \|_2 \leq  4 ( 1+ \| K \|_{\fro}^2  ) ^2  \cdot  \| \Sigma_K \|^2  \cdot \tilde R_{\Theta}.
  \#
  Moreover, for any positive definite matrix $\Gamma$, we have 
  \#\label{eq:omega_2bound3}
  b_K^\top \svec(\Gamma) = \EE _{(x, u)} \bigl \{  \bigl \la \phi(x,u) , \smat[ \diag(Q, R) ] \bigr  \ra  \cdot    \bigl \la \phi(x,u) , \smat(\Gamma) \bigr  \ra  \bigr  \},
  \#
  where $\diag(Q, R)$ is the block diagonal matrix constructed by $Q$ and $R$.
Note that the joint distribution of $(x,u)$ is the Gaussian distribution $N(0, \tilde \Sigma_K) $, where $\tilde \Sigma_K $ is defined in \eqref{eq:joint_cov}. Thus, $b_K^\top \svec(\Gamma) $ can be written as the product of two quadratic forms of Gaussian random variables.  Applying Lemma \ref{lem:quadform} to \eqref{eq:omega_2bound3}, we obtain that 
\$
b_K^\top \svec(\Gamma) = 2  \bigl \la \tilde \Sigma _K  \diag(Q, R) \tilde \Sigma _K , \Gamma   \bigr \ra \cdot     +   \bigl \la \tilde \Sigma _K ,  \diag(Q, R)  \bigr \ra   \cdot \bigl \la \tilde \Sigma _K ,  \Gamma \bigr \ra ,
\$
which implies that 
\#\label{eq:omega_2bound4}
\| b_K \|_2 \leq  3 ( \|Q \|_{\fro} + \| R \|_{\fro} )\cdot    \|\tilde \Sigma _K\|  ^2 .
 \#
In addition, the first term on the right-hand side of \eqref{eq:omega_2bound} is bounded by
\#\label{eq:omega_2bound5}
\bigl \| \vartheta^1 \cdot \EE_{(x,u)}  [ \phi(x,u) ] \bigr \|_2 \leq J(K_0) \cdot \bigl  \| \tilde \Sigma_K \bigr \|_{\fro} .
\#
Finally, combining \eqref{eq:omega_2bound2},  \eqref{eq:omega_2bound4}, \eqref{eq:omega_2bound5}, and the upper bounds  in \eqref{eq:bound_cov_norm}, 
we have 
\#\label{eq:omega_2bound6}
& \bigl \|  \EE_{(x,u,x',u')} [ G^2 (x, u, x', u'; \vartheta)  ]  \bigr \|_2  \notag \\
&\qquad \leq 2  ( d + \| K \|_{\fro}^2 ) \cdot \| \Sigma_K \| +  4 ( 1+ \| K \|_{\fro}^2  ) ^2  \cdot  \| \Sigma_K \|^2  \cdot \tilde R_{\Theta}  \notag \\
&\qquad \qquad \qquad + 12 ( \|Q \|_{\fro} + \| R \|_{\fro} )\cdot  ( d + \| K \|_{\fro}^2 ) ^2    \cdot \| \Sigma_K \|  ^2  \notag \\
& \qquad \leq C \cdot  ( 1+ \| K \|_{\fro}^2  ) ^2 \cdot \tilde R_{\Theta} \cdot \sigma_{\min}^{-2} (Q) \cdot [J(K_0)
]^2, 
\#
where $C>0$ is an absolute constant. 

Hence, combining \eqref{eq:radius_omega}, \eqref{eq:omega_1bound} and \eqref{eq:omega_2bound6}, we conclude that $ w(\vartheta)   \in \cX_{\Omega}$ for all $\vartheta \in \cX_{\Theta}$.  Therefore, we have shown that $(\vartheta_K^* , 0)$ is the saddle point of the optimization problem in \eqref{eq:new_opt}, which concludes the first step of the proof.

  \vspace{5pt}
   {\noindent \textbf{Step 2.}}  In the following, we relate the estimation error $\| \hat \Theta - \Theta_K \|_{\fro}^2$ to the performance  of  the optimization in \eqref{eq:new_opt}. Specifically, we consider the primal-dual gap
\#\label{eq:def_gap}
\texttt{Gap} (\hat \vartheta , \hat \omega) = \max_{\omega \in \cX_{\Omega}}  F( \hat \vartheta, \omega  )  - \min_{\vartheta \in \cX_{\Theta}} F (\vartheta, \hat \omega)  ,
\#
which characterizes the closeness between $( \hat \vartheta, \hat \phi)$ and the optimal solution$(\vartheta_K^* , 0)$, quantified by the objective value. 

Recall that $w(\vartheta) $ defined in \eqref{eq:expected_G} is the optimal dual variable for each $\theta \in \cX_{\Theta}$. Hence, for any $\omega \in \cX_{\Omega}$, it holds that 
 \#\label{eq:trash42}
    \min _{\vartheta \in \cX_{\Theta}} F (\vartheta, \omega ) & \leq   \min _{\theta \in \cX_{\Theta}}  \max_{\omega \in\cX_{\Omega} } F (\theta, \omega )  \notag \\
   &  \leq \min_{\vartheta \in \cX_{\Omega} } \bigl \{ [   \vartheta^1 - J(K)  ]^2 +   \|  \vartheta^1 \cdot \EE_{(x,u)}  [ \phi(x,u) ]  + \Xi_K \vartheta^2 - b_K \|_2^2 \bigr \} = 0 .
    \#
Thus, for $\hat \vartheta$ returned by the GTD algorithm, we have 
 \#\label{eq:error_gtd}
 & \bigl \{ [  \hat  \vartheta^1 - J(K)  ]^2 +   \|  \hat \vartheta^1 \cdot \EE_{(x,u)}  [ \phi(x,u) ]  + \Xi_K \hat  \vartheta^2 - b_K \|_2^2 \bigr \} =  \max _{\omega \in \cX_{\Omega} } F (\hat \vartheta, \omega )   \notag \\
 & \qquad =   \max _{\omega \in \cX_{\Omega} }F  (\hat \vartheta, \omega   )   -      \min _{\vartheta \in \cX_{\Theta}} F  (\vartheta, \hat \omega  )   +  \min _{\vartheta \in \cX_{\Theta}} F (\vartheta,  \hat \omega ) \leq \texttt{Gap} (\hat \vartheta , \hat \omega)  ,
   \#
   where the last inequality follows from \eqref{eq:trash42}.

Furthermore, by direct computation, we can bound the left-hand side of \eqref{eq:error_gtd} via 
\#\label{eq:lower_bound}
& \biggl \|  
 \begin{pmatrix} 
 1 & 0\\
\EE_{(x,u)} [ \phi(x,u)]  & \Xi_K 
 \end{pmatrix} (\hat \vartheta  - \vartheta_K^* )    \biggr \|_2 ^2 \notag \\
 &\qquad \geq {\kappa_K^*}^2  \cdot \| \hat \vartheta - \vartheta_K^* \|_2 ^2=   {\kappa_K^*}^  \cdot \bigl [ \| \hat \Theta - \Theta_K \|_{\fro}^2  + | \hat \vartheta^1 - J(K) |^2 \bigr ],
\#
where we utilize the fact that $\vartheta_K^*$ is the solution to the linear equation in \eqref{eq:large_le} and $\kappa_K^*$ is specified in Lemma \ref{lemma:pe_mat1}.
Therefore, combining \eqref{eq:error_gtd} and \eqref{eq:lower_bound}, we have 
\#\label{eq:step2}
 | \hat \vartheta^1 - J(K) |^2 + \| \hat \Theta - \Theta_K \|_{\fro}^2 \leq  {\kappa_K^*}^{-2} \cdot \texttt{Gap} (\hat \vartheta , \hat \omega),
\#
which establishes the connection between $ \| \hat \Theta - \Theta_K \|_{\fro}^2 $ and the primal-dual gap in \eqref{eq:def_gap}.

\vspace{5pt}
   {\noindent \textbf{Step 3.}}  In the last step, we construct an upper bound for the primal-dual gap. By \eqref{eq:step2}, this yields an upper bound for the error of parameter estimation. 
   
 Note that the distribution of the state-action  pair $(x, u)$ have unbounded support. We first construct an event such that $\{ \phi(x_t, u_t)\}_{t= 0}^T$ are  bounded conditioning on this event. To this end, we 
	establish an upper bound for tail probability of the  $\| \phi(x,u) \|_2 $ using  the   Hansen-Wright inequality stated as follows.
 
	\begin{lemma}[Hansen-Wright inequality] \label{lemma:hwtail} 
		For any integer $m>0$, let $A  $ be a matrix in $\RR^{m\times m}$ and let $\eta \sim N(0, I_m)$ be the standard Gaussian random variable in $\RR^m$. Then, there exists an absolute constant $C> 0$ such that, for any $t\geq 0$, we have 
		\$
		\PP\bigl [ \bigl | \eta ^\top A \eta - \EE(\eta ^\top A \eta) \bigr | > t  \bigr ] \leq 2\cdot  \exp \bigl [ -C \cdot \min ( t^2 \cdot \| A \|_{\fro}^{-2} , ~ t \cdot \| A \|^{-1}) \bigr ] 
		\$ 
	\end{lemma}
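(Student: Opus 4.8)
The plan is to reduce the quadratic form to a weighted sum of independent centered chi-squared variables and then apply a Chernoff bound, exploiting that $\eta$ is exactly Gaussian. Since $\eta^\top A \eta = \eta^\top [(A+A^\top)/2]\, \eta$, I would assume without loss of generality that $A$ is symmetric and diagonalize it as $A = U \Lambda U^\top$ with $U$ orthogonal and $\Lambda = \diag(\lambda_1,\dots,\lambda_m)$. By rotation invariance of the standard Gaussian, $U^\top \eta \sim N(0,I_m)$, so there is a distributional identity $\eta^\top A \eta \stackrel{d}{=} \sum_{i=1}^m \lambda_i \xi_i^2$ with $\xi_1,\dots,\xi_m$ i.i.d.\ $N(0,1)$. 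Writing $Z = \eta^\top A \eta - \EE[\eta^\top A \eta] = \sum_{i=1}^m \lambda_i(\xi_i^2 - 1)$, the problem becomes a concentration bound for a linear combination of centered chi-squared random variables, where crucially $\sum_i \lambda_i^2 = \|A\|_{\fro}^2$ and $\max_i |\lambda_i| = \|A\|$.

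Next I would control the moment generating function of a single centered chi-squared. A direct computation gives $\EE[\exp(s(\xi^2-1))] = \exp(-s)(1-2s)^{-1/2}$ for $s < 1/2$, and bounding the second derivative of its logarithm on a neighborhood of the origin shows that there is an absolute constant $c_0 > 0$ with $\EE[\exp(s(\xi^2-1))] \leq \exp(c_0 s^2)$ for all $|s| \leq 1/4$. By independence, for any $s$ satisfying $|s| \leq 1/(4\|A\|)$ (so that $|s\lambda_i| \leq 1/4$ for every $i$) the MGF of $Z$ factorizes and is bounded by
\[
\EE\bigl[\exp(sZ)\bigr] = \prod_{i=1}^m \EE\bigl[\exp(s\lambda_i(\xi_i^2-1))\bigr] \leq \prod_{i=1}^m \exp(c_0 s^2 \lambda_i^2) = \exp\bigl(c_0 s^2 \|A\|_{\fro}^2\bigr).
\]

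Finally I would apply the Chernoff bound $\PP[Z > t] \leq \exp(-st + c_0 s^2 \|A\|_{\fro}^2)$, valid for $0 \leq s \leq 1/(4\|A\|)$, and optimize over $s$ in two regimes dictated by the MGF's validity window. The unconstrained minimizer is $s^\star = t/(2c_0\|A\|_{\fro}^2)$, which yields the sub-Gaussian exponent $-t^2/(4c_0\|A\|_{\fro}^2)$ and respects the constraint precisely when $t \leq c_0\|A\|_{\fro}^2/(2\|A\|)$. In the complementary regime I would set $s = 1/(4\|A\|)$ at the boundary and use $\|A\|_{\fro}^2/\|A\| \lesssim t$ to absorb the quadratic term into the linear one, producing a sub-exponential exponent of order $-t/\|A\|$. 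The two cases combine into the single bound $\exp[-C\min(t^2\|A\|_{\fro}^{-2},\, t\|A\|^{-1})]$ for an absolute constant $C>0$, and applying the identical argument to $-Z$ (equivalently, replacing $A$ by $-A$) controls the lower tail; a union bound over the two tails supplies the factor of $2$ in the claimed inequality.

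The main obstacle is the two-regime optimization in the Chernoff step: because the MGF estimate holds only on $|s| \leq 1/(4\|A\|)$, the optimal exponent interpolates between a Gaussian tail for small $t$ and an exponential tail for large $t$, and one must verify that the boundary choice $s = 1/(4\|A\|)$ genuinely delivers an exponent proportional to $-t/\|A\|$ in the large-$t$ regime. Everything else—the diagonalization via rotation invariance, the explicit chi-squared MGF, and the final union bound—is routine.
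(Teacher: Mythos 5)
Your proof is correct, but it takes a different route from the paper: the paper does not prove this lemma at all, it simply cites \cite{rudelson2013hanson} for the general Hanson--Wright inequality. Your argument is the classical direct proof specialized to the Gaussian case: symmetrize $A$, diagonalize, use rotation invariance to reduce to $\sum_i \lambda_i(\xi_i^2-1)$, bound the chi-squared MGF by $\exp(c_0 s^2)$ on $|s|\leq 1/4$ (indeed $-s-\tfrac12\log(1-2s)\leq 4s^2$ there, so $c_0=4$ works), and run a two-regime Chernoff optimization. All steps check out, including the boundary case: at $s=1/(4\|A\|)$ with $t> c_0\|A\|_{\fro}^2/(2\|A\|)$ the quadratic term is at most $t/(8\|A\|)$, leaving an exponent of $-t/(8\|A\|)$, so $C=\min\{1/(4c_0),1/8\}$ suffices, and the symmetrization is harmless since $\|(A+A^\top)/2\|_{\fro}\leq\|A\|_{\fro}$ and $\|(A+A^\top)/2\|\leq\|A\|$. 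What each approach buys: your proof is elementary and self-contained but relies essentially on rotation invariance, so it only covers the exactly Gaussian setting (which is all the paper needs, since $\eta\sim N(0,I_m)$ and the state-action pairs are Gaussian); the cited Rudelson--Vershynin result handles general independent sub-Gaussian coordinates via a decoupling argument, at the cost of being a black box here.
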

	\begin{proof}
		See \cite{rudelson2013hanson} for a detailed proof.
	\end{proof}

 Applying  Lemma \ref{lemma:hwtail}  to $(x, u) \sim N(0, \tilde \Sigma_K )$ with  $\tilde \Sigma_K $ defined in \eqref{eq:joint_cov}, we obtain
	\#\label{eq:applyhansen}
		\PP\bigl [   \bigl | \| x \|_2^2 + \| u\|_2^2 - \tr \bigl(\tilde \Sigma_K \bigr ) \bigr |  > t \bigr ]  \leq 2 \cdot  \exp \bigl [ -C \cdot \min \bigl  ( t^2 \cdot \bigl \| \tilde \Sigma_K  \bigr \|_{\fro}^{-2} , ~ t \cdot \bigl \| \tilde \Sigma_K  \bigr \|^{-1} \bigr ) \bigr ] .
	\#
	Setting $t = C_1 \cdot \log T \cdot \bigl \| \tilde \Sigma _K \|$ in \eqref{eq:applyhansen} with constant $C_1$ sufficiently large, it holds that  
	\#\label{eq:tail_t}
 t^2 \cdot \bigl \| \tilde \Sigma_K\bigr \|_{\fro}^{-2} =  \bigl \| \tilde \Sigma_K  \bigr \|_{\fro}^{-2}  \cdot C_1^2 \cdot \log^2  T \cdot \|\tilde \Sigma_K  \bigr \|^2    \geq C_1^2 \cdot (d+k)^{-1}\cdot \log ^2 T \geq   t \cdot \bigl \| \tilde \Sigma_K  \bigr \|^{-1},
	\#
	where the first inequality follows from the relation between the operator and Frobenius  norms, and the second inequality holds when $\log T \geq C_1 ^{-1}\cdot (d+k) $. 
	For ease of presentation, for any $t \in \{0, 1, \ldots, T \}$, we define 
	\#\label{eq:eventt}
	\cE_t = \Bigl \{   \bigl | \| x_t\|_2^2 + \| u_t\|_2^2 - \tr \bigl ( \tilde \Sigma_K \bigr ) \bigr | \leq C_1 \cdot \log T\cdot \bigl  \| \tilde \Sigma_K  \bigr \|    \Bigr \},
	\#
	and write $\cE   = \bigcap _{0\leq t\leq T} \cE_t$. 
	Combining \eqref{eq:applyhansen} and \eqref{eq:tail_t}, we obtain that $\cE_t $ holds with probability at least $1 - T^{-6}$.
	Thus, by 
   taking a union bound for 
 $\{ (x_t, u_t)\}_{t= 0}^T$, we have $\PP(\cE) \geq 1 -  2T^{-5}$.
Moreover,  combining  	\eqref{eq:eventt}  and \eqref{eq:bound_cov_norm} further implies that, on event $\cE$, we have 
	\#\label{eq:some_trash00}
	& 	\max_{0\leq t\leq T}  \bigl \{ \| x_t\|_2^2 + \|  u_t\|_2^2 \bigr \}    \leq  C_1 \cdot \log T\cdot \bigl  \| \tilde \Sigma_K  \bigr \| + \tr \bigl ( \tilde \Sigma_K \bigr ) \leq \bigl ( C_1 \cdot \log T + d+k \bigr ) \cdot  \bigl  \| \tilde \Sigma_K  \bigr \| \notag \\
		& \qquad \leq 2 C_1 \cdot \log T \cdot  \bigl  \| \tilde \Sigma_K  \bigr \| \leq 2  C_1 \cdot \log T \cdot  \big [ \sigma^2 + ( 1+ \| K \|_{\fro}^2  ) \cdot  \| \Sigma_K \| \bigr ].
	\#

	In the sequel, we study  the stochastic optimization problem in \eqref{eq:new_opt} with the restriction that $\cE$ holds.
	Specifically, for any state-action pair $(x,u)$, we define the truncated feature function as  
	\#\label{eq:tildephi}
	\tilde \phi(x, u) = \phi( x, u) \cdot \ind \Bigl \{   \bigl |   \| \phi(x, u) \|_2 ^2 - \tr (\tilde \Sigma_K ) \bigr |  \leq C_1 \cdot \log T\cdot \bigl  \| \tilde \Sigma_K  \bigr \|    \Bigr \}. 
	\# 
	By this definition, for any $t\in \{0, \ldots, t\}$, we have $\tilde \phi(x_t, u_t ) = \phi (x_t, u_t ) \cdot \ind_{\cE_t}$. Now we replace $\phi(x,u)$ by $\tilde \phi(x,u)$ in \eqref{eq:new_opt} and consider  the following minimiax optimization problem:
	\#\label{eq:opt_trunc}
	\min _{\vartheta \in \cX_{\Theta} } \max _{\omega \in \cX_{\Omega} } \tilde F (\vartheta, \omega  )  & =  \bigl  \la    \EE_{(x, u, x', u') } \bigl [ \tilde G( x, u, x', u';   \vartheta) \bigr ]   , \omega \bigr \ra - 1/2 \cdot \| \omega \|_2^2 ,
	\#
	where, similar to $  G( x, u, x', u';  \vartheta)$ in \eqref{eq:define_Gfun}, we define   $ \tilde G( x, u, x', u';  \vartheta) $   by 
	\#\label{eq:defin_tG}
	\begin{split}
    \tilde G^1 ( x, u, x', u';   \vartheta)  	& =   \vartheta^1 - \tilde c(x,u)  ,     \\
    \tilde G^2 ( x, u, x', u';   \vartheta)  & =  \vartheta^1 \cdot \tilde  \phi(x,u) + \bigl\{  \bigl  [     \tilde  \phi(x, u )-  \tilde  \phi (x', u')   \bigr  ] ^\top \vartheta^2  - \tilde c(x,u)  \bigr \} \cdot  \tilde \phi(x, u)  .
    \end{split} 
  \#
  Here we denote $\tilde c(x,u)  =   \la \tilde \phi(x, u), \svec[ \diag(Q, R) ]     \ra$ in \eqref{eq:defin_tG} to simplify the notation.

	We remark that, when $\cE$ is true, $(\hat \vartheta, \hat \omega)$  is also the solution returned by the gradient-based algorithm for the minimax optimization problem in \eqref{eq:opt_trunc}. As a result, when  $\cE$ holds, the primal-dual gap of \eqref{eq:opt_trunc} is equal to  
	$\max_{\omega \in \cX_{\Omega}}  \tilde F( \hat \vartheta, \omega )  - \min_{\vartheta \in \cX_{\Theta}} \tilde F ( \vartheta, \hat \omega )$.
	
In the following, we characterize the difference between the objective functions in \eqref{eq:new_opt} and \eqref{eq:opt_trunc}. For any $(\vartheta, \omega)\in \cX_{\Theta} \times \cX_{\Omega}$, by \eqref{eq:new_F} and \eqref{eq:opt_trunc} we have 
	\#\label{eq:diff_obj}
	 \bigl | F (\vartheta, \omega   ) - \tilde F(\vartheta, \omega  ) \bigr |  & = \bigl |   \bigl \la  \EE_{(x, u, x', u') } \bigl [G( x, u, x', u'; \vartheta)  - \tilde G( x, u, x', u';  \vartheta) \bigr ] ,   \omega  \bigr \ra   \bigr |  \notag \\
   &  \leq   \bigl |   \EE_{(x, u, x', u') } \bigl [G^1 ( x, u, x', u';   \vartheta)  - \tilde G^1( x, u, x', u';  \vartheta) \bigr ] \bigr | \cdot J(K_0)  \notag \\
   & \qquad \qquad + \bigl \|  \EE_{(x, u, x', u') } \bigl [G^2 ( x, u, x', u';   \vartheta)  - \tilde G^2 ( x, u, x', u';  \vartheta) \bigr ] \bigr \|_2 \cdot \tilde R_{\Omega}.
	\#
 By the definitions of  $G( x, u, x', u';   \vartheta) $ and  $ \tilde G( x, u, x', u';   \vartheta) $ in \eqref{eq:define_Gfun} and  \eqref{eq:defin_tG}, we have 
 \#\label{eq:computeG}
 G^1( x, u, x', u';   \vartheta)  - \tilde G^1 ( x, u, x', u';  \vartheta) & = c(x,u)  \cdot \ind _{\cA ^c}  \\
 G^1( x, u, x', u';   \vartheta)  - \tilde G^1 ( x, u, x', u';  \vartheta) & =  G^2 ( x, u, x', u';   \vartheta) \cdot \ind _{\cA ^c} + \phi(x', u') ^\top \vartheta ^2\cdot \phi(x,u) \cdot \ind_{\cA} \cdot  \ind_{\cB^c} \notag ,
 \#
 where  we denote   
 $
  \{     |   \| \phi(x, u) \|_2 ^2 - \tr (\tilde \Sigma_K )   |  \leq C_1 \cdot \log T\cdot  \| \tilde \Sigma_K    \|     \}
 $ 	
 and $
   \{     |   \| \phi(x, u) \|_2 ^2 - \tr (\tilde \Sigma_K )   |  \leq C_1 \cdot \log T\cdot  \| \tilde \Sigma_K    \|     \}$
   by $\cA$ and $\cB$, respectively, and $\cA^c$, $\cB^c$ are the complement sets of $\cA$ and $\cB$.
   
   For the first term on the right-hand side of \eqref{eq:diff_obj}, Cauchy-Schwarz inequality implies that 
   \#\label{eq:trash51}
       \bigl |   \EE_{(x, u, x', u') } \bigl [G^1 ( x, u, x', u';   \vartheta)  - \tilde G^1( x, u, x', u';  \vartheta) \bigr ] \bigr | \leq      \sqrt{ \PP(\cA^c)} \cdot \sqrt{ \EE[ c^2(x, u) ] }.
   \#
   Since $c (x,u)$  is a quadratic form of a Gaussian random variable, by Lemma \ref{lem:quadform}, we have 
   \$
   \EE[ c^2(x,u) ] & = 2 \tr  \bigl [ \tilde \Sigma_K \diag(Q, R)  \tilde \Sigma_K  \diag(Q, R) \bigr  ] + \bigl \{ \tr \bigl [  \tilde \Sigma_K \diag(Q, R) \bigr ]\bigr \}^2  \notag \\
   & \leq 3 ( \| Q \|_{\fro} + \| R \|_{\fro} )^2 \cdot  \| \tilde \Sigma_K\|_{\fro}^2  \leq 3 ( \| Q \|_{\fro} + \| R \|_{\fro} )^2 \cdot \bigl [ \sigma^2 \cdot k +  (d + \| K \|_{\fro}^2 ) ^2 \cdot \| \Sigma_K \| ^2 \bigr ], 
   \$
   where the last inequality follows from \eqref{eq:bound_cov_norm}.
   Besides, for the second term on the right-hand side of \eqref{eq:diff_obj}, 
    combining \eqref{eq:diff_obj}, \eqref{eq:computeG},   triangle inequality, and Cauchy-Schwarz inequality, we have 
    \#\label{eq:apply_cauchy}
   &    \bigl \|  \EE_{(x, u, x', u') } \bigl [G^2 ( x, u, x', u';   \vartheta)  - \tilde G^2 ( x, u, x', u';  \vartheta) \bigr ] \bigr \|_2  \notag \\
     &\qquad  \leq \Bigl \{ \bigl \|  \EE_{(x,u, x', u') }[   G^2 ( x, u, x', u';   \vartheta) \cdot   \ind _{\cA ^c}   ] \bigr \|_2 +  \bigl \|  \EE_{(x,u, x', u') }[  \phi(x', u') ^\top \vartheta ^2 \cdot \phi(x,u)    \ind _{\cB ^c}   ] \bigr \|_2 \Bigr \}  \notag \\
     & \qquad \leq \Bigl \{  \sqrt{ \PP(\cA^c)} \cdot \sqrt{ \EE\bigl  [ \bigl \| G^2 ( x, u, x', u';   \vartheta) \bigr  \|_2^2   \bigr ]  } + \sqrt{ \PP(\cB^c) } \cdot \sqrt{ \EE \bigl [  \bigl \|  \phi(x,u)  \cdot \phi(x', u') ^\top \vartheta ^2 \bigr \| _2^2   \bigr ]    } \Bigr \}.
    \#
    For the expectations on the right-hand side of \eqref{eq:apply_cauchy}, using the inequality $(a+b)^2 \leq 2a^2 + 2b^2 $, we have 
\#\label{eq:bound_G_moment1}
&  \EE\bigl  [ \bigl \| G^2 ( x, u, x', u';   \vartheta) \bigr  \|_2^2   \bigr ] \notag \\
 & \qquad  \leq 2 \cdot  \EE \Bigl \{ \bigl [  \vartheta^1 - c(x,u)  + \phi(x,u)^\top \vartheta^2  \bigr]^2 \cdot \| \phi(x, u)   \|_2^2 \Bigl  \} + 2 \cdot \EE \bigl [  \bigl \|  \phi(x,u)  \cdot \phi(x', u') ^\top \vartheta^2 \bigr \| _2^2   \bigr ].
\#
Further applying Cauchy-Schwarz inequality to \eqref{eq:bound_G_moment1}, we have 
\#
& \EE \Bigl \{ \bigl [  \vartheta^1 - c(x,u)  + \phi(x,u)^\top \vartheta^2  \bigr]^2 \cdot \| \phi(x, u)   \|_2^2 \Bigl  \}\notag \\
& \qquad  \leq  \Bigl (\EE\bigl \{ \bigl [  \vartheta^1 - c(x,u)  + \phi(x,u)^\top \vartheta^2  \bigr]   ^4  \bigl  \} \cdot \EE \bigl [  \| \phi(x, u)   \|_2^4 \bigr ] \Bigr )^{1/2}, \label{eq:bound_G_moment2}\\
&  \EE \bigl [  \bigl \|  \phi(x,u)  \cdot \phi(x', u') ^\top \vartheta \bigr \| _2^2   \bigr ] \leq \Bigl ( \EE  \bigl [ | \phi(x', u')^\top \vartheta | ^4 \bigr ]   \cdot   \EE \bigl [  \| \phi(x, u)   \|_2^4 \bigr ] \Bigr )^{1/2}. \label{eq:bound_G_moment3}
\#
Since the marginal distributions of $(x,u)$ and $(x', u')$ are both $N(0, \tilde \Sigma_K)$, in  \eqref{eq:bound_G_moment2} and \eqref{eq:bound_G_moment3} we bound the two terms in \eqref{eq:bound_G_moment1} using  the fourth moments of $N(0, \tilde \Sigma_K)$, which can be written as a polynomial of $J(K_0)$, $\| K \|_{\fro}$, $\| Q\|$, $\|R\|$, $\tilde R_{\Theta}$, and $\tilde R_{\Omega}$.   

Meanwhile, recall that we have shown that $\PP (\cA^c)   \leq T^{-6}$ and  $ \PP( \cB^c)   \leq T^{-6}$.
    Thus, when $T$ is sufficiently large, by combining \eqref{eq:diff_obj}, \eqref{eq:trash51},  \eqref{eq:apply_cauchy}, and \eqref{eq:bound_G_moment1}, we have 
$
  | F (\vartheta, \omega   ) - \tilde F(\vartheta, \omega  ) |   \leq  1/ T,
$ 
which implies that 
\#\label{eq:gap_diff}
 & \Bigl|    \texttt{Gap} (\hat \vartheta , \hat \omega) -  \Bigl [   \max_{\omega \in \cX_{\Omega}}  \tilde F( \hat \vartheta, \omega  )  - \min_{\vartheta \in \cX_{\Theta}} \tilde F ( \vartheta, \hat \omega )  \Bigr ] \Bigr | \notag \\
 & \qquad \leq   
 \max  _{\omega \in \cX_{\Omega}}  \bigl | F( \hat \vartheta, \omega  ) -    \tilde F( \hat \vartheta, \omega  ) \bigr | + \max _{\vartheta \in \cX_{\Theta}}  \bigl |  F ( \vartheta, \hat \omega )  -  \tilde F ( \vartheta, \hat \omega )  \bigr | \leq 
 \frac{2}{T}. 
\#

	Hereafter, we study the primal-dual gap in \eqref{eq:def_gap} conditioning on event $\cE$. 
	To simplify the notation, we define function $H  (\vartheta, \omega;  \phi, \phi' ) $ on $\cX_{\Theta} \times \cX_{\Omega}$  by 
	\$
	H  (\vartheta, \omega;   \phi, \phi'  ) &= \bigl \la   \tilde G(x, u, x', u'; \vartheta), \omega \bigr \ra - 1/2 \cdot \| \omega\|_2^2,
	\$
	where the function $\tilde \phi(x, u)$ is defined in \eqref{eq:tildephi}, and we  denote $  \tilde \phi(x,u)$ and $ \tilde \phi(x', u')$ by $\phi$ and $\phi'$, respectively.
	Using this definition, 
  the objective function $\tilde F (\vartheta, \omega ) $ in \eqref{eq:opt_trunc} can be written as $\tilde F (\vartheta, \omega )   = \EE_{(x, u , x', u')} [   H (\vartheta, \omega;   \phi, \phi'  ) ],  $ where $(x,u)$ and $(x', u')$ are two consecutive state-action pairs. Note that $H (\vartheta, \omega;   \phi, \phi'  )$ is a quadratic function of $(\vartheta, \omega)$ for all $\phi$ and $\phi'$. The partial gradients  of $H (\vartheta, \omega;   \phi, \phi'  )$ are given by
    \#
  \nabla_{\vartheta^1} H (\vartheta, \omega;   \phi, \phi'  ) & =  \omega^1 + \tilde \phi(x, u ) ^\top \omega^2, \label{eq:gradG1}  \\
  \nabla_{\vartheta^2} H (\vartheta, \omega;   \phi, \phi'  )  & =  [ \tilde  \phi (x, u) ^\top \omega^2  ] \cdot  [\tilde\phi (x , u )  - \tilde\phi(x', u' ) ] , \label{eq:gradG2} \\
   \nabla_{\omega^1} H (\vartheta, \omega;   \phi, \phi'  )  & = \vartheta^1 - \tilde c(x, u)   - \omega^1, \label{eq:gradG3} \\
   \nabla_{\omega^2} H (\vartheta, \omega;   \phi, \phi'  )  & = \tilde G^2 (x, u, x' , u' ; \vartheta)     - \omega^2.  \label{eq:gradG4} 
  \#
  By combining \eqref{eq:tildephi}, \eqref{eq:gradG1}, and \eqref{eq:gradG2},   we can bound the norm  of $ \nabla_{\vartheta} H(\vartheta, \omega;   \phi, \phi'  ) $ by
  \#\label{eq:grad_norm_bound1}
 \|   \nabla_{\vartheta} H (\vartheta, \omega;   \phi, \phi'  ) \bigr \|_2 & \leq   |  \omega^1 + \tilde \phi(x, u ) ^\top \omega^2 |+ \bigl \|  [ \tilde  \phi (x, u) ^\top \omega^2  ] \cdot  [\tilde\phi (x , u )  - \tilde\phi(x', u' ) ] \bigr \|_2    \\
 & \leq  |  \omega^1 | + 2 \|  \tilde \phi(x, u ) \|_2 \cdot \| \omega^2 \|_2 \cdot \bigl [  \| \tilde  \phi (x, u) \|_2 + \| \tilde\phi(x', u' ) \|_2  \bigr ] \notag \\
&   \leq J(K_0) +16  C_1  ^2 \cdot ( 1+ \| K \|_{\fro}^2  ) ^2 \cdot   \log^2  T  \cdot \bigl [ \sigma^2 + ( 1+ \| K \|_{\fro}^2  )   \cdot \|  \Sigma_K  \|        \bigr ]^2   \cdot  \tilde R_{\Omega}. \notag 
  \#
Here the second inequality holds when $   \| \tilde  \phi (x, u) \|_2 \geq 1$ and the last inequality follows from \eqref{eq:some_trash00}. 
  Similarly, combining triangle inequality, \eqref{eq:gradG3}, and  \eqref{eq:gradG4}, we have
      \#\label{eq:grad_norm_bound2}
  \bigl \|   \nabla_{\omega} H(\vartheta, \omega;   \phi, \phi'  )  \bigr \|_2 &  \leq| \vartheta^1 - \tilde c(x, u)   - \omega^1|  +    +  \bigl [ ( \|Q \|_{\fro} + \|R \|_{\fro} ) \cdot  \| \tilde \phi (x,u) \|_2  \notag \\
  &\qquad   + (  \|\tilde \phi (x', u')  \|_2   + \|  \tilde  \phi(x,u)  \|_2 ) \cdot  \tilde R_{\Theta} \bigr ]  \cdot \| \tilde  \phi(x,u) \|_2   \notag \\
  & \leq   2 J(K_0)+  16  C_1^2 \cdot  \log ^2   T \cdot  \bigl [ \sigma^2 + ( 1+ \| K \|_{\fro}^2  )    \cdot \|  \Sigma_K  \|        \bigr ]^2 \cdot \tilde R_{\Theta}   .
  \#
  where the last equality holds since $\tilde R_{\Theta} \geq \| Q \|_{\fro} + \| R \|_{\fro}$.
  Moreover, we have $\nabla^2_{\vartheta\vartheta} H(\vartheta, \omega;   \phi, \phi'  ) = 0$ and $- \nabla^2_{\omega \omega } H (\vartheta, \omega;   \phi, \phi'  ) $ is the identity matrix.

  We utilize the following lemma, obtained from \cite{tu2017least}, to handle the dependence along the trajectory.

  \begin{lemma}
  	[Geometrically $\beta$-mixing] \label{lemma:lds_mix}Consider a linear dynamical system  $X_{t+1} = L X_t + \varepsilon$, where $\{ X_t\}_{t\geq 0 } \subseteq \RR^m$,  $\varepsilon \sim N(0, \Psi)$ is the random noise, and $L\in \RR^{m\times m}$ has spectral radius smaller than one. We denote by $\nu_t$ the marginal distribution of $X_t$ for all $t\geq 0$. Besides, the    stationary distribution of this Markov chain is  denoted by    $N(0, \Sigma_{\infty})$. For any integer $k \geq 1$, we define the $k$-th mixing coefficient as 
  	\$ 
  	\beta(k) = \sup_{t\geq 0} \EE_{x \sim \nu_t} \bigl [  \bigl \| \PP_{X_k}(\cdot \given X_0 = x) - \PP_{N(0, \Sigma_{\infty})} (\cdot ) \bigr \|_{\text{TV}} \bigr ].
  	\$
  	Furthermore, for any $\rho \in (\rho(L), 1)$ and any $k\geq 1$,  we have 
  	\$
  	\beta(k) \leq C_{\rho, L}\cdot \bigl [ \tr (\Sigma_{\infty})+ m  \cdot (1 - \rho)^{-2}  \bigr ]^{1/2} \cdot \rho^k,
  	\$
  	where $C_{\rho, L}$ is a constant that solely depends on $\rho$ and $A$. That is, $\{ X_t\}_{t\geq 0}$ is geometrically $\beta$-mixing.
  	\end{lemma}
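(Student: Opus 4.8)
The plan is to exploit the Gaussianity of the linear dynamics. Unrolling $X_{t+1} = L X_t + \varepsilon$ from $X_0 = x$ gives $X_k = L^k x + \sum_{j=0}^{k-1} L^j \varepsilon_{k-1-j}$, so that the conditional law $\PP_{X_k}(\cdot \given X_0 = x)$ equals $N(L^k x, \Sigma_k)$ with $\Sigma_k = \sum_{j=0}^{k-1} L^j \Psi (L^j)^\top$, whereas the stationary law is $N(0, \Sigma_\infty)$ with $\Sigma_\infty = \sum_{j=0}^{\infty} L^j \Psi (L^j)^\top$. Two facts drive the argument. First, iterating the Lyapunov recursion $\Sigma_\infty = \Psi + L \Sigma_\infty L^\top$ yields the identity $\Sigma_\infty - \Sigma_k = L^k \Sigma_\infty (L^k)^\top \succeq 0$. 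Second, since $\rho(L) < \rho$, Gelfand's formula supplies a constant $C_{\rho, L}$, depending only on $\rho$ and $L$, with $\| L^k \| \leq C_{\rho, L} \cdot \rho^k$ for all $k$. Thus computing $\beta(k)$ reduces to bounding the total-variation distance between two explicit Gaussians and then integrating over $x \sim \nu_t$.

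For the pointwise bound I would first reduce to the equal-covariance case. Because $\Sigma_\infty = \Sigma_k + L^k \Sigma_\infty (L^k)^\top$, the stationary law $N(0, \Sigma_\infty)$ is exactly the $w$-average of $N(L^k w, \Sigma_k)$ over $w \sim N(0, \Sigma_\infty)$, that is, a Gaussian mixture whose components all share the covariance $\Sigma_k$. Convexity of the total-variation distance then gives
\$
\bigl\| \PP_{X_k}(\cdot \given X_0 = x) - N(0, \Sigma_\infty) \bigr\|_{\text{TV}} \leq \EE_{w \sim N(0, \Sigma_\infty)} \bigl[ \bigl\| N(L^k x, \Sigma_k) - N(L^k w, \Sigma_k) \bigr\|_{\text{TV}} \bigr].
\$
Each summand compares two Gaussians with a common covariance, so Pinsker's inequality together with the closed-form Gaussian Kullback--Leibler divergence gives the Mahalanobis bound $\| N(a, \Sigma_k) - N(b, \Sigma_k) \|_{\text{TV}} \leq \tfrac{1}{2} \| \Sigma_k^{-1/2}(a - b) \|_2$, so the right-hand side is at most $\tfrac{1}{2}\, \EE_w \| \Sigma_k^{-1/2} L^k (x - w) \|_2$.

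It then remains to integrate over $x \sim \nu_t$ and take the supremum over $t$. Since Algorithm \ref{algo:gtd} draws $X_0$ from the stationary distribution, the chain is stationary and $\nu_t \equiv N(0, \Sigma_\infty)$ for every $t$; hence $x - w$ is centered Gaussian with covariance $2 \Sigma_\infty$. Pulling the expectation inside the square root by Jensen's inequality and using the covariance identity once more gives
\$
\beta(k) \leq \tfrac{1}{\sqrt{2}} \bigl[ \tr\bigl( \Sigma_k^{-1} L^k \Sigma_\infty (L^k)^\top \bigr) \bigr]^{1/2} = \tfrac{1}{\sqrt{2}} \bigl[ \tr\bigl( \Sigma_k^{-1}(\Sigma_\infty - \Sigma_k) \bigr) \bigr]^{1/2}.
\$
Bounding $\tr( \Sigma_k^{-1} L^k \Sigma_\infty (L^k)^\top ) \leq \| \Sigma_k^{-1} \| \cdot \| L^k \|^2 \cdot \tr(\Sigma_\infty)$ and inserting $\| L^k \| \leq C_{\rho, L} \rho^k$ extracts exactly one factor of $\rho^k$ after the square root, while the remaining scalar, together with the norm bounds $\tr(\Sigma_\infty) \leq m \| \Psi \| \sum_{j \geq 0} \| L^j \|^2$ and $\sum_{j \geq 0} \| L^j \| \leq C_{\rho, L} / (1 - \rho)$, assembles into a prefactor of the form $[ \tr(\Sigma_\infty) + m (1 - \rho)^{-2} ]^{1/2}$, the uniform control of $\| \Sigma_k^{-1} \|$ being the one delicate point, discussed next.

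The main obstacle is exactly the inverse-covariance factor $\Sigma_k^{-1}$: it is the only place where the noise level $\Psi$ enters, and it is largest for small $k$, where $\Sigma_k$ is nearly degenerate. Controlling it uniformly in $k$ while still teasing out a single factor of $\rho^k$ — rather than the $\rho^{2k}$ that the raw covariance discrepancy $\Sigma_\infty - \Sigma_k$ might naively suggest — is the delicate step, and is what forces the deliberate use of the identity $\Sigma_\infty - \Sigma_k = L^k \Sigma_\infty (L^k)^\top$ so as to expose the $\| L^k \|^2$ factor before bounding $\| \Sigma_k^{-1} \|$. The remaining bookkeeping — collecting the partial-sum bounds $\sum_j \| L^j \|$ and $\sum_j \| L^j \|^2$, which are where the $(1 - \rho)^{-1}$ and $(1 - \rho)^{-2}$ factors originate, and the dimension factor $m$ coming from the trace — is routine but must be carried out carefully to match the stated form.
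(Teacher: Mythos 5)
First, a point of comparison: the paper does not actually prove this lemma --- its ``proof'' is a one-line citation to Proposition 3.1 of \cite{tu2017least}. Your argument is therefore a self-contained reconstruction, and its architecture is sound and in one respect quite clean: writing $N(0,\Sigma_\infty)$ as the mixture of $N(L^k w,\Sigma_k)$ over $w\sim N(0,\Sigma_\infty)$ (valid precisely because $\Sigma_\infty=\Sigma_k+L^k\Sigma_\infty (L^k)^\top$) and invoking convexity of total variation reduces everything to comparing Gaussians with a \emph{common} covariance, so Pinsker only ever sees a mean shift and you never touch the log-determinant and trace terms of a general Gaussian KL divergence. The unrolling, the Lyapunov identity, Gelfand's formula, and the reduction to $\beta(k)\le \tfrac{1}{\sqrt{2}}\,[\tr(\Sigma_k^{-1}(\Sigma_\infty-\Sigma_k))]^{1/2}$ are all correct, granting the stationary initialization $\nu_t\equiv N(0,\Sigma_\infty)$ --- which is the right reading here, since Algorithm \ref{algo:gtd} starts the chain at stationarity, and no bound of the stated form could hold uniformly over arbitrary initial laws anyway.

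The gap is in the final step, which is exactly the point you flag as ``delicate'' and then never resolve. From $\tr(\Sigma_k^{-1}L^k\Sigma_\infty (L^k)^\top)\le \|\Sigma_k^{-1}\|\cdot\|L^k\|^2\cdot\tr(\Sigma_\infty)$, the only uniform-in-$k$ control of $\|\Sigma_k^{-1}\|$ available is $\Sigma_k\succeq\Psi$, i.e.\ $\|\Sigma_k^{-1}\|\le 1/\sigma_{\min}(\Psi)$, which yields $\beta(k)\le C_{\rho,L}\,\sigma_{\min}(\Psi)^{-1/2}\,[\tr(\Sigma_\infty)]^{1/2}\,\rho^k$ --- a prefactor depending on $\Psi$, hence not of the stated form with $C_{\rho,L}$ depending only on $\rho$ and $L$. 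The decoupling into $\|\Sigma_k^{-1}\|\cdot\tr\bigl(L^k\Sigma_\infty(L^k)^\top\bigr)$ destroys precisely the cancellation between $\Sigma_k^{-1}$ and $\Sigma_\infty-\Sigma_k$ that keeps the true quantity insensitive to the noise scale: for $L=\diag(\lambda,0)$ and $\Psi=\diag(\delta,1)$ one computes $\tr(\Sigma_k^{-1}(\Sigma_\infty-\Sigma_k))=\lambda^{2k}/(1-\lambda^{2k})$, with no $\delta$, whereas your bound blows up as $\delta\to 0$. Relatedly, the claim that the leftover scalars ``assemble into'' the additive prefactor $[\tr(\Sigma_\infty)+m(1-\rho)^{-2}]^{1/2}$ is asserted rather than derived: your chain of inequalities produces a \emph{product} of $[\tr(\Sigma_\infty)]^{1/2}$ with other factors, and the $m(1-\rho)^{-2}$ term never actually arises from anything you display. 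To close the argument you would need to bound $\tr(\Sigma_k^{-1}(\Sigma_\infty-\Sigma_k))$ without separating the two factors (this is where a sum of the form $\sum_{j\ge k}\|\cdot\|^2\lesssim m(1-\rho)^{-2}\rho^{2k}$, and hence the second prefactor term, originates in \cite{tu2017least}, whose Proposition 3.1 is stated for isotropic noise), or else simply defer to that reference as the paper does.
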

  \begin{proof}
  See Proposition 3.1 in \cite{tu2017least} for a detailed proof.
  \end{proof}
Recall that  under policy $\pi_K$, $\{ (x_t, u_t)\}_{t\geq 0}$ form a linear dynamic system characterized by  \eqref{eq:joint_dynprime} and \eqref{eq:large_lds}. Since $\rho(L ) = \rho(A- BK) < 1$,  Lemma \ref{lemma:lds_mix} implies that, for all $\rho \in (\rho(A- BK), 1)$, ${(x_t, u_t) }_{t\geq 0}$ is a geometrically $\beta$-mixing stochastic process with parameter $\rho$.
  The following theorem, adapted from Theorem 1 in \cite{wang2017finite}, establishes the primal-dual gap for a  convex-concave  minimax optimization problem involving  a geometrically $\beta$-mixing stochastic process. 

	\begin{theorem}[Primal-dual gap for minimax optimization] \label{thm:saddle_gap}
		Let $\cX$ and $\cY$ are bounded and closed convex sets  such that $ \| x - x' \|_2 \leq D$ for all $x, x' \in \cX$ and $\| y - y'\|_2\leq D$ for all $y, y' \in \cY$. Consider the gradient algorithm for stochastic minimax optimization problem 
		\#\label{eq:thm_saddle}
		\min_{x\in \cX} \max _{y \in \cY} F(x,y ) = \EE_{\xi\sim \pi_{\xi} } [ \Phi(x, y ; \xi)],
		\#
		where $\xi $ is a random variable with distribution  $\pi_{\xi}$ and $F(x,y)$ is convex in $x$ and concave in $y$.  In addition, we assume that $\pi_{\xi}$ is the stationary distribution of a Markov chain $\{ \xi_t\}_{t\geq 0}$ which is geometrically $\beta$-mixing with parameter $\rho \in (0,1)$. Specifically, we assume that there exists a constant $C_{\xi} > 0$ such that, for all $k\geq 1$, the $k$-th mixing coefficient satisfy $\beta(k) \leq C_{\xi} \cdot \rho^k$.   Furthermore, we consider the case where, almost surely for every $\xi \sim \pi_{\xi}$,  $\Phi(x,y; \xi)$ is $L_1$-Lipschitz in both $x$ and $y$, $\nabla_{x} \Phi(x,y; \xi) $ is $L_2$-Lipschitz in $x$ for all $y\in \cY$, and  $\nabla_{y} \Phi(x,y; \xi)$ is $L_2$-Lipschitz in $y$ for all $x\in \cX$. Here, without loss of generality, we assume that $D, L_1, L_2 >1$.
		Consider solving the optimization problem in \eqref{eq:thm_saddle} via $T$ iterations of the gradient-based updates
		\$
		x_{t } = \Pi_{\cX} \bigl [  x_{t-1} -  \alpha_t \nabla _{x} \Phi(x_{t-1} , y_{t-1}; \xi_{t-1}) \bigr ] , \qquad y_{t} = \Pi_{\cY} \bigl [y_{t-1} +  \alpha_t \cdot \nabla_y \Phi(x_{t-1}, y_{t-1}; \xi_{t-1}  ) \bigr ], 
		\$
		where $t\in [T]$,  $\Pi_{\cX}$ and $\Pi_{\cY}$ are projection operators, and $\{ \alpha_t  = \alpha / \sqrt{t}\}_{t\in[T]}$ are the stepsizes, where $\alpha>0$ is a constant.   
		Let    
		$$ \hat x =  \frac   { \sum_{t\in [T] } \alpha_t \cdot x_t} {\sum_{t\in [T] }\alpha_t} , \qquad  \hat y =   \frac   { \sum_{t\in [T] } \alpha_t \cdot y_t} {\sum_{t\in [T] }\alpha_t}  $$ 
			be the final output of the algorithm.
		Then, there exists an absolute constant $C> 0$ such that, for any $\delta \in (0, 1)$, with probability at least $1-\delta$, the primal-dual gap  satisfies
		\$
		\max_{y \in \cY } F( \hat x, y ) - \min_{x\in \cX} F(x, \hat y) \leq  \frac{ C \cdot ( D^2 + L_1^2 + L_1 L_2 D ) }{\log(1/ \rho)  }\cdot \frac{\log ^2 T + \log (1/\delta) }{    \sqrt{T} }+ \frac{C \cdot C_{\xi} L_1 D } {T} . 
		\$
	\end{theorem}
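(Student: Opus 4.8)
The plan is to follow the standard template for projected stochastic gradient descent-ascent on a convex-concave saddle-point problem, and then graft onto it the $\beta$-mixing machinery needed to cope with the temporal dependence of $\{\xi_t\}$. Write $z_t = (x_t, y_t)$ and let $\hat g_t = \bigl(\nabla_x \Phi(x_{t-1}, y_{t-1}; \xi_{t-1}),\, -\nabla_y \Phi(x_{t-1}, y_{t-1}; \xi_{t-1})\bigr)$ be the stochastic saddle direction actually used by the update $z_t = \Pi(z_{t-1} - \alpha_t \hat g_t)$. First I would invoke the elementary projection inequality $\|z_t - z\|_2^2 \le \|z_{t-1} - z\|_2^2 - 2\alpha_t \langle \hat g_t, z_{t-1} - z\rangle + \alpha_t^2 \|\hat g_t\|_2^2$ and telescope it over $t \in [T]$ to obtain, for every fixed $z = (x,y) \in \cX \times \cY$, the one-shot bound $\sum_t \alpha_t \langle \hat g_t, z_{t-1} - z\rangle \le \tfrac12 D^2 + \tfrac12 \sum_t \alpha_t^2 \|\hat g_t\|_2^2$. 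Since the Lipschitz hypothesis gives $\|\hat g_t\|_2 \le 2 L_1$, and $\alpha_t = \alpha/\sqrt{t}$, the second sum is of order $L_1^2 \log T$.

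Next I would convert the left-hand side into a bound on the gap of the averaged iterate. Using convexity of $F$ in $x$ and concavity in $y$, the population saddle direction $\nabla F(z) = (\nabla_x F(x,y), -\nabla_y F(x,y))$ satisfies $F(x_{t-1}, y^\star) - F(x^\star, y_{t-1}) \le \langle \nabla F(z_{t-1}), z_{t-1} - z^\star\rangle$, and after weighting by $\alpha_t$ and applying Jensen to $\hat x, \hat y$, the left-hand side lower-bounds $(\sum_t \alpha_t)\bigl[\max_y F(\hat x, y) - \min_x F(x, \hat y)\bigr]$ upon taking $z^\star$ to be the worst-case pair. The only obstruction is the gradient noise $\delta_t := \hat g_t - \nabla F(z_{t-1})$, which I would split into two pieces: a deterministic drift $\nabla F(z_t) - \nabla F(z_{t-1})$, controlled by the $L_2$-Lipschitzness of $\nabla \Phi$ and $\|z_t - z_{t-1}\|_2 \le 2\alpha_t L_1$, which after pairing against $\|z_{t-1} - z^\star\|_2 \le D$ and summing contributes the $L_1 L_2 D$ factor; and a centered piece $\zeta_t := \nabla \Phi(z_{t-1}; \xi_{t-1}) - \EE_{\xi \sim \pi_\xi} \nabla \Phi(z_{t-1}; \xi)$, whose high-probability control is the heart of the argument.

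The hard part is bounding $\sum_t \alpha_t \langle \zeta_t, z_{t-1} - z^\star\rangle$ with high probability, because under dependent sampling the summands are neither independent nor a martingale-difference sequence with respect to the natural filtration. Here I would deploy Lemma~\ref{lemma:lds_mix} through a blocking-and-coupling scheme: partition $[T]$ into consecutive blocks of length on the order of $1/\log(1/\rho)$ times a logarithmic factor, and use the coupling characterization of $\beta$-mixing (Berbee's lemma) to replace the chain on alternating blocks by an independent sequence drawn from $\pi_\xi$, at a total-variation cost of at most $\beta(k) \le C_\xi \rho^k$ per block boundary. On the coupled independent blocks the increments become a bounded martingale-difference sequence with increments of order $\alpha_t L_1 D$, so a Freedman- or Azuma-type inequality yields a tail of order $L_1 D \,(\log^2 T + \log(1/\delta))/\sqrt{T}$ once the $\alpha_t^2$ weights are summed and a union bound over the two block families is taken; the $1/\log(1/\rho)$ prefactor is precisely the block length, while the residual coupling error sums geometrically to the additive $C_\xi L_1 D / T$ term. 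The main obstacle is exactly this coupling-versus-variance tradeoff: the block length must be taken large enough that the geometric mixing bias is negligible, which is what forces the dependence on $\rho$, yet small enough to retain $\Theta(T)$ effective independent samples and hence the $1/\sqrt{T}$ rate.

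Finally I would collect the $D^2$ term from the regret inequality, the $L_1^2 \log T$ gradient-norm term, the $L_1 L_2 D$ drift term, and the $\beta$-mixing concentration bound, divide through by $\sum_t \alpha_t$, which is of order $\alpha \sqrt{T}$, and absorb all numerical constants into a single $C$. Using the normalization $D, L_1, L_2 > 1$ to merge lower-order contributions into $D^2 + L_1^2 + L_1 L_2 D$ then gives the stated bound.
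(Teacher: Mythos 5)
Your proposal is sound in outline, but it takes a genuinely different route from the paper. The paper does not prove this concentration result from scratch: its proof of Theorem \ref{thm:saddle_gap} consists of invoking Theorem 1 of \cite{wang2017finite} as a black box and then specializing that bound to the stepsizes $\alpha_t = \alpha/\sqrt{t}$ and the geometric mixing rate, i.e., substituting $\eta = C_{\xi}/T$ so that $\tau(\eta) = \log T/\log(1/\rho)$, bounding $\sum_t \alpha_t \asymp \alpha\sqrt{T}$ and $\sum_t \alpha_t^2 \lesssim \alpha^2\log T$, and collecting terms. You instead reconstruct the underlying argument: the pathwise projection/telescoping regret bound, the convex--concave conversion to the primal--dual gap, and a blocking-plus-coupling treatment of the $\beta$-mixing noise via Berbee's lemma and an Azuma/Freedman tail bound. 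Your decomposition correctly identifies where each term in the final bound comes from (the $D^2$ from the initial distance, $L_1^2\log T$ from the squared gradient norms, $L_1L_2D$ from the $\tau$-step iterate drift, the $1/\log(1/\rho)$ prefactor from the block length, and the additive $C_{\xi}L_1D/T$ from the residual coupling error), and this is essentially what the cited result proves internally. What your route buys is self-containedness and transparency about the coupling-versus-variance tradeoff; what the paper's route buys is brevity. One detail you would need to add to make your sketch rigorous: after you fix $z^\star$ to be the worst-case comparator, the sums $\sum_t \alpha_t\langle \zeta_t, z_{t-1}-z^\star\rangle$ are no longer adapted (the comparator depends on the entire trajectory through $\hat x,\hat y$), so the martingale/Freedman step cannot be applied directly; the standard fix is the ghost-iterate (virtual iterate) device of Nemirovski et al., which replaces the comparator-dependent sum by an adapted auxiliary sequence at the cost of another $D^2$ term. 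This is a known, routine repair rather than a flaw in the approach.
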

 \begin{proof}
This theorem follows from Theorem 1 in \cite{wang2017finite}, where we set $\alpha_t = \alpha /\sqrt{t} $ for all $t\geq 1$, and focus on the case where $\{ \xi_t\}_{t\geq 0}$ is geometrically $\beta$-mixing.  Under the  mixing assumption,   for any $k\geq 1$, the $k$-th mixing coefficient of $\{ \xi_t\}_{t\geq 0}$ satisfies 
$\beta (k) \leq C_{\xi}\cdot \rho^k$. Then, for any $\delta, \eta \in (0,1)$, Theorem 1 in \cite{wang2017finite} implies 
\#\label{eq:thm1_bound}
	\max_{y \in \cY } F( \hat x, y ) - \min_{x\in \cX} F(x, \hat y)   & \leq   \biggl( \sum_{t=1}^T \alpha_t \biggr )^{-1} \Bigg( A_0  + A_1\cdot  \eta \cdot\sum_{t=1}^T \alpha_t + A_2 \sum_{t=1}^T \alpha_t^2     + \\
	& \qquad  \qquad 16 D  L_1 \cdot \biggl  \{  2 \tau(\eta) \cdot \log [  \tau(\eta) / \delta ] \cdot   \bigg[  \sum_{t=1}^T \alpha_t^2 + \tau(\eta) \cdot \alpha_1  \biggr ]\biggr \} ^{1/2}\Biggr ) , \notag 
\#
where we define $\tau(\eta) =  \log (\eta / C_{\xi} ) / \log (\rho)$ and denote  
$$
A_0 = D^2 +12 D \cdot \alpha_1 \cdot \tau(\eta) \qquad A_1 = 4 L_1 D \qquad A_2 = 10 L_1^2 + ( 24 L_1^2 + 8L_1 L_2 D ) \cdot \tau(\eta).
$$
Now we set $\alpha_t = \alpha /\sqrt{t}$ and $\eta = C_{\xi } /T$ in \eqref{eq:thm1_bound}, which implies that $\tau(\eta) =  \log  T/ \log (1/\rho) $.   Moreover, 
 note  that for all $T \geq 1$, we have 
$
2 \sqrt{T+1} -2 \leq \sum_{t=1}^T 1/ \sqrt{t}  \leq 2 \sqrt{T} -1  
$ 
 and $\sum_{t=1}^T  1/ t \leq \log T + 1$. The last term on the right-hand side of \eqref{eq:thm1_bound} can be upper bounded by 
 \#\label{eq:term1}
 & 16 D  L_1 \cdot \bigl  \{  2\log  T/ \log (1/\rho)   \cdot \log [  \tau(\eta) / \delta ] \cdot      \big[ \log T +1 + \alpha \cdot    \log  T/  \log (1/\rho)  \bigr ]\bigr \} ^{1/2}  \notag \\
 & \qquad \leq 16 D  L_1 \cdot \bigl  \{  2\log  T/ \log (1/\rho)   \cdot   [  \log \log T + \log (1/ \delta) ] \cdot      \big[ \log T +1 + \alpha \cdot    \log  T/  \log (1/\rho)  \bigr ]\bigr \} ^{1/2} \notag \\
 & \qquad \leq C \cdot D L_1 \cdot \log T / \log (1/\rho) \cdot \sqrt{\log \log T + \log (1/ \delta)},
 \#
 where $C$ is an absolute constant. Moreover, for the first three terms, we have 
  \#
  & A_0 = D^2 + 12 D \cdot \alpha \cdot \log  T/ \log (1/\rho) \leq C \cdot D ^2  \log  T/ \log (1/\rho) ,   \quad A_1 \cdot \eta \leq C \cdot C_{\xi} L_1 D / T, \label{eq:term2} \\
 & \qquad \qquad\qquad  A_2 \cdot \sum_{t=1}^T \alpha_t^2\leq \bigl [  10 L_1^2 + ( 24 L_1^2 + 8L_1 L_2 D ) \cdot \log  T/  \log (1/\rho) \bigr ]  \cdot (\log  T + 1) \notag \\
  & \qquad \qquad \qquad \qquad \qquad ~~\leq  C \cdot [ L_1^2 + L_1 L_2 D ]\cdot \log ^2 T/  \log (1/\rho) . \label{eq:term3}
  \#
  Thus, combining \eqref{eq:thm1_bound}, \eqref{eq:term1}, \eqref{eq:term2}, and \eqref{eq:term3},  we obtain that 
  \$
  	\max_{y \in \cY } F( \hat x, y ) - \min_{x\in \cX} F(x, \hat y) \leq C\cdot \bigl [  ( D^2 + L_1^2 + L_1 L_2 D ) / \log(1/ \rho) \cdot  \log T \cdot \log (T/ \delta )   / \sqrt{T} +C_{\xi} L_1 D / T \bigr ] ,
  \$ 
  which concludes the proof of Theorem \ref{thm:saddle_gap}.
\end{proof}

In order to apply  Theorem \ref{thm:saddle_gap} to the minimax optimization in \eqref{eq:opt_trunc}, we only need to specify parameters $C_{\xi}$,  $D$, $L_1$, and $L_2$. 
First, for any $\rho \in ( \rho(A- BK), 1)$, by Lemma \ref{lemma:lds_mix}, we can set 
\#\label{eq:set_paramc}
C_{\xi} & = C_{\rho, L } \cdot \bigl [ \tr( \tilde \Sigma_K ) + (d+k ) \cdot  ( 1-\rho)^2] ^{1/2 } \notag \\
&  \leq 2 C_{\rho, L } \cdot \sqrt{d+k} \cdot  \bigl \{    \bigl [   \sigma ^2 +   ( 1+ \| K \|_{\fro}^2  ) \cdot  \| \Sigma_K \|    \bigr ] ^{1/2}+ (1 - \rho)^{-1} \bigr \} .
\#
Moreover, by the definitions of $\cX_{\Theta}$ and $\cX_{\Omega}$ in \eqref{eq:projectsets} and \eqref{eq:projectsets2}, respectively, we can set $D$ by 
\# \label{eq:set_paramd}
D^2 = 2 [J(K_0) ]^2 +\tilde R_{\Theta}^2 +  ( 1+ \| K \|_{\fro}^2  )^4 \cdot \tilde R_{\Omega}^2.
\#
Moreover, by \eqref{eq:grad_norm_bound1},   \eqref{eq:grad_norm_bound2}, and the form of $\nabla^2 G(\theta, \omega; \phi, \phi')$, we have 
\#\label{eq:set_paraml}
L_1 \leq  16  C_1  ^2 \cdot  \log^2  T  \cdot \bigl [ \sigma^2 + ( 1+ \| K \|_{\fro}^2  )    \cdot \|  \Sigma_K  \|        \bigr ]^2   \cdot  \bigl [( 1+ \| K \|_{\fro}^2  ) ^2 \cdot   \tilde R_{\Omega} + \tilde R_{\Theta} \bigr ]  , \qquad L_2 = 1. 
\#
Combining Theorem \ref{thm:saddle_gap}, \eqref{eq:set_paramc}, \eqref{eq:set_paramd}, and  \eqref{eq:set_paraml}, we 
to obtain an upper bound for the primal-dual gap in \eqref{eq:def_gap}. Specifically, for any $\rho \in ( \rho( A - BK), 1)$ and any $\delta \in (0,1)$, with probability at least $1- \delta$,  the primal-dual gap of the optimization problem in \eqref{eq:opt_trunc} is bounded by 
\#\label{eq:trash23}
  &    C\cdot \log ^4 T \cdot  \bigl [ \sigma^2 + ( 1+ \| K \|_{\fro}^2  )    \cdot \|  \Sigma_K  \|        \bigr ]^4   \cdot \bigl [  ( 1+ \| K \|_{\fro}^2  )^2 
 \cdot \tilde  R_{\Omega} + \tilde R_{\Theta}  \bigr ]^2  \notag \\
 & \qquad \qquad   \cdot \biggl ( \frac{  \log ^2 T  + \log (1/ \delta)   }{ \log (1/ \rho)\cdot \sqrt{T}} + \frac{ \sqrt{d +k }   } {( 1- \rho)\cdot T }  \biggr ) . 
\# 
where $C > 0$ is an absolute constant.   
Besides, we note that $\sigma$ is a constant and that $\| \Sigma_K \| \geq \sigma_{\min} (\Psi ) > 0$.
 Finally, recall that, when event $\cE$ holds, the primal-dual gap is equal to $\max_{\omega \in \cX_{\Omega}}  \tilde F( \hat  \vartheta, \omega )  - \min_{ \vartheta \in \cX_{\Theta}} \tilde F (  \vartheta, \hat \omega )$.  Combining \eqref{eq:gap_diff},  \eqref{eq:trash23} with $\delta = T^{-5}$, and the fact that $\PP(\cE) \geq 1 - 2T^{-5}$,  we conclude that
 \#\label{eq:final_gap}
   \texttt{Gap} (\hat  \vartheta, \hat \omega )& \leq C\cdot \log ^4 T \cdot  ( 1+ \| K \|_{\fro}^2  ) ^4 \cdot  \| \Sigma_K \|^4 \cdot \bigl [  ( 1+ \| K \|_{\fro}^2  )^2 \cdot   \tilde  R_{\Omega} + \tilde R_{\Theta} \bigr ] ^2 \notag \\
   & \qquad \qquad \cdot \biggl ( \frac{  \log ^2 T  + \log (T^5)   }{ \log (1/ \rho)\cdot \sqrt{T}} + \frac{ \sqrt{d +k }   } {( 1- \rho)\cdot T }  \biggr )  + \frac{2}{T} \notag \\
 &  \leq   C\cdot   ( 1+ \| K \|_{\fro}^2  ) ^4 \cdot  \| \Sigma_K \|^4 \cdot\bigl [  ( 1+ \| K \|_{\fro}^2  )^2 \cdot   \tilde  R_{\Omega} + \tilde R_{\Theta} \bigr ] ^2\cdot   \frac{  \log ^6 T     }{ ( 1- \rho)\cdot  \sqrt{T}}  
 \#
 holds with probability at least   $1 -  3T^{-5} \geq 1- T^{-4}$,  
 where   in the second inequality we use  the fact that $1 -  1/x < \log x < x + 1$ holds for all $x > 0$, which implies that    $1/ \log (1/ \rho) \leq 1 / (1- \rho)  $. This further implies that  the first term on the right-hand side of the first inequality  dominates the second term.  The upper bound of   $ \texttt{Gap} (\hat  \vartheta, \hat \omega )$   in \eqref{eq:final_gap} concludes  the  last step of our proof.
 Finally, combining  \eqref{eq:step2} and   \eqref{eq:final_gap}, we complete the proof of Theorem \ref{thm:pe}.
	\end{proof}

   \subsection{Proof of Theorem \ref{thm:ac}}\label{proof:thm_ac}
   \begin{proof}
   Our proof of the global convergence can be decomposed into two steps. In the first step, similar to the analysis in \cite{fazel2018global}, we study the geometry of the average return $J(K)$, as a function of $K$. Specifically, we show that $J(K)$ is gradient dominated \citep{polyak1963gradient}. Note that we study the ergodic setting with system noise and  stochastic policies.  In contrast,   \cite{fazel2018global} study the case where both the transition and the policy are deterministic. Thus, their analysis of the geometry of $J(K)$ cannot be directly applied to our problem.  Motivated by their analysis, we follow the similar approach to  with modifications for our setting. In addition, in the second step, we utilize the geometry of $J(K)$ to show the global convergence of the actor-critic algorithm. Specifically, combining Theorem \ref{thm:pe}, we show that, with high probability, Algorithm \ref{algo:ac} constructs a sequence of policies that converges linearly to the optimal policy $\pi_{K^*}$.

   {\noindent \textbf{Step 1.}}  
   As shown in \eqref{eq:cost_K2} in Proposition \ref{prop:pg}, we can write $J(K)$ as 
\$
J(K) = \tr(P_K \Psi_{\sigma}) + \sigma^2 \cdot \tr(R) = \EE _{x\in N(0, \Psi_{\sigma})} \bigl (x^\top P_K x \bigr ) + \sigma^2 \cdot \tr(R).
\$
  In the following lemma, for two policies $\pi_K$ and $\pi_{K'}$, we bound the difference between $x ^\top P_K x$ and $x^\top P_{K'} x$. Then, taking expectation with respect to $x\in N(0, \Psi_{\sigma})$ yields the difference between $J(K)$ and $J(K')$.
  
\begin{lemma} \label{lemma:cost_diff}
Let $K$ and $K'$ be two stable policies such that  both $\rho(A- BK)$ and $\rho(A - BK')$ are smaller than one. For any $x\in \RR^d$, let $\{ x_t'\}_{t\geq 0} \subseteq \RR^d$ be the sequence of states satisfying $x_0' = x$ and 
$
x_{t+1} ' = (A- BK')  x_t ' $
for all $t\geq 0$.  Then it holds that 
\$ 
x ^\top P_{K'} x -  x^\top P_K x  = \sum_{t\geq 0} A_{K, K'} (x_t' ) , 
\$
where the function $A_{K,K'} \colon \RR^d \rightarrow \RR^d$ is defined as 
 $$A_{K, K'} (x) =2x^\top(K'-K)^\top E_K x + x^\top (K'-K)^\top (R + B^\top P_KB) (K'-K) x .$$ 
\end{lemma}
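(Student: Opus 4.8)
The plan is to run a telescoping (performance-difference) argument along the deterministic trajectory $\{x_t'\}$ generated by $K'$. First I would record the decay fact that drives convergence of every infinite sum below: since $\rho(A-BK')<1$, there exist $C>0$ and $\rho_0\in(\rho(A-BK'),1)$ with $\|x_t'\|_2=\|(A-BK')^t x\|_2\le C\,\rho_0^{\,t}\,\|x\|_2$, so $x_t'^\top P x_t'\to 0$ as $t\to\infty$ for any fixed symmetric matrix $P$ (in particular for $P=P_K$ and $P=P_{K'}$), and all sums converge absolutely. Writing $f(t)=x_t'^\top P_K x_t'$, this lets me telescope
\[
x^\top P_K x = f(0) = \sum_{t\ge 0}\bigl[f(t)-f(t+1)\bigr]=\sum_{t\ge 0}\bigl[x_t'^\top P_K x_t'-x_{t+1}'^\top P_K x_{t+1}'\bigr].
\]

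Next I would rewrite $x^\top P_{K'}x$ as the one-step cost accumulated along the same trajectory. Applying the Bellman equation $P_{K'}=(Q+K'^\top R K')+(A-BK')^\top P_{K'}(A-BK')$ at each step gives $x_t'^\top P_{K'}x_t'-x_{t+1}'^\top P_{K'}x_{t+1}'=x_t'^\top(Q+K'^\top R K')x_t'$, and telescoping as above yields $x^\top P_{K'}x=\sum_{t\ge 0}x_t'^\top(Q+K'^\top R K')x_t'$. Subtracting the two identities, it suffices to establish the per-step equality, for each $t$,
\[
x_t'^\top(Q+K'^\top R K')x_t'-\bigl[x_t'^\top P_K x_t'-x_{t+1}'^\top P_K x_{t+1}'\bigr]=A_{K,K'}(x_t').
\]

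The core is then a purely algebraic verification for a fixed vector $x=x_t'$, with $x_{t+1}'=(A-BK')x$. I would eliminate the term $x^\top P_K x$ using the Bellman equation for $P_K$, namely $P_K=(Q+K^\top R K)+(A-BK)^\top P_K(A-BK)$. The $Q$-terms then combine into $K'^\top R K'-K^\top R K$ and the $P_K$-terms into $(A-BK')^\top P_K(A-BK')-(A-BK)^\top P_K(A-BK)$. Setting $\Delta=K'-K$ and $A-BK'=(A-BK)-B\Delta$, expanding both pieces and collecting terms by degree in $\Delta$ produces a quadratic form. The quadratic part is exactly $x^\top\Delta^\top(R+B^\top P_K B)\Delta\, x$, which matches the second term of $A_{K,K'}$. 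The linear-in-$\Delta$ part collapses, after substituting $M=A-BK$, to $x^\top(E_K^\top\Delta+\Delta^\top E_K)x=2\,x^\top\Delta^\top E_K x$, where the key identity $K^\top R-(A-BK)^\top P_K B=K^\top(R+B^\top P_K B)-A^\top P_K B=E_K^\top$ is precisely what produces the gradient matrix $E_K$. This matches the first term of $A_{K,K'}$, completing the per-step identity and hence the lemma.

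I do not expect a genuine obstacle here, since the statement is entirely computational. The only two points requiring care are (i) justifying the termwise telescoping and the vanishing of the boundary term through the geometric decay $\|x_t'\|_2\le C\rho_0^{\,t}\|x\|_2$, and (ii) the bookkeeping in the linear-in-$\Delta$ terms, where one must recognize the combination $K^\top R-(A-BK)^\top P_K B$ as exactly $E_K^\top$. Identifying $E_K$ in this way is the crux of the computation, and is precisely what ties the advantage decomposition to the natural-gradient direction $E_K$ used in the actor update.
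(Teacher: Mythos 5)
Your proposal is correct and follows essentially the same route as the paper: telescope $x^\top P_K x$ along the trajectory generated by $K'$, express $x^\top P_{K'}x$ as $\sum_{t\ge 0} x_t'^\top(Q+K'^\top RK')x_t'$, and verify the per-step quadratic identity by expanding in $\Delta = K'-K$ and recognizing $RK - B^\top P_K(A-BK) = E_K$. The only cosmetic difference is that you obtain the series for $x^\top P_{K'}x$ by telescoping the Bellman equation (with an explicit geometric-decay justification of the vanishing boundary term), whereas the paper invokes the convergent operator series $\cT_{K'}^\top(Q+K'^\top RK')$ directly; the substance is identical.
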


\begin{proof}
Note that  both $P_K$ and $P_{K'}$ satisfy the Bellman equation specified in \eqref{eq:bellman}.  
Moreover, using the operator $\cT_K^\top $ defined in \eqref{eq:define_operators}, we have  
$P_{K'} = \cT_{K'} ^\top (Q + K'^\top R K')$, which is equivalent to
\#\label{eq:cost11}
x ^\top P_{K'} x  = \sum_{t\geq 0} x^\top  [ ( A - BK')^t ] ^\top\big ( Q + {K'}^\top R K' \big) [ ( A - BK')^t]  x .
\#
By the construction in Lemma \eqref{lemma:cost_diff},  for all $t\geq 0$, we have $( A - BK')^t  x = x_t'$.  
  Thus, by \eqref{eq:cost11} we have 
\#\label{eq:cost12}
x ^\top P_{K'} x   =  \sum_{t\geq 0} {x_t' }^\top \big (  Q + {K'}^\top R K' \big) x_t' =  \sum_{t\geq 0} \big  ({x_t'}^\top Q x_t ' + {u_t'} ^\top R u_t' \big),
\#
where we define $u_t' = -K ' x_t'$ for all $t\geq 0$. Thus, by \eqref{eq:cost12}, we have the following telescoping sum:
\#\label{eq:cost13}
x ^\top P_{K'} x  -  x^\top P_K x & = \sum_{t\geq 0} \bigl [  ({x_t'}^\top Q x_t ' + {u_t'} ^\top R u_t' \big)   +{x_t' }^\top P_K x_t' - {x_t'}^\top P_K {x_t'}  \bigr ]  - {x_0'}^\top P _K {x_0'} \notag \\
& =  \sum_{t\geq 0} \bigl [  ({x_t'}^\top Q x_t ' + {u_t'} ^\top R u_t' \big)   +{x_{t+1}'  }^\top P_K x_{t+1}' - {x_t'}^\top P_K {x_t'}  \bigr ]  . 
\#
Thus, in \eqref{eq:cost13} we write $x ^\top P_{K'} x  -  x^\top P_K x $ as a summation where each term can be written as a quadratic function of $x_t$.
To further simplify \eqref{eq:cost13}, for any $x \in \RR^d$, we have 
\#\label{eq:cost14}
 & x ^\top Q x +  (-K'x)^\top R ( -K'x ) + [ (A -BK')x ]^\top P_K [ (A-BK') x] - x ^\top P_K x   \\
 &\qquad =   x^\top \bigl[ Q+(K'-K+K)^\top R (K'-K+K) \bigr ] x  + \notag  \\
& \qquad \qquad \qquad  x^\top \bigl [ A-BK- B(K'-K)\bigr ] ^\top P_K \bigl [ A -BK-B(K'-K)\bigr ]x - x ^\top P_K x \notag  \\
 & \qquad =   2x^\top(K'-K)^\top \bigl[  (R+ B^\top P_K B) K - B^\top P_K A\bigr ] x + x^\top (K'-K)^\top (R + B^\top P_KB) (K'-K) x . \notag \\ 
 & \qquad = 2x^\top(K'-K)^\top  E_K  x + x^\top (K'-K)^\top (R + B^\top P_KB) (K'-K) x ,\notag 
\#
where $E_K = ( R   +  B^\top P_K B ) K - B^\top P_K A $.
Finally, combining \eqref{eq:cost13} and  \eqref{eq:cost14}, we complete the proof of this lemma.
\end{proof}

In the following lemma, we utilize Lemma \ref{lemma:cost_diff} to show that $J(K)$ is gradient dominated.
 
\begin{lemma}  
[Gradient domination of $J(K)$] \label{lemma:domination} Let $K^*$ be an optimal policy. Suppose $K$ has
finite cost. Then, it holds that
\#\label{eq:lemmadom}
\sigma_{\min}(\Psi) \cdot \|R + B^\top P_KB\| ^{-1} \cdot  \tr (E_K^\top E_K) &  \leq  J(K)-J(K^*)  \notag \\
&    \leq 
 1/ \sigma_{\min} (R)  \cdot  \| \Sigma_{K^*}\| \cdot   \tr (E_K^\top E_K).
\#
  \end{lemma}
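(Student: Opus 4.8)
The plan is to combine the cost-difference identity of Lemma \ref{lemma:cost_diff} with a completing-the-square argument, evaluated at two carefully chosen comparison policies. Using the expression $J(K) = \EE_{x\sim N(0,\Psi_\sigma)}(x^\top P_K x) + \sigma^2\tr(R)$ from \eqref{eq:cost_K2}, for any stable $K'$ I would write $J(K') - J(K) = \EE_{x_0\sim N(0,\Psi_\sigma)}[\sum_{t\geq 0}A_{K,K'}(x_t')]$, where $\{x_t'\}$ is the deterministic trajectory of Lemma \ref{lemma:cost_diff} and the additive constant $\sigma^2\tr(R)$ cancels. Since $x_t' = (A-BK')^t x_0$, the Lyapunov equation \eqref{eq:cov_equ} gives $\sum_{t\geq 0}\EE[x_t'(x_t')^\top] = \Sigma_{K'}$, so the identity collapses to a trace against $\Sigma_{K'}$. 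Writing $M = R + B^\top P_K B \succ 0$ and $\Delta = K'-K$, completing the square yields $A_{K,K'}(x) = x^\top(\Delta + M^{-1}E_K)^\top M (\Delta + M^{-1}E_K)x - x^\top E_K^\top M^{-1}E_K x$, whose first term is a nonnegative quadratic form.

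For the upper bound I would set $K' = K^*$ and discard the nonnegative square term, obtaining $J(K)-J(K^*) \leq \tr(E_K^\top M^{-1}E_K\,\Sigma_{K^*}) \leq \|\Sigma_{K^*}\|\cdot\|M^{-1}\|\cdot\tr(E_K^\top E_K)$; since $M \succeq R$ we have $\|M^{-1}\| \leq 1/\sigma_{\min}(R)$, which matches the claimed bound.

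For the lower bound I would instead choose the one-step improved policy $\tilde K = K - M^{-1}E_K = (R+B^\top P_K B)^{-1}B^\top P_K A$, for which the square term vanishes identically, so $J(K)-J(\tilde K) = \tr(E_K^\top M^{-1}E_K\,\Sigma_{\tilde K})$. Optimality of $K^*$ gives $J(\tilde K)\geq J(K^*)$, hence $J(K)-J(K^*) \geq \tr(E_K^\top M^{-1}E_K\,\Sigma_{\tilde K})$. Using $M^{-1} \succeq \|M\|^{-1}I$ together with $\Sigma_{\tilde K} \succeq \Psi_\sigma \succeq \Psi$ (from \eqref{eq:cov_equ} and $\Psi_\sigma = \Psi + \sigma^2 BB^\top \succeq \Psi$), I would bound this below by $\sigma_{\min}(\Psi)\,\|R+B^\top P_K B\|^{-1}\tr(E_K^\top E_K)$.

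The main obstacle is that this lower-bound argument presupposes $\tilde K$ is stable, so that $\Sigma_{\tilde K}$ and the infinite sum are well-defined; this is precisely the subtlety that must be verified in our stochastic setting. I would establish it by noting that $\tilde K$ minimizes the matrix-valued map $f(K') = Q + K'^\top R K' + (A-BK')^\top P_K(A-BK')$ in the Loewner order, since a direct computation gives $f(K') = f(\tilde K) + (K'-\tilde K)^\top M (K'-\tilde K)$. The Bellman equation \eqref{eq:bellman} yields $f(K) = P_K$, so $f(\tilde K) \preceq P_K$, and because $Q \succ 0$ and $\tilde K^\top R\tilde K \succeq 0$ this forces $(A-B\tilde K)^\top P_K(A-B\tilde K) \preceq P_K - Q \prec P_K$. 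Thus $P_K$ is a strict Lyapunov certificate for $A - B\tilde K$, which gives $\rho(A-B\tilde K) < 1$ and legitimizes the use of $\Sigma_{\tilde K}$ above.
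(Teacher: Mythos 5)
Your proposal is correct and follows essentially the same route as the paper's proof: the cost-difference identity of Lemma \ref{lemma:cost_diff}, completing the square in $A_{K,K'}$, comparison with $K^*$ for the upper bound, and comparison with the one-step policy $\tilde K = K - (R + B^\top P_K B)^{-1} E_K$ for the lower bound. Your additional Lyapunov argument that $\rho(A - B\tilde K) < 1$ (via $f(\tilde K) \preceq f(K) = P_K$ and $Q \succ 0$) closes a step the paper leaves implicit, since invoking $\Sigma_{\tilde K}$ and Lemma \ref{lemma:cost_diff} does require $\tilde K$ to be stable.
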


\begin{proof}
For the upper bound in  \eqref{eq:lemmadom}, bu \eqref{eq:cost_K2} we obtain that 
\#\label{eq:graddom1}
J(K) - J(K^*) = \tr [ ( P_K - P_K^*) \Psi _{\sigma}] = \EE_{x\sim N(0, \Psi_{\sigma})} \bigl [  x^\top( P_K - P_K^*)  x\bigr ] , 
\#
where $ \Psi _{\sigma} = \Psi + \sigma^2 BB^\top$ does not involve $K$ or $K^*$.   Applying Lemma \ref{lemma:cost_diff} to \eqref{eq:graddom1} with $K' = K^*$,  we have 
\#\label{eq:graddom2}
J(K) - J(K^*)  = - \EE_{x_0^* \sim N(0, \Psi_{\sigma})} \biggl [  \sum_{t\geq 0} A_{K, K^*} (x_t^*) \biggr ],
\#
where we define 
$x_t^* = (A - BK^*)^t x_0^*$ for all $t\geq 0$.  Besides, by direct computation, we have 
\#\label{eq:graddom3}
& \EE _{x_0^*\sim N(0, \Psi_{\sigma})}  \biggl [ \sum_{t\geq 0} x^*_t(x^*_t)^\top \biggr ]     \notag \\
& \qquad = \EE _{x\sim N(0, \Psi_{\sigma})} \bigg \{  \sum_{t\geq 0}  (A - BK^*)^t x x^\top  [(A - BK^*)^t]^\top  \bigg \}  = \cT_{K^*} (\Psi_{\sigma}) = \Sigma_{K^*},
\#
where the  operator $\cT_K$ is defined in \eqref{eq:define_operators}. 

Meanwhile, by the definition of $A_{K, K'}$, for any $x\in \RR^d$,  by completing the squares we have 
\#\label{eq:quad_lower_bound}
& A_{K, K'}(x)  = 2x^\top(K'-K)^\top E_K x + x^\top (K'-K)^\top (R + B^\top P_KB) (K'-K) x  \notag \\
& \qquad =  \tr \Bigl\{ xx^\top\bigl [ K'-K+(R + B^\top P_KB)^{-1}E_K \bigr ]^\top (R + B^\top P_KB)
    \bigl[ K'-K+(R + B^\top P_KB)^{-1}E_K \bigr ] \Bigr \} \notag \\
    & \qquad \qquad \qquad  -\tr\bigl [ xx^\top E_K^\top(R + B^\top P_KB)^{-1}E_K\bigr ] \notag \\
    & \qquad \geq  -\tr\bigl [ xx^\top E_K^\top(R + B^\top P_KB)^{-1}E_K\bigr ],
\#
where the equality is attained by 
 $K'=K-(R + B^\top P_KB)^{-1}E_K$.

Thus, combining \eqref{eq:graddom2}, \eqref{eq:graddom3}, and  \eqref{eq:quad_lower_bound}, we obtain that 
\#\label{eq:graddom4}
J(K) - J(K^*)  & \leq   \tr\bigl [ \Sigma_{K^*} E_K^\top(R + B^\top P_KB)^{-1}E_K\bigr ]   \leq \| \Sigma_{K^*}\| \cdot \tr\bigl [ \Sigma_{K^*} E_K^\top(R + B^\top P_KB)^{-1}E_K\bigr ]  \notag \\
& \leq   \| \Sigma_{K^*}\|  \cdot \| (R + B^\top P_KB)^{-1} \| \cdot \tr (E_K^\top E_K) .
\#
Notice that $R + B^\top P_KB \succeq R$ implies  $(R + B^\top P_KB)^{-1} \preceq R^{-1} $. Therefore, by \eqref{eq:graddom4} we obtain that 
$
J(K) - J(K^*)   \leq   1/ \sigma_{\min} (R) \cdot  \| \Sigma_{K^*}\| \cdot  \tr (E_K^\top E_K),
 $
 which establishes the upper bound in  \eqref{eq:lemmadom}.
 
 Furthermore, for the lower bound, since $K'=K-(R + B^\top P_KB)^{-1}E_K$ attains the lower bound in \eqref{eq:quad_lower_bound} and $K^*$ is the optimal policy, similar to \eqref{eq:graddom2} and \eqref{eq:graddom3}, we have 
\$
 & J(K) - J(K^*) \geq J (K) - J(K')  = - \EE_{x_0^* \sim N(0, \Psi_{\sigma})} \biggl [  \sum_{t\geq 0} A_{K, K'} (x_t') \biggr ]  
  \\
 & \qquad = \tr \bigl [   \Sigma_{K'} E_K^\top (R + B^\top P_KB)^{-1} E_K \bigr ] \geq  \sigma_{\min}(\Psi) \cdot \|R + B^\top P_KB\|  ^{-1} \cdot  \tr(E_K^\top E_K),
\$
where in the first equality we define $x_t' = (A - BK') ^t $ for all $t\geq 0$, and the last inequality follows from the fact that $\Sigma_{K'}  \succeq  \Psi\succeq  \sigma_{\min}(\Psi) \cdot I_d$.  Therefore, we conclude the proof of Lemma \ref{lemma:domination}.
\end{proof}
 
 Notice that $K = K^*$ achieves the minimum of $J(K)$.  Lemma \ref{lemma:domination} implies that   
 \$
 J(K)-J(K^*)   
  \leq  \lambda 
  \cdot  \la E_K, E_K\ra,
 \$
 where $\lambda = 1/ \sigma_{\min} (R)  \cdot  \| \Sigma_{K^*}\|$. That is,  the difference of the objective can be bounded by the norm of the natural gradient. Therefore, updating the policy parameter $K$ in the direction of natural gradient $E_K$ yields decreases the objective value. Therefore, we conclude the first step.

    {\noindent \textbf{Step 2.}}   In the second part of the proof, equipped with Lemma \ref{lemma:domination}, we establish the global convergence of the natural actor-critic algorithm. Recall that we assume that the initial policy $\pi_{K_0}$ is stable, which implies that $J(K_0)$ is finite. 
    Moreover, according to Algorithm \ref{algo:ac}, the policy parameters are updated via 
    \#\label{eq:npg_update}
    K_{t+1}    = K_t - \gamma  \cdot \hat E_{K_t} , \qquad \hat E_{K_t} = \hat \Theta_{t }^{22} K_{t} - \hat \Theta_t ^{21},
    \#
    where $\hat \Theta_t$ is the estimator of $\Theta_{K_t}$ returned by Algorithm \ref{algo:gtd}.  
    
    We use mathematical induction to show that $\{ J(K_t) \}_{t\geq 0}$ is a monotone decreasing sequence.  Suppose $J(K_t) \leq J(K_0)$. 
    We define $K_{t+1} '= K_t - \gamma \cdot E_{K_t}$, i.e., $ K_{t+1} '$ is obtained by a single step of natural policy gradient, starting from $K_t$. 
    In the sequel, we use $J(K_{t+1}' )$ to connect $ J(K_t)$ and $J(K_{t+1})$.
    By Lemma     \ref{lemma:cost_diff}, we have 
    \#\label{eq:npg1}
J(  K_{t+1}' ) - J(K_t) & =   \EE_{x \sim N(0, \Psi_{\sigma})} [ x ^\top ( P_{ K_{t+1}' } - P_{K_{t}}) x ] \notag \\
& = -2\gamma \cdot  \tr \bigl( \Sigma_{K_{t+1}'} \cdot E_{K_t} ^\top E_{K_t} \bigr )  + \gamma ^2 \cdot \tr \bigl [\Sigma_{K_{t+1}' } \cdot  E_{K_t} ^\top ( R + B^\top  P_{K_t}  B) E_{K_t}   \bigr ] \notag \\
& = -2\gamma  \cdot  \tr \bigl( \Sigma_{K_{t+1}'} \cdot E_{K_t} ^\top E_{K_t} \bigr )  + \gamma ^2 \cdot \| R + B^\top  P_{K_t}  B \| \cdot   \tr \bigl( \Sigma_{K_{t+1}'} \cdot E_{K_t} ^\top E_{K_t} \bigr )  .
\# 
When $\gamma $ is sufficiently small such that 
\#\label{eq:stepsize_ac}
\gamma \cdot  \bigl [ \| R \| + \sigma_{\min}^{-1} (\Psi) \cdot \| B \|^2 \cdot J(K_0)  \bigr  ] \leq 1,
\#  by triangle inequality, we have 
\#\label{eq:trash71}
\gamma \cdot \| R + B^\top  P_{K_t}  B \| \leq  \gamma \cdot \bigl [ \| R \|  + \| B\|^2 \cdot \| P_{K_t} \|  \bigr ]  \leq \gamma \cdot  \bigl [ \| R \| + \sigma_{\min}^{-1} (\Psi) \cdot \| B \|^2 \cdot J(K_0)  \bigr  ]  < 1,
\#
where the second inequality follows from Lemma \ref{lemma:bound_mats} and the induction assumption that $J(K_t) \leq J(K_0)$, and the   last inequality follows from \eqref{eq:stepsize_ac}. Thus, combining \eqref{eq:npg1} and \eqref{eq:trash71}, we have 
 \#\label{eq:npg2}
 J(K_{t+1}' ) - J(K_t)  & \leq - \gamma \cdot  \tr \bigl( \Sigma_{K_{t+1}' } \cdot E_{K_t} ^\top E_{K_t} \bigr ) \leq -  \gamma \cdot \sigma_{\min} (\Psi) \cdot \tr \bigl(   E_{K_t} ^\top E_{K_t} \bigr )   , \notag \\
 & \leq  - \gamma \cdot \sigma_{\min} (\Psi) \cdot \sigma_{\min}(R) \cdot \| \Sigma_{K^*} \|^{-1} \cdot \bigl [ J(K_t) - J(K^*) \bigr ].
 \# 
 where the third inequality follows from the fact that $\Sigma_{K_{t+1}' } \succeq \Psi   $, and the last inequality follows from  Lemma \ref{lemma:domination}.  Note that \eqref{eq:npg2} implies that $J(K_{t+1}') \leq J(K_t) \leq J(K_0)$.
 
 Furthermore,  by the difference between $J(K_{t+1})$ and $J(K_{t+1}')$ can be bounded by 
 \#\label{eq:npg3} 
 \big|  J(K_{t+1} ) - J(K_{t+1}') \big| &  = \bigl | \tr \bigl [ (P_{K_{t+1} } - P_{K_{t+1}' }) \cdot \Psi_{\sigma }\bigr ] \bigr | \leq \| \Psi_{\sigma} \|_{\fro} \cdot    \bigl  \| P_{K_{t+1} } - P_{K_{t+1}' } \bigr \|  \notag \\
 & \leq \bigl [ \| \Psi\|_{\fro} \cdot + \sigma^2 \cdot \| B \|_{\fro}^2  \bigr ] \cdot \bigl  \| P_{K_{t+1} } - P_{K_{t+1}' } \bigr \| .   
 \#
 Now we utilize the following Lemma, obtained from \cite{fazel2018global}, to construct and upper bound for  $\| P_{K_{t+1} } - P_{K_{t+1}' } \|$. 
 
\begin{lemma} [Perturbation of $P_K $]\label{lemma:SigmaK_perturbation}
	Suppose $\pi_{K'}$ is a small perturbation of $\pi_K$ in the sense that 
\#\label{eq:trash11}
	\|K'-K\|\leq \sigma_{\min}(\Psi) / 4 \cdot \| \Sigma_K \|^{-1}
	 \|B\|^{-1} \cdot ( \|A-B K\|+ 1 )  ^{-1},
\#
then we have 
\#\label{eq:trash1111}
 \| P_{K'} - P_K \| & \leq 6 \sigma_{\min}^{-1}(\Psi) \cdot   \|\Sigma_K \| \cdot \| K \| \cdot \|R\|  \notag \\
 & \qquad \qquad  \cdot \bigl ( \| K \| \cdot\| B \| \cdot \| A - BK \| + \| K \| \cdot \| B \| + 1 \bigr ) \cdot \| K - K'\| .
\#
\end{lemma}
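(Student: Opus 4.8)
The plan is to convert the two Bellman equations \eqref{eq:bellman} for $P_K$ and $P_{K'}$ into a single Lyapunov-type identity for the perturbation $P_{K'}-P_K$, and then to bound that identity by the product of a resolvent-operator factor and a term that is first order in $K'-K$. Throughout I would write $L=A-BK$ and $L'=A-BK'$, and recall $\Sigma_K=\cT_K(\Psi_{\sigma})$ together with $P_K=\cT_K^\top(Q+K^\top R K)$, where the operators $\cT_K,\cT_K^\top$ are as in \eqref{eq:define_operators}.

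First I would subtract the Bellman equation for $K$ from that for $K'$, and add and subtract the cross term $L'^\top P_K L'$. This yields the fixed-point identity $P_{K'}-P_K=\Delta+L'^\top(P_{K'}-P_K)L'$, where $\Delta=(K'^\top R K'-K^\top R K)+(L'^\top P_K L'-L^\top P_K L)$ collects the two driving terms. The hypothesis \eqref{eq:trash11} is exactly the quantitative smallness that guarantees $\rho(A-BK')<1$ (and, via the companion covariance-perturbation estimate in \cite{fazel2018global}, the bound $\|\Sigma_{K'}\|\le 2\|\Sigma_K\|$). Granting this stability, the identity unrolls as a convergent series and gives $P_{K'}-P_K=\cT_{K'}^\top(\Delta)$.

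Next I would bound $\|\Delta\|$, using that each of its two pieces telescopes into terms linear in $K'-K$: I would write $K'^\top R K'-K^\top R K=(K'-K)^\top R K'+K^\top R(K'-K)$ and, since $L'-L=-B(K'-K)$, $L'^\top P_K L'-L^\top P_K L=(L'-L)^\top P_K L'+L^\top P_K(L'-L)$. Taking operator norms and applying the triangle inequality gives $\|\Delta\|\le C\cdot\bigl[\|R\|\,(\|K\|+\|K'\|)+\|B\|\,\|P_K\|\,(\|L\|+\|L'\|)\bigr]\cdot\|K'-K\|$. I would then invoke Lemma~\ref{lemma:bound_mats} to control $\|P_K\|$, bound $\|K'\|\le\|K\|+\|K'-K\|$ and $\|L'\|\le\|A-BK\|+\|B\|\,\|K'-K\|$, and use \eqref{eq:trash11} to absorb the residual $\|K'-K\|$ into constants. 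Collecting the surviving factors produces the bracketed polynomial $\|K\|\,\|R\|\,(\|K\|\,\|B\|\,\|A-BK\|+\|K\|\,\|B\|+1)$ appearing in \eqref{eq:trash1111}.

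Finally I would pass from $\Delta$ to $P_{K'}-P_K=\cT_{K'}^\top(\Delta)$ through an operator-norm bound on the resolvent $\cT_{K'}^\top$. Since $\cT_{K'}^\top$ is a positive, hence monotone, linear map, its action is controlled by its value on the identity, and the operator estimates of \cite{fazel2018global} bound it in terms of $\sigma_{\min}^{-1}(\Psi)\,\|\Sigma_{K'}\|$; the companion bound $\|\Sigma_{K'}\|\le 2\|\Sigma_K\|$ then replaces $\Sigma_{K'}$ by $\Sigma_K$. Substituting the estimate for $\|\Delta\|$ and tracking the absolute constants yields the factor $6$ in \eqref{eq:trash1111}. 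I expect this last step to be the main obstacle: the delicate point is not the algebra of $\Delta$ but guaranteeing that the perturbed policy stays stable and that its resolvent $\cT_{K'}^\top$ (equivalently the stationary covariance $\Sigma_{K'}$) stays within a constant factor of the unperturbed one. This is precisely where the quantitative hypothesis \eqref{eq:trash11} is indispensable, and it is the reason the estimate is stated only in a controlled neighborhood of $K$, whereas the telescoping of $\Delta$ and the attendant norm bookkeeping are routine.
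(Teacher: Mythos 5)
Your structural skeleton is sound, and it is essentially the mechanism behind the result: subtracting the two Bellman equations \eqref{eq:bellman} does give $P_{K'}-P_K=\Delta+(A-BK')^\top(P_{K'}-P_K)(A-BK')$, hence $P_{K'}-P_K=\cT_{K'}^\top(\Delta)$ once $\rho(A-BK')<1$, and the resolvent estimate $\|\cT_K^\top(X)\|\le \sigma_{\min}^{-1}(\Psi)\cdot\|\Sigma_K\|\cdot\|X\|$ is exactly the ingredient the paper imports from Lemma 17 of \cite{fazel2018global}. Be aware, though, that the paper never carries this computation out: its ``proof'' is a citation of Lemmas 17 and 24 of \cite{fazel2018global} together with a quoted conclusion, so you are reconstructing an argument the paper does not write down. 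Two deviations from the cited route matter quantitatively: the stated bound is phrased through the resolvent at the \emph{unperturbed} $K$, whereas your identity naturally produces $\cT_{K'}^\top$ and therefore needs the companion estimate $\|\Sigma_{K'}\|\le 2\|\Sigma_K\|$, costing a factor of $2$ that leaves the constant $6$ unverified.

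The genuine gap is in the step you describe as routine. Your estimate $\|\Delta\|\le [\,\|R\|(\|K\|+\|K'\|)+\|B\|\cdot\|P_K\|\cdot(\|A-BK\|+\|A-BK'\|)\,]\cdot\|K'-K\|$ carries the factor $\|P_K\|$, and $\|P_K\|$ neither appears in nor can be absorbed into the bracket $\|K\|\cdot\|R\|\cdot(\|K\|\,\|B\|\,\|A-BK\|+\|K\|\,\|B\|+1)$ of \eqref{eq:trash1111}: controlling it via Lemma \ref{lemma:bound_mats} drags in $J(K)/\sigma_{\min}(\Psi)$, while controlling it via $\|P_K\|\le\|\cT_K^\top\|\,(\|Q\|+\|R\|\,\|K\|^2)$ drags in $\|Q\|$ and a second resolvent factor; neither reduces to the stated polynomial. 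So ``collecting the surviving factors'' does not land on \eqref{eq:trash1111}. In fact the printed bracket carries an overall factor of $\|K\|$ and hence vanishes at $K=0$, where $P_{K'}\ne P_K$ in general, which signals that no bookkeeping of your (correct) identity can reproduce \eqref{eq:trash1111} exactly as printed. What your argument honestly delivers is a first-order bound of the same shape with $\|P_K\|$ (or a surrogate for it) appearing explicitly; you should either state it that way or, as the paper does, defer the constant-tracking entirely to \cite{fazel2018global}.
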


\begin{proof}
	This lemma is a slight modification of Lemma 24 in  \cite{fazel2018global}.   Here we sketch the  proof. See \cite[Lemmas 17 and 24]{fazel2018global}   for a detailed proof.
	
	Recall that we define operator $\cT_K$ in \eqref{eq:define_operators}.
	The operator norm of $\cT_K$ is defined as $\| \cT_K \| \leq \sup_{\Omega} \|\cT_K(\Omega )\| / \| \Omega \|$, where the supremum is taken over all symmetric matrices.
	 As shown in  Lemma 17 in \cite{fazel2018global},   we have $\|\cT_K \| \leq \sigma_{\min}^{-1}(\Psi) \cdot \| \Sigma_K \| .$   Moreover, under the condition in \eqref{eq:trash11},  in the proof of Lemma 24 in \cite{fazel2018global}, it is shown that 
	 \$
	 \| P_{K'} - P_K \| & \leq 6  \|\cT_K \|  \cdot \| K \| \cdot \|R\| \cdot \bigl ( \| K \| \cdot\| B \| \cdot \| A - BK \| + \| K \| \cdot \| B \| + 1 \bigr ) \cdot \| K - K'\| .
	 \$
	 Combining this with the upper bound on $\| \cT_K \|$, we conclude the proof.
	\end{proof}
To use this lemma, we need to verify \eqref{eq:trash11}. That is, 
\#\label{eq:trash81}
4 \| K_{t+1} - K_{t+1} ' \| \cdot ( 1+ \| A - B K_{t+1}' \| ) \cdot   \| B \| \cdot \| \Sigma_{K_{t+1}'} \|  \leq \sigma_{\min}(\Psi).
\#
By the definition of $K_{t+1} $ and $K_{t+1}'$, we have 
\#\label{eq:trash82} 
  \| K_{t+1} - K_{t+1} ' \| = \gamma \cdot \| \hat E_{K_t} - E_{K_t} \| \leq \gamma \cdot  \|  \hat \Theta_t - \Theta_{K_t} \|_{\fro} \cdot ( 1+ \| K_t \| ) ,
\#
where $\hat E_{K_t}$ is defined in \eqref{eq:npg_update}. 
 Plugging \eqref{eq:trash82} into the left-hand side of  \eqref{eq:trash81}, we  obtain that 
 \#\label{eq:trash83} 
 & 4 \| K_{t+1} - K_{t+1} ' \| \cdot ( 1+ \| A - B K_{t+1}' \| ) \cdot   \| B \| \cdot \| \Sigma_{K_{t+1}'} \|  \notag  \\
 &\qquad \leq4  \gamma \cdot  \|  \hat \Theta_t - \Theta_{K_t} \|_{\fro} \cdot ( 1+ \| K_t \| )  \cdot ( 1+ \| A - B K_{t+1}' \| ) \cdot   \| B \| \cdot \| \Sigma_{K_{t+1}'} \|  .
 \#
Utilizing Lemma \eqref{lemma:bound_mats} and the fact that $J(K_{t+1} ') \leq J(K_0)$, 
we have 
\#\label{eq:some_ineq1}
\| \Sigma_{K_{t+1}'} \| \leq J(K_{t+1} ') /  \sigma_{\min}(Q)  \leq J(K_0) /  \sigma_{\min}(Q)  .
\#
 In addition, by triangle inequality, we have 
 \#\label{eq:some_ineq2}
 & \| A - BK_{t+1}' \| \leq  \| A - B K_{t} \| + \gamma \cdot  \| B \| \cdot \| E_{K_t} \| \notag \\
 &\qquad  \leq  \| A - B K_{t} \| + \gamma \cdot  \| B \| \cdot \| \Theta_{K_t} \| \cdot ( 1+ \| K_t\| ).
 \#
 By the definition of $\Theta_K$ in \eqref{eq:ThetaK}, we have 
 \#\label{eq:some_ineq3}
  & \| \Theta_{K_t} \|  \leq \| Q \| + \| R \| +( \| A \| _{\fro}+ \| B \| _{\fro})^2 \cdot \|P_{K_t } \| \notag \\
  & \qquad \leq   \| Q \| + \| R \| +( \| A \| _{\fro}+ \| B \| _{\fro})^2 \cdot J(K_0) / \sigma_{\min} (\Psi),
 \#
 where the last inequality follows from  Lemma \eqref{lemma:bound_mats}  and the induction assumption.
 Furthermore, by triangle inequality, it holds that 
 \#\label{eq:some_ineq4}
\| K_{t+1} \| &  \leq \| K_t\| + \gamma \cdot \| E_{K_t} \| \leq  \| K_t\| + \gamma \cdot \| \Theta_{K_t} \| \cdot ( 1+ \| K_t\| )  \notag \\
 &   \leq  \| K_t\| + \gamma \cdot \bigl [ \| Q \| + \| R \| +( \| A \| _{\fro}+ \| B \| _{\fro})^2 \cdot J(K_0) / \sigma_{\min} (\Psi) \bigr ] \cdot   ( 1+ \| K_t\| ).
 \#
 
In the sequel, we set 
\#
\label{eq:stepsize}
\gamma = \bigl [ \| R \| + \sigma_{\min}^{-1} (\Psi) \cdot \| B \|^2 \cdot J(K_0)  \bigr  ] ^{-1}.
\#
Note that we assume  that $\| Q \|$, $\| R \|$, $\|A \|$, $\|B \|$, $\sigma_{\min}(Q )$, $\sigma_{\min}(R)$ are all constants. Combining   \eqref{eq:trash83}, \eqref{eq:some_ineq1}, \eqref{eq:some_ineq2},  and \eqref{eq:some_ineq3}, we conclude that there exists a polynomial $\Upsilon_1(\cdot , \cdot )$ such that 
     \#\label{eq:one_side}
     4 \| K_{t+1} - K_{t+1} ' \| \cdot ( 1+ \| A - B K_{t+1}' \| ) \cdot   \| B \| \cdot \| \Sigma_{K_{t+1}'} \|  \leq \Upsilon_1\bigl[  \| K_t \|, J(K_0)  \bigr ]  \cdot \| \hat \Theta_t - \Theta_{K_t} \|_{\fro} .
     \#

Furthermore, for the right-hand side of \eqref{eq:trash1111},  combining  \eqref{eq:trash82},   \eqref{eq:trash83}, \eqref{eq:some_ineq1}, \eqref{eq:some_ineq2}, \eqref{eq:some_ineq3}, and \eqref{eq:some_ineq4}. we  conclude that there exists a polynomial $\Upsilon_2(\cdot , \cdot )$ such that 
\#\label{eq:trash84}
& \bigl [ \| \Psi\|_{\fro} \cdot + \sigma^2 \cdot \| B \|_{\fro}^2  \bigr ]  \cdot 6 \sigma_{\min}^{-1}(\Psi) \cdot   \|\Sigma_{K_{t+1}'}  \| \cdot \| K_{t+1'}  \| \cdot \|R\|  \notag \\
 & \qquad \qquad \qquad  \qquad   \cdot \bigl ( \| K_{t+1}' \| \cdot\| B \| \cdot \| A - BK_{t+1}' \| + \| K_{t+1}' \| \cdot \| B \| + 1 \bigr ) \cdot \| K _{t+1} - K_{t+1}'\|  \notag \\
 & \qquad \leq  \Upsilon_2\bigl[  \| K_t \|, J(K_0)  \bigr ] \cdot \| \hat \Theta_t - \Theta_{K_t} \|_{\fro}.
\#
 
 Meanwhile, in Theorem \ref{thm:pe} we have shown that, there exists a polynomial $\Upsilon_3(\cdot , \cdot )$  such that, for $T$ sufficiently large, Algorithm \ref{algo:gtd} with $T$ iterations returns an estimator $\hat \Theta_t$ for $\Theta_{K_t}$ such that 
 \#\label{eq:apply_pe}
 \| \hat \Theta_t - \Theta_{K_t} \|_{\fro}  \leq \frac{\Upsilon_3\bigl[  \| K_t \|, J(K_0)  \bigr ] }{\kappa_{K_t}^* \cdot \sqrt{    (1- \rho) } }\cdot \frac{\log ^3 T}{T^{1/4}} 
 \#
 holds with probability at least $1- T^{-4}$, where $\rho \in (\rho( A - BK_t) , 1)$ and $\kappa_{K_t}^*$ is specified in Lemma \ref{lemma:pe_mat1}, which depends only on $\rho$, $\sigma$, and $\sigma_{\min}(\Psi)$.  Notice that $\log ^3 T  \cdot T^{-1/4} \leq T^{-1/5}$ for $T$ sufficiently large. Therefore, in the GTD algorithm for estimating $\Theta_{K_t}$, we set the number of iterations $T_t$ sufficiently large such that 
 \#
&  \Upsilon_1\bigl[  \| K_t \|, J(K_0)  \bigr ] \cdot  \Upsilon_3\bigl[  \| K_t \|, J(K_0)  \bigr ] \cdot  {\kappa_{K_t}^*}^{-1}    \cdot (1- \rho)  ^{-1/2} \cdot T_t^{-1/5}  \leq \sigma_{\min}(\Psi ),\notag  \\
&  \Upsilon_2\bigl[  \| K_t \|, J(K_0)  \bigr ] \cdot \Upsilon_3\bigl[  \| K_t \|, J(K_0)  \bigr ]  \cdot {\kappa_{K_t}^*}^{-1}     \cdot (1- \rho)  ^{-1/2} \cdot T_t^{-1/5} \notag \\
& \qquad \qquad \qquad   \leq \epsilon /2 \cdot  \sigma_{\min} (\Psi) \cdot \sigma_{\min}(R) \cdot \| \Sigma_{K^*} \|^{-1} \label{eq:set_innerT}
 \#
 hold simultaneously.  For such a $T_t$, combining \eqref{eq:one_side} and \eqref{eq:apply_pe}, we conclude that  \eqref{eq:trash81} holds.  Lemma \ref{lemma:SigmaK_perturbation}  implies that \eqref{eq:trash1111} is true. 
 Combining \eqref{eq:npg3}, \eqref{eq:trash1111}, \eqref{eq:trash84}, and  \eqref{eq:apply_pe}, we conclude that 
 \#\label{eq:obj_diff_final}
 \big|  J(K_{t+1} ) - J(K_{t+1}') \big| \leq  \epsilon /2 \cdot  \sigma_{\min} (\Psi) \cdot \sigma_{\min}(R) \cdot \| \Sigma_{K^*} \|^{-1}
 \#
 holds with probability at least $1 - T_t^{-4}$.  
 Thus, when $J(K_t) - J(K^*)  > \epsilon$, combining  \eqref{eq:npg2} and \eqref{eq:obj_diff_final} we have 
 \$
 J(K_{t+1} ) - J( K_t) \leq - \epsilon /2  \cdot \gamma \sigma_{\min} (\Psi) \cdot \sigma_{\min}(R) \cdot \| \Sigma_{K^*} \|^{-1}  < 0.
 \$
 Therefore, we have shown that, as long as $J(K_t) - J(K^*) \geq \epsilon$,  $J(K_{t+1} ) <J( K_t) $ holds with probability at least $1- T_t^{-1/4}$. 
 
 Meanwhile, \eqref{eq:npg2} implies that, 
 \$
 J(K_{t+1}' ) - J(K^* ) \leq \bigl [ 1 - \gamma \cdot \sigma_{\min} (\Psi) \cdot \sigma_{\min}(R) \cdot \| \Sigma_{K^*} \|^{-1}  \bigr ] \cdot \bigl [ J(K_{t } ) - J(K^* )   \bigr ]
 \$
 By \eqref{eq:obj_diff_final}, when $J(K_t) - J(K^*) \geq \epsilon$, with probability $1 - T_t^{-4}$, we have 
 \$
 J(K_{t+1}) - J(K^* )  \leq \bigl [ 1 - \gamma / 2\cdot \sigma_{\min} (\Psi) \cdot \sigma_{\min}(R) \cdot \| \Sigma_{K^*} \|^{-1}  \bigr ] \cdot \bigl [ J(K_{t } ) - J(K^* )   \bigr ],
 \$
 which shows that, in terms of the policy parameter, natural actor-critic algorithm converges linearly. Specifically, with 
 \#\label{eq:total_num_iter}
 N \geq  2 \| \Sigma_{K^*} \|  / \gamma \cdot \sigma_{\min}^{-1} (\Psi) \cdot \sigma_{\min}^{-1} ( R)   \cdot \log \bigl \{ 2 [   J(K_0) - J(K^*) ]  / \epsilon \bigr \} 
 \#  policy updates, we  have $J(K_N) - J(K^*) \leq \epsilon$ with high-probability,   
 where $\gamma $ is specified in \eqref{eq:stepsize}.
 
 Finally, it remains to determine $T_t $ for all $t \in [N]$. Notice that $T_t$ satisfies the two inequalities in \eqref{eq:set_innerT}. Thus, we set 
 \$T_t  \geq \Upsilon_4[ \| K_t \|, J(K_0) ] \cdot {\kappa_{K_t}^*}^{-5} \cdot  (\Xi_{K_t}) \cdot \bigl[  1- \rho( A - BK_t)  \bigr ]^{-5/2} \cdot \epsilon^{-5} \$ for some polynomial function $\Upsilon_4(\cdot , \cdot)$. With such a $T_t$, the fail probability $T_t^{-4} \leq \epsilon^{-20}$. Notice that the total number of iterations depends on $\epsilon $ only through $\log (1/ \epsilon)$. Thus, the total fail probability can be bounded by $\epsilon^{10}$. Therefore, we conclude the proof.
  \end{proof}

\section{Conclusion}
For  linear quadratic regulator  with ergodic cost, we 
 propose an online natural actor-critic algorithm with GTD  policy evaluation updates. The proposed algorithm is shown to find the optimal policy with  linear rate of convergence.
 Our results provide nonasymptotic theoretical justifications for actor-critic methods with function approximation, which have received tremendous empirical success recently. 
 A future direction is to extend our analysis to  linear-quadratic-Gaussian control problems \citep{kirk1970optimal}, which seems to be the simplistic model of partially observable Markov decision process.
 Another future direction is to develop model-free reinforcement learning methods for linear-quadratic dynamic games \citep{basar1999dynamic}, a classical example of  multi-agent reinforcement learning.

\clearpage

\appendix{}

\section{Off-Policy GTD Algorithm}

In Algorithm \ref{algo:off-policy} we present the details of the off-policy GTD algorithm for policy evaluation in the ergodic setting. This  algorithm can be  applied to general ergodic MDPs and is able to handle data generated from a Markov process. See \ref{sec:offac} for the derivation of this algorithm.

\begin{algorithm} [ht]
	\caption{Off-Policy Gradient-Based Temporal-Difference Algorithm for Policy Evaluation} 
	\label{algo:off-policy} 
	\begin{algorithmic} 
		\STATE{{\textbf{Input:}} Policy $\pi_K$, number of iterations $T$,  and  stepsizes   $\{ \alpha_t\}_{t\in [T] }$, the behavior policy $\pi_b$ and its stationary distribution $\rho_b$.}  
		\STATE{{\textbf{Output:} Estimators $\hat J$ and  $\hat \Theta $ of $J(K)$ in \eqref{eq:cost_K2} and $\Theta_K$  in \eqref{eq:ThetaK}, respectively.}}
	 	\STATE{Initialize the primal  and dual variables by $\vartheta_0 \in \cX_{\Theta} $ and $\omega_0\in \cX_{\Omega} $, respectively.}
	 	\STATE{Sample the  initial state $x_0\in \RR^d$ from the stationary distribution  $\rho_b$. Take action $u_0\sim \pi_b(\cdot \given x_0)$ and obtain the reward $c_0$ and the next state $x_1$.}
		\FOR{$t= 1 ,  2, \ldots,  T $}
		\STATE{Take action $u_{t}$ according to policy $\pi_K$, observe the reward $c_t$ and the next state $x_{t+1}$.}
		\STATE{Compute the TD-error $ \delta_t = \vartheta_{t-1}^1  - c_{t-1} +  [ \phi(x_{t-1}, u_{t-1}  ) - \tau_K(x_t, u_t) \cdot \phi(x_{t}, u_{t} )] ^\top \vartheta_{t-1} ^2$. }
		\STATE{Update the primal variable $\vartheta$ by 
		\$   \vartheta_{t} ^1 & = \vartheta_{t-1} ^1 -  \alpha _t   \cdot [ \omega_{t-1} ^1 +   \phi(x_{t-1}, u_{t-1}) ^\top \omega_{t-1} ^2] , \\
		 \vartheta_{t} ^2& = \vartheta_{t-1}^2 -   \alpha _t   \cdot [  \phi(x_{t-1}, u_{t-1} )- \tau_K (x_t, u_t) \cdot \phi(x_t, u_t )]  \cdot [  \phi(x_{t-1}, u_{t-1}) ^\top \omega_{t-1}^2 + \omega_{t-1}^1] . 
		\$} 
		\STATE{Update  the dual variable $\omega$ by   
		\$  \omega_{t}^1  &= ( 1-\alpha_t ) \cdot  \omega_t ^1 + \alpha_t \cdot \bigl\{ \vartheta_{t-1}^1+  [  \phi(x_{t-1}, u_{t-1} )- \tau_K (x_t, u_t) \cdot \phi(x_t, u_t )]  ^\top \vartheta_{t-1}^2  - c_{t-1}\bigr \}, \\
		  \omega_{t}^2 & = ( 1 - \alpha_t ) \cdot \omega_t ^2 + \alpha_t \cdot \delta _t \cdot \phi(x_{t-1}) .
		\$} 
		\STATE{Project $\vartheta_t $ and $\omega_t$ to $\cX_{\Theta}$ and $\cX_{\Omega}$, respectively.} 
		\ENDFOR
		\STATE{Define $\hat \vartheta =(\hat \vartheta^1, \hat \vartheta^2)  =   (  \sum_{t=1}^{T} \alpha_t \cdot \vartheta_t )/ (  \sum_{t=1}^{T} \alpha _t ) $ and $\hat \omega =  (  \sum_{t=1}^{T} \alpha_t \cdot \omega_t ) / (  \sum_{t=1}^{T} \alpha _t ) $}.
		 \STATE{Return $\hat \vartheta^1$ and $\hat \Theta = \smat(\hat \vartheta^2)$ as the estimators  of  $J(K)$ and  $\Theta_K$, respectively.}
	\end{algorithmic}
\end{algorithm}

\section{Proofs of the Auxiliary Results}
In this section, we provides the proofs for Proposition \ref{prop:pg} and Lemma  \ref{lemma:pe_mat1}. 

\subsection{Proof of Proposition \ref{prop:pg}} \label{proof:prop:pg}
\begin{proof}
	We first establish \eqref{eq:cost_K2}. 
	Note that under $\pi_K$, we can write $u_t$  as $ -K x_t + \sigma \cdot \eta_t$, where $ \eta_t \sim N(0,I_d)$. 
	This implies that, for all $\geq 0$, we have 
	\#\label{eq:11}
	\EE [  c(x_t, u_t ) \given  x_t ]&  =   x_t^\top Q x_t + \EE _{\eta_t\sim N(0, I_d)} [ ( -K x_t + \sigma \cdot \eta_t) ^\top R( -K x_t + \sigma \cdot \eta_t)   ]  \notag \\
	& = x_t^\top ( Q + K^\top R K ) x_t + \sigma^2 \cdot \tr(R).
	\#
	Thus,  
	combining \eqref{eq:11} and  the definition of $J(K)$ in \eqref{eq:cost}, we have 
	\#\label{eq:new_cost1}
	J(K) & =  \lim_{T \rightarrow \infty }  \EE \biggl  \{ \frac{1}{T}  \sum_{t\geq 0} ^T\EE [  c(x_t, u_t ) \given  x_t] \biggr \}  =  \lim_{T \rightarrow \infty }  \EE \biggl  \{ \frac{1}{T}  \sum_{t\geq 0} ^T [ x_t ^\top ( Q + K^\top R K ) x_t + \sigma^2 \cdot \tr(R) ]  \biggr \}  \notag \\
	& = \EE_{x \sim\rho_K } [ x ^\top ( Q + K^\top R K )  x ]   + \sigma^2 \cdot \tr(R)  = \tr \bigl [  ( Q+ K^\top R K ) \Sigma_K \bigr ] + \sigma^2 \cdot \tr(R), 
	\#
	where the third inequality in \eqref{eq:new_cost1} holds because the limiting distribution of $\{ x_t\}_{t\geq 0}$ is $\rho_K$.
	
	It remains to establish the second equality in \eqref{eq:cost_K2}.  To this end,  for $K \in \RR^{k\times d} $ such that $\rho(A - BK ) < 1$, we define operators we define $\cT_K $ and $\cT_K^\top$ by
	\#\label{eq:define_operators}
	\cT_K(\Omega ) = \sum_{t\geq 0} ( A - BK)^t \Omega \bigl[ ( A - BK)^t \bigr ]^\top, \qquad \cT_K^\top (\Omega ) = \sum_{t\geq 0} \bigl[ ( A - BK)^t \bigr ]^\top \Omega ( A - BK)^t, 
	\#
	where $\Omega \in \RR^{d\times d}$
	is positive definite. 
	By definition, $\cT_K(\Omega) $ and   $\cT_K^\top (\Omega)$ satisfy   Lyapunov equations
	\#
	\cT_K(\Omega)   & = \Omega + ( A- BK)  \cT_K(\Omega)   ( A- BK)^\top, \label{eq:new_bellman}\\
	\cT_K^\top (\Omega)  & = \Omega + ( A- BK) ^\top  \cT_K^\top (\Omega)   ( A- BK) ,\label{eq:new_bellman2}
	\#
	respectively. 
	Moreover, for any positive definite matrices $\Omega_1, \Omega_2$,  since $\rho(A - BK) < 1$,  we have 
	\#
	\tr [\Omega_1 \cdot  \cT_K (\Omega_2)  ] &  = \sum_{t\geq 0}   \tr \bigl \{   \Omega_1     ( A - BK) ^t \Omega_2 [ (A - BK)^t ] ^\top \bigr \} \notag \\
	&  =  \sum_{t\geq 0}   \tr \bigl \{ [ (A - BK)^t ] ^\top    \Omega_1     ( A - BK) ^t \Omega_2  \bigr \}  = \tr [ \cT_K^\top (\Omega_1) \cdot \Omega_2]. \label{eq:relation1}
	\# 
	Meanwhile, 	 
	by combining 
	\eqref{eq:cov_equ},  \eqref{eq:bellman}, \eqref{eq:new_bellman}, and \eqref{eq:new_bellman2},  we have $\Sigma_K = \cT_K(\Psi_{\sigma}) $ and$P_K = \cT^\top _K ( Q + K^\top R K ).$ Thus, \eqref{eq:relation1} implies that    
	\$
	\tr \bigl [( Q + K^\top R K ) \cdot \Sigma_K \bigr ] = \tr \bigl [( Q + K^\top R K ) \cdot \cT_K(\Psi_{\sigma})  \bigr ]  = \tr \bigl [\cT^\top_K  ( Q + K^\top R K  ) \cdot \Psi_{\sigma}  \bigr ]  = \tr( P_K \Psi_{\sigma}). 
	\$
	Combining this equation with \eqref{eq:new_cost1}, we establish the second equation of \eqref{eq:cost_K2}. 
	
	In the following, we establish the value functions. In the 
	setting of  LQR,
	the state-value function $V_K$ is given by 
	\#\label{eq:vk_def}
	V_K(x)&= \sum_{t=0}^\infty \bigl \{ \EE [   c(x_t, u_t ) \given x_0 = x, u_t = -Kx_t + \sigma\cdot \eta _t ] - J(K) \bigr\} \notag \\
	& =  \sum_{t=0}^\infty \bigl \{ \EE[ x_t ^\top ( Q+ K^\top R K ) x_t] + \sigma^2 \cdot \tr (R) - J( K) \} .
	\#
	Combining the linear dynamics in \eqref{eq:new_dyn} and \eqref{eq:vk_def}, we see that 
	$V_K$ is a quadratic function, which is   denoted by 
	$V_k(x) = x^\top P_K x + \alpha_K$, where both $P_K $ and $\alpha_K$ depends on $K$. Note that $V_K$ satisfies   the Bellman equation 
	$$
	V_K(x) = \EE_{u\sim \pi_K} [c(x, u)]  - J(K) + \EE[ V_K(x') \given x],
	$$ 
	where $x ' $ is the next state given $(x, u)$.
	Thus, for any $x \in \RR^d$, we have 
	\$
	x^\top P_K x  =  x ( Q + K^\top R K) x  + x ^\top (A - BK)^\top P_K (A- BK) x.   
	\$
	Thus, $P_K$ is the unique positive definite solution to the Bellman equation in \eqref{eq:bellman}.
	Meanwhile, 
	since $\EE_{x\sim \rho_K} [ V_K(x)] = 0 $, we have $\alpha_K = - \tr( P_K \Sigma_K)$.
	Hence, we establish \eqref{eq:vk}.
	
	Furthermore, for any state-action pair $(x,u)$, we have 
	\$
	Q_K(x,u) &  = c(x, u) - J(K) + \EE [ V_K(x') \given x, u ]  \notag \\
	& =  c(x, u) - J(K) +  ( A x + Bu)^\top P_K (Ax + Bu)   + \tr ( P_K \Psi  )  - \tr( P_K \Sigma_K )   \notag \\
	& = x^\top Q x + u^\top R u + ( A x + Bu)^\top P_K (Ax + Bu) - \sigma^2 \cdot \tr (R+  P_K BB^\top )-\tr( P_K \Sigma_K ),
	\$ 
	where $x'$ in the first equality is the next state following $(x, u)$, and  the last equality follows from \eqref{eq:cost_K2} and the fact that $\Psi_{\sigma}= \Psi + \sigma^2 \cdot B B^\top$. Thus, we prove \eqref{eq:qk}. 
	
	It remains  to derive the policy gradient $\nabla_K J(K)$.
	By  \eqref{eq:cost_K2}, we have 
	\#\label{eq:pg1}
	\nabla_K J(K) = 2 R K \Sigma_K + \nabla _{K} \tr( Q_0  \cdot  \Sigma_K ) \big \vert_{Q_0 = Q + K^\top R K } ,
	\#
	where the second term denotes that we first take compute the gradient  $\nabla _K \tr [Q _0 \Sigma_K] $ with respect to $K$ and then set $Q_0 = Q + K^\top R K $. Recall that we can write $\Sigma_K = \cT_K(\Psi_{\sigma})$.  The following lemma enables us to compute  the gradient involving $\cT_K$.

	\begin{lemma} 
		Let  $W $ and $\Psi$ be two positive definite  matrices. Then  it holds that 
		\$
		\nabla _{K} \tr\bigl [  W  \cdot    \cT_K (\Psi)\bigr ] = - 2 B^\top \cT_K^\top(W) ( A- BK) \cT_K (\Psi).
		\$
	\end{lemma}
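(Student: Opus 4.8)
The plan is to compute the gradient through a directional derivative and to exploit the adjoint identity for $\cT_K$ already established in \eqref{eq:relation1}, rather than differentiating the defining series entry by entry. Fix an arbitrary direction $\Delta \in \RR^{k\times d}$ and write $L = A - BK$, so that the perturbation $K \mapsto K + \epsilon\Delta$ induces $L \mapsto L - \epsilon B\Delta$. Setting $\Sigma = \cT_K(\Psi)$, recall from \eqref{eq:new_bellman} that $\Sigma$ solves the Lyapunov equation $\Sigma = \Psi + L\Sigma L^\top$. The target quantity is $f(K) = \tr[W\Sigma]$, and the goal is to show that its directional derivative satisfies $Df[\Delta] = \tr\big[(\nabla_K f)^\top \Delta\big]$ with $\nabla_K f = -2B^\top\cT_K^\top(W)L\Sigma$, which is exactly the claimed formula.

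First I would differentiate the Lyapunov equation implicitly. Writing $\dot\Sigma$ for the derivative of $\Sigma$ in the direction $\Delta$, differentiating $\Sigma = \Psi + L\Sigma L^\top$ and using $\dot L = -B\Delta$ yields $\dot\Sigma - L\dot\Sigma L^\top = -B\Delta\Sigma L^\top - L\Sigma\Delta^\top B^\top =: S$. Since $\rho(L) < 1$, this Lyapunov equation in $\dot\Sigma$ has the unique solution $\dot\Sigma = \cT_K(S)$ by the definition of $\cT_K$ in \eqref{eq:define_operators}. Consequently $Df[\Delta] = \tr[W\dot\Sigma] = \tr[W\cT_K(S)]$, and applying the adjoint identity \eqref{eq:relation1} converts this into $\tr[\cT_K^\top(W)\,S]$, which removes the implicit dependence on $\dot\Sigma$ and leaves a closed-form expression linear in $\Delta$.

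Next I would substitute $S = -B\Delta\Sigma L^\top - L\Sigma\Delta^\top B^\top$ and simplify $\tr[\cT_K^\top(W)S]$ into two trace terms. The key observation is that the two terms coincide: transposing the second trace and using the symmetry of $W$, $\Psi$ (hence of $\cT_K^\top(W)$ and $\Sigma$) together with the cyclic property of the trace shows that each term equals $-\tr[\cT_K^\top(W)B\Delta\Sigma L^\top]$, which produces the factor of $2$. A final cyclic rearrangement gives $Df[\Delta] = -2\tr[\Sigma L^\top\cT_K^\top(W)B\,\Delta]$, and reading off the coefficient of $\Delta$ under the Frobenius pairing $\tr[(\nabla f)^\top\Delta]$ yields $\nabla_K f = -2B^\top\cT_K^\top(W)L\Sigma = -2B^\top\cT_K^\top(W)(A-BK)\cT_K(\Psi)$, where I use once more that $\cT_K^\top(W)$ and $\Sigma$ are symmetric when transposing the coefficient matrix.

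I expect the main obstacle to be bookkeeping rather than anything conceptual: namely, symmetrizing the two trace terms correctly to obtain the factor of $2$, and tracking the gradient convention so that the final transpose lands on the right factors. A secondary technical point is justifying the interchange of differentiation with the infinite summation implicit in $\cT_K$; this is legitimate because $\rho(L) < 1$ makes the series $\sum_t L^t\,(\cdot)\,(L^t)^\top$ converge uniformly on a neighborhood of $K$, so $K \mapsto \cT_K(\Psi)$ is smooth and the implicit differentiation above is valid. As an alternative that bypasses the Lyapunov manipulation, one could differentiate the series $\tr[W\cT_K(\Psi)] = \sum_{t\ge0}\tr[W L^t\Psi (L^t)^\top]$ directly, using $\frac{d}{d\epsilon}(L-\epsilon B\Delta)^t = -\sum_{s=0}^{t-1}L^s B\Delta L^{t-1-s}$ and reindexing the resulting double sum into the factors $\cT_K^\top(W)$ and $\cT_K(\Psi)$; this reproduces the same answer but demands more delicate manipulation of the double sum.
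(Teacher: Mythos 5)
Your proof is correct, and it takes a genuinely different route from the paper's. The paper works with the scalar quantity $\tr[W\cdot\cT_K(\Psi)]$ directly: it peels off one layer of the recursion $\tr[W\,\cT_K(\Psi)]=\tr(W\Psi)+\tr[\cF_K^\top(W)\,\cT_K(\Psi)]$, differentiates that layer to produce the term $-2B^\top W(A-BK)\cT_K(\Psi)$, iterates $k$ times, and then passes to the limit using $\rho(A-BK)<1$ to kill the remainder; the factor $\cT_K^\top(W)=\sum_t\cF_K^{\top,t}(W)$ emerges from summing the resulting series. You instead differentiate the matrix Lyapunov equation $\Sigma=\Psi+L\Sigma L^\top$ implicitly, solve the sensitivity equation $\dot\Sigma-L\dot\Sigma L^\top=S$ in closed form as $\dot\Sigma=\cT_K(S)$, and then transfer $\cT_K$ onto $W$ via the adjoint identity \eqref{eq:relation1}; the symmetrization of the two terms in $S$ produces the factor of $2$ in one step rather than through a series. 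Your route is arguably cleaner — it avoids the telescoping/limiting argument entirely and makes the role of the adjoint operator explicit — at the cost of having to justify differentiability of $K\mapsto\Sigma_K$ (which you do, via uniform convergence of the series near $K$; the implicit function theorem on the Lyapunov equation would also work). Two cosmetic points: the paper states \eqref{eq:relation1} and defines $\cT_K$ only for positive definite arguments, whereas your $S$ is merely symmetric, so you should note that both the operator and the adjoint identity extend by linearity (the cyclic-trace derivation never uses definiteness); and your bookkeeping of the gradient convention and the final transpose is correct, since $\bigl(\Sigma L^\top\cT_K^\top(W)B\bigr)^\top=B^\top\cT_K^\top(W)L\Sigma$ by symmetry of $\Sigma$ and $\cT_K^\top(W)$, matching the claimed formula.
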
 
	\begin{proof}
		To simplify the notation, we define operator $\cF_K$ by 
		\$ 
		\cF_K^\top (\Omega) = ( A- BK)^\top  \Omega ( A- BK)
		\$
		and let $\cF^{\top, t } _K$ be the $t$-th composition of $\cF_K$. Thus, 
		by the definition of $\cT_K^\top$ and $\cF_K^\top$, we have 
		\$
		\cT_K^\top (\Omega) = \sum_{t\geq 0} \cF_K^{\top, t} (\Omega). 
		\$
		Moreover, by \eqref{eq:new_bellman} we have 
		\$
		\tr\bigl [  W  \cdot    \cT_K(\Psi)\bigr ] =  \tr ( W \Psi) +  \tr\bigl [ ( A- BK)^\top  W ( A- BK)   \cdot    \cT_K(\Psi )\bigr ],
		\$
		which implies that 
		\#\label{eq:recur}
		\nabla_K \tr\bigl [  W  \cdot    \cT_K (\Psi)\bigr ]   = -2 B^\top W(A- BK) \cT_K(\Psi) + \nabla_K \tr [ W_1 \cT_K(\Psi )] \Biggiven   _{W_1 = \cF_K(\Omega)  } . 
		\#
		For any $k \geq 1$, by recursively applying \eqref{eq:recur} for $k$ times,  we have 
		\#\label{eq:recur2}
		&	\nabla_K \tr\bigl [  W  \cdot    \cT_K (\Psi)\bigr ]   \notag \\
		& \qquad = -2 B^\top \bigg[ \sum_{t=0}^k  \cF_K^{\top, t} ( W)  \bigg] (A-BK)  \cT_K(\Psi) + \nabla_K \tr [ W_1 \cT_K(\Psi )] \Biggiven  _{W_1 = \cF_K^{(k+1)} (\Omega)  }.
		\#
		Meanwhile, since $\rho(A - BK) < 1$,   we have 
		\$
		\lim_{k\rightarrow \infty} \tr \bigl [ \cF_K^{\top, k} (W) \cT_K(\Psi) \bigr ] \leq  \lim_{k\rightarrow \infty} \| W \| \cdot  \tr [\cT_K(\Psi) ]  \cdot \rho(A- BK)^{2k} = 0.  
		\$
		Thus, by   letting $k$ on the right-hand side of \eqref{eq:recur2}  go to infinity, we obtain
		\$
		\nabla_K \tr\bigl [  W  \cdot    \cT_K(\Psi)\bigr ] = -2 B^\top \bigg[ \sum_{t=0}^\infty   \cF_K^{\top, t} ( W)  \bigg](A-BK)  \cT_K(\Psi) = -2B^\top  \cT_K^{\top} (W) ( A- BK) \cT_K(\Psi).
		\$
		Therefore, we conclude the proof of the lemma.
	\end{proof}
	
	By the above lemma, since $\Sigma_K = \cT_K(\Psi_{\sigma})$, we have 
	\#\label{eq:pg2}
	& \nabla _{K} \tr( Q_0  \cdot  \Sigma_K ) \big \vert_{Q_0 = Q + K^\top R K } = \nabla _{K} \tr\bigl [  Q_0  \cdot    \cT_K (\Psi_{\sigma})\bigr ] \Big \vert_{Q_0 = Q + K^\top R K } \notag \\
	& \qquad = -2 B^\top   \cT_K^\top(Q + K^\top R K) ( A- BK) \cT_K (\Psi_{\sigma}) = -2 B^\top P_K ( A- BK) \Sigma_K,
	\# 
	where we use the fact that $P_K = \cT^\top _K ( Q + K^\top R K )$. 
	Therefore, combining \eqref{eq:pg1} and \eqref{eq:pg2}, we  establish \eqref{eq:grad_cost}, which completes the proof of Proposition \ref{prop:pg}.
\end{proof}

  \subsection{Proof of Lemma \ref{lemma:pe_mat1} } \label{sec:proof_lemma_pe_mat1}
 We present a stronger lemma than Lemma \ref{lemma:pe_mat1}, whose proof automatically validates Lemma \ref{lemma:pe_mat1}.
 \begin{lemma} \label{lemma:pe_mat}
 Suppose $\rho(A- BK)< 1$. Let  $N(0, \tilde \Sigma_K) $ be the stationary distribution of the state-action pair  $(x,u)$ when following policy $\pi_K$.  Then for $\Xi_K$  defined in \eqref{eq:gtd_mat},  we have 
 \#
 \Xi_K & =\bigl ( \tilde\Sigma_K \otimes_s \tilde \Sigma_K \bigr ) - \bigl ( \tilde  \Sigma_K  L  ^\top\bigr)  \otimes_s \bigl ( \tilde   \Sigma_K L  ^\top \bigr) = \bigl ( \tilde\Sigma_K \otimes_s \tilde \Sigma_K \bigr ) \bigl ( I - L^\top \otimes_s L ^\top \bigr ).   \label{eq:xik} 
 \# 
  Moreover,   $\Xi_K$ is a  invertible matrix whose operator norm is bounded by $  2  [   \sigma ^2 +   ( 1+ \| K \|_{\fro}^2  ) \cdot  \| \Sigma_K \|   ] $.  
  There exists a positive number $\kappa_K^*$ such that   the minimum singular value of the matrix in the left-hand side of \eqref{eq:large_le} is lower bounded by a constant $\kappa_K^*>0$, where $\kappa_K^*$ only depends on $\rho(A- BK)$, $\sigma$, and  $\sigma_{\min} (\Psi)$. 
  Furthermore, 
  since $\Xi_K$ is invertible, the linear equation in \eqref{eq:large_le} has  unique solution $\vartheta_K^*$, whose first and second components are $J(K)$ and $\svec(\Theta_K)$, respectively.
 	\end{lemma}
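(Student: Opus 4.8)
The plan is to exploit the purely quadratic structure of the features. Writing $z=(x^\top,u^\top)^\top$, the feature map \eqref{eq:feature_map} is $\phi(x,u)=\svec(zz^\top)$, so every entry of $\Xi_K$ in \eqref{eq:gtd_mat} is the covariance of two quadratic forms in the Gaussian vectors $z=(x,u)$ and $z'=(x',u')$. Under $\pi_K$ the pair $z_t$ obeys a linear dynamical system $z_{t+1}=Lz_t+\xi_t$, where one checks directly that $L=\begin{pmatrix}A&B\\-KA&-KB\end{pmatrix}=\begin{pmatrix}I\\-K\end{pmatrix}\begin{pmatrix}A&B\end{pmatrix}$; since the nonzero eigenvalues of $L$ coincide with those of $(A\;B)\begin{pmatrix}I\\-K\end{pmatrix}=A-BK$, we have $\rho(L)=\rho(A-BK)<1$. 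First I would record two consequences of stationarity: the marginal covariance of $z$ is the stationary covariance $\tilde\Sigma_K$, and, because the innovation $\xi_t$ is independent of $z_t$, the one-step cross-covariance is $\EE[z\,z'^\top]=\tilde\Sigma_K L^\top$.

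To establish \eqref{eq:xik}, I would evaluate the bilinear form $\svec(M)^\top\Xi_K\svec(N)=\EE[(z^\top Mz)(z^\top Nz-z'^\top Nz')]$ for arbitrary symmetric $M,N$. By the Gaussian product-moment identity (Lemma \ref{lem:quadform}), $\EE[(z^\top Mz)(z^\top Nz)]=\tr(M\tilde\Sigma_K)\tr(N\tilde\Sigma_K)+2\tr(M\tilde\Sigma_K N\tilde\Sigma_K)$, while the joint version together with $\EE[z\,z'^\top]=\tilde\Sigma_K L^\top$ gives $\EE[(z^\top Mz)(z'^\top Nz')]=\tr(M\tilde\Sigma_K)\tr(N\tilde\Sigma_K)+2\tr(M\,\tilde\Sigma_K L^\top N L\tilde\Sigma_K)$. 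The rank-one pieces $\tr(M\tilde\Sigma_K)\tr(N\tilde\Sigma_K)$ cancel in the difference, leaving two ``sandwich'' traces; rewriting each through the defining identity $(P\otimes_s P)\svec(N)=\svec(PNP^\top)$ produces the first equality in \eqref{eq:xik}, and the symmetric-Kronecker mixed-product rule $(\tilde\Sigma_K\otimes_s\tilde\Sigma_K)(L^\top\otimes_s L^\top)=(\tilde\Sigma_K L^\top)\otimes_s(\tilde\Sigma_K L^\top)$ yields the factored form.

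With the factorization in hand, invertibility and the norm bound become structural. Positive definiteness of $\tilde\Sigma_K$—which genuinely uses $\sigma>0$, since for $z=(a,b)$ one has $(a,b)\tilde\Sigma_K(a,b)^\top=(a-K^\top b)^\top\Sigma_K(a-K^\top b)+\sigma^2\|b\|_2^2$, whose Schur complement in the $x$-block is exactly $\sigma^2 I_k$—makes $\tilde\Sigma_K\otimes_s\tilde\Sigma_K$ invertible; meanwhile the eigenvalues of $L^\top\otimes_s L^\top$ are the products $\mu_i\mu_j$ of the eigenvalues of $L$, each of modulus at most $\rho(A-BK)^2<1$, so $I-L^\top\otimes_s L^\top$ is invertible and hence so is $\Xi_K$. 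The operator-norm estimate follows by bounding $\|\tilde\Sigma_K\|$ via $\tilde\Sigma_K=\begin{pmatrix}I\\-K\end{pmatrix}\Sigma_K\begin{pmatrix}I\\-K\end{pmatrix}^\top+\diag(0,\sigma^2 I_k)$, giving $\|\tilde\Sigma_K\|\le\sigma^2+(1+\|K\|_{\fro}^2)\|\Sigma_K\|$, and combining with $\|P\otimes_s P\|=\|P\|^2$ to control both Kronecker summands, yielding the stated bound.

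It remains to lower bound the minimum singular value of the block-lower-triangular matrix $M$ on the left-hand side of \eqref{eq:large_le}, which is invertible with $M^{-1}=\begin{pmatrix}1&0\\-\Xi_K^{-1}g&\Xi_K^{-1}\end{pmatrix}$, where $g=\EE[\phi]=\svec(\tilde\Sigma_K)$. I would use $\sigma_{\min}(M)=1/\|M^{-1}\|$ and $\|M^{-1}\|\le 1+\|\Xi_K^{-1}\|(1+\|g\|_2)$, reducing matters to a lower bound on $\sigma_{\min}(\Xi_K)\ge\sigma_{\min}(\tilde\Sigma_K)^2\,\sigma_{\min}(I-L^\top\otimes_s L^\top)$; here $\sigma_{\min}(\tilde\Sigma_K)=\lambda_{\min}(\tilde\Sigma_K)$ is bounded below using $\Sigma_K\succeq\Psi\succeq\sigma_{\min}(\Psi)I$ together with the displayed quadratic-form identity. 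The main obstacle is $\sigma_{\min}(I-L^\top\otimes_s L^\top)$: because $L^\top\otimes_s L^\top$ is in general non-normal, $\rho(A-BK)<1$ controls only its eigenvalues, not its singular values. I would resolve this by expanding the inverse as the convergent Neumann series $(I-L^\top\otimes_s L^\top)^{-1}=\sum_{j\ge0}(L^j)^\top\otimes_s(L^j)^\top$ (repeated mixed product), so that $\|(I-L^\top\otimes_s L^\top)^{-1}\|\le\sum_{j\ge0}\|L^j\|^2<\infty$, the finiteness and size of this sum being governed by the geometric decay $\|L^j\|\lesssim\rho^j$ for any $\rho\in(\rho(A-BK),1)$. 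This furnishes $\kappa_K^*$, controlled through $\rho(A-BK)$ (the Neumann factor), $\sigma$ and $\sigma_{\min}(\Psi)$ (the factor $\lambda_{\min}(\tilde\Sigma_K)$). Finally, invertibility of $M$ makes $\vartheta_K^*$ the unique solution of \eqref{eq:large_le}; and since the text preceding the lemma verifies from the Bellman equation \eqref{eq:new_bellmanQ} and Proposition \ref{prop:pg} that $\vartheta_K^*=(J(K),\svec(\Theta_K)^\top)^\top$ satisfies \eqref{eq:large_le}, its components are identified as $J(K)$ and $\svec(\Theta_K)$.
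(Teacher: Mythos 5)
Your proof is correct and follows essentially the same route as the paper's: the same augmented linear system $z'=Lz+\varepsilon$ with $\rho(L)=\rho(A-BK)$, the same reduction of the bilinear form $\svec(M)^\top\Xi_K\svec(N)$ to Gaussian quadratic-form moments via Lemma \ref{lem:quadform}, the same symmetric-Kronecker identities for the factorization, invertibility, and norm bound, and the same explicit block inverse of the matrix in \eqref{eq:large_le} to control its minimum singular value. The one substantive difference is in your favor: where the paper bounds $\|(I-L^\top\otimes_s L^\top)^{-1}\|$ by $[1-\rho^2(L)]^{-1}$ directly from the spectral radius (a step that is not justified when $L$ is non-normal), you correctly flag the issue and repair it with the Neumann series $\sum_{j\ge 0}(L^j)^\top\otimes_s(L^j)^\top$ together with the geometric decay $\|L^j\|\le C\rho^j$ for $\rho\in(\rho(A-BK),1)$, at the honest cost that the resulting constant depends on $L$ through more than its spectral radius.
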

 \begin{proof}	
Throughout the proof of Lemma \ref{lemma:pe_mat}, for any state-action pair $(x, u) 
\in \RR^{d+k}$, 
we denote the next state-action pair following policy $\pi_K$ by $(x' ,u ')$. Then we can write 
\#\label{eq:joint_dynprime}
 x' =  A x + Bu +\epsilon, \qquad u ' = - Kx' + \sigma \cdot \eta = - KA x- K B u - K \epsilon + \sigma \cdot \eta,
\#
where $\epsilon \sim N(0, \Psi)$ and $\eta \in N(0, I_k)$. For notational simplicity, we denote $(x, u)$ and $(x', u')$ by $z $ and $z'$, respectively. Thus, we can write $z ' = L z + \varepsilon$, where we define
\#\label{eq:large_lds}
L = \begin{pmatrix} 
A & B  \\
 - KA & - KB
\end{pmatrix}  =  \begin{pmatrix} 
I _d\\
- K
\end{pmatrix} 
 \begin{pmatrix} 
A & B  
\end{pmatrix} , \qquad \varepsilon   =   \begin{pmatrix} 
 \epsilon\\
- K \epsilon + \sigma \cdot \eta  
\end{pmatrix}. 
\#
Since it holds that $\rho(MN)= \rho(NM)$ for any two matrices $M$ and $N$ \citep[Theorem  1.3.22]{horn2013matrix}, we have $\rho(L ) = \rho( A - BK) < 1$. 
Meanwhile, 
by definition, $\varepsilon \in \RR^{d+k}$ is a centered Gaussian random variable with covariance 
\#\label{eq:def_tildeK}
\begin{pmatrix}
\Psi & - \Psi K^\top  \\
- K \Psi &  K \Psi K^\top + \sigma^2 \cdot  I_k 
\end{pmatrix} , 
\#
which is denoted by $\tilde \Psi_{\sigma}$ for notational simplicity. In addition, for $x\sim \rho_K$ and $u\sim \pi_K(\cdot \given x)$, we denote the joint distribution of $z = (x,u)$ by $\tilde \rho_K$, which is a centered Gaussian distribution in $\RR^{d\times k}$. Since $x \sim N(0 ,\Sigma_K)$ and $u = - K x + \sigma \cdot I_k$, we can write $\tilde \rho_K$ as $N(0, \tilde \Sigma_K)$, where $\tilde\Sigma_K \in \RR^{(d+k)\times (d+k)}$ can be written as 
\# \label{eq:joint_cov}
\tilde \Sigma_K =  \begin{pmatrix}
\Sigma_K &   - \Sigma_K K ^\top \\
- K\Sigma_K & K \Sigma_K K^\top +  \sigma^2 \cdot I_k 
\end{pmatrix}  =  \begin{pmatrix}
	0 &   0 \\
	0 &   \sigma^2 \cdot I_k 
	\end{pmatrix} +
\begin{pmatrix}
	I_d \\
	-K   
\end{pmatrix} \Sigma_K \begin{pmatrix}
I_d \\
-K   
\end{pmatrix} ^\top .
\#
Thus, by triangle inequality   we have 
	\#
	\bigl \| \tilde \Sigma_K \bigl \|_{\fro}  \leq \sigma^2 \cdot k + \| \Sigma_K \|  \cdot ( d + \| K \|_{\fro}^2 )  , \qquad 
	\bigl \| \tilde \Sigma_K \bigl \|   \leq \sigma ^2 +   ( 1+ \| K \|_{\fro}^2  ) \cdot  \| \Sigma_K \| , 
	\label{eq:bound_cov_norm}
	\#
	where in \eqref{eq:bound_cov_norm} we use the fact that $\| A B \|_{\fro} \leq \| A \|_{\fro} \cdot \| B \|$.
	
Furthermore, since $L $ defined in \eqref{eq:large_lds} satisfy $\rho(L) < 1$, 
$\tilde \Sigma_K$ is the unique positive definite solution to the Lyapunov equation 
\#\label{eq:joint_cov_lya}
\tilde \Sigma_K  = L \tilde \Sigma _K  L ^\top +\tilde \Psi_K,
\#
where  $\tilde \Psi_K$ is  defined in    \eqref{eq:def_tildeK}. 
Moreover, the feature mapping can be written as $\phi(x, u) = \phi(z) = \svec(z z^\top)$, which implies that 
\$
    \phi(x, u ) - \phi(x', u') & = \svec\bigl [ z z ^\top - (L z +\varepsilon)(Lz + \varepsilon)^\top \bigr ]  \\
    &= \svec \bigl ( z z ^\top - Lz z ^\top L^\top - L z \varepsilon^\top - \varepsilon z^\top L^\top - \varepsilon\varepsilon^\top\bigr ).
\$
Hence, since $\varepsilon $ is independent of $z $, by the definition of $\Xi_K$ in \eqref{eq:gtd_mat},  we have
\begin{align*}
   \Xi_K  = \EE_{z  \sim \tilde \rho_K}[  \phi(z ) \svec( xx^\top - Lxx^\top L^\top - \tilde \Psi_{\sigma} )^\top ]. 
\end{align*}
 
Now let $M$ and $N$ by any two matrices, by direct computation, we have  
\# 
   & \svec(M)^\top \Xi_K \svec(N)  = \EE_{z \sim \tilde \rho_K } \bigl [\la z z^\top,  M\ra  \cdot  \la z z ^\top - L zz ^\top L^\top - \tilde \Psi_{\sigma},  N\ra  \bigr ] \notag \\
    &\qquad = \EE_{z  \sim \tilde \rho_K }\bigl [ z^\top M z   z ^\top (N - L^\top N L ) z  \bigr ] -     \EE_{z \sim \tilde \rho_K}    [ z ^\top M z    ] \cdot  \la \tilde \Psi_{\sigma} , N \ra \notag  \\
    &\qquad = \EE_{g  \sim N(0, I_{d+k})} \bigl [ g^\top\tilde \Sigma _K ^{1/2} M \tilde \Sigma _K ^{1/2}gg ^\top \tilde \Sigma _K ^{1/2} (N - L^\top N L) \tilde \Sigma _K ^{1/2} g \bigr ] - \big \la  \tilde \Sigma_K,  M \big \ra   \cdot   \big \la \tilde \Psi_{\sigma} , N \big\ra, \label{eq:compute11}
    \#
    where $\tilde \Sigma _K ^{1/2}$ is the  square root of $\tilde \Sigma_K$ defined in \eqref{eq:joint_cov_lya}.
    We utilize the following Lemma to compute the expectation of the product of quadratic forms of Gaussian random variables.
 \begin{lemma}
\label{lem:quadform}
Let 
 $g \sim N(0, I_d)$ be the standard Gaussian random variable in $\RR^d$ and  let $A_1, A_2$ be  two symmetric matrices. Then we have 
 \$
 \EE[ g^\top A_1 g\cdot  g^\top A_2 g  ] = 2 \tr (A_1 A_2) + \tr(A_1) \cdot \tr(A_2).
 \$
\end{lemma}
   \begin{proof}
   See, e.g., \cite{nagar1959bias,magnus1978moments} for a detailed proof.
   \end{proof}
   Applying this lemma to \eqref{eq:compute11}, we have 
   \#
      & \svec(M)^\top \Xi_K \svec(N)  \notag 
      \\&
      \qquad  = 2 \tr \bigl [  \tilde \Sigma _K ^{1/2} M \tilde \Sigma _K ^{1/2}  \cdot  \tilde \Sigma _K ^{1/2} (N - L^\top N L) \tilde \Sigma _K ^{1/2}   \bigr ] \notag   \\
     &  \qquad\qquad   + \tr \bigl ( \tilde \Sigma _K ^{1/2} M \tilde \Sigma _K ^{1/2}  \bigr )  \cdot \tr \bigl [    \tilde \Sigma _K ^{1/2} (N - L^\top N L) \tilde \Sigma _K ^{1/2}   \bigr ] -\la  \tilde \Sigma_K,  M \ra   \cdot  \big \la \tilde \Psi_{\sigma} , N \big \ra \notag \\
     &\qquad  = 2 \big \la  M, \tilde \Sigma _K  (N - L^\top N L)\tilde  \Sigma _K \big\ra     + \big \la M, \tilde \Sigma_K \big \ra  \cdot \bigl [ \bigl \la N - L^\top N L, \tilde \Sigma_K \bigr  \ra - \bigl \la \tilde \Psi_{\sigma} , N \big \ra \bigr ]  . \label{eq:compute12}
   \#
   Note that $\tilde \Sigma_K$ satisfy the Lyapunov equation in  \eqref{eq:joint_cov_lya}, which implies that 
   \$
   \bigl \la N - L^\top N L, \tilde \Sigma_K \bigr  \ra = \bigl \la N, \tilde \Sigma_K  \bigr \ra -  \bigl \la N, L \tilde \Sigma_K L^\top  \bigr \ra = \bigl \la N,  \tilde \Psi_{\sigma} \bigr \ra.
   \$
   Thus, by \eqref{eq:compute12} we have 
   \$
    \svec(M)^\top \Xi_K \svec(N)   & = 2 \big \la  M, \tilde \Sigma _K  (N - L^\top N L)\tilde  \Sigma _K \big\ra =2  \svec (M)^\top \svec\bigl [ \tilde \Sigma _K  (N - L^\top N L)\tilde  \Sigma _K \bigr ]  \\
    & = 2 \svec(M)^\top \bigl (  \tilde \Sigma _K \otimes_{s} \tilde \Sigma _K - \tilde \Sigma _K L^\top \otimes _s  \tilde \Sigma _K L^\top  \bigr ) \svec(N)^\top \notag \\
    & = 2  \svec(M)^\top \bigl [  \bigl ( \tilde \Sigma _K \otimes_{s} \tilde \Sigma _K  \bigr ) ( I- L^\top \otimes L^\top) \bigr ]  \svec(N),
   \$
   where the last equality follows from the fact that 
   \$
   (A \otimes _s B ) ( C \otimes_s D) = 1/2 \cdot (A C \otimes_s BD + AD \otimes _s BC) 
   \$
   holds for any matrices $A$, $B$, $C$, $D$.  
   Thus, we have established \eqref{eq:xik}. Since $\rho(L) = \rho(A- BK) < 1$,  $I- L^\top \otimes L^\top$ is positive definite, which implies that  $\Xi_K$ is invertible.  
   
   Now we consider the  linear equation in 
   \eqref{eq:large_le}. Since $\Xi_K$ is invertible, 
   \#\label{eq:tilde_Xi_K}
    \tilde \Xi_K  = 
   \begin{pmatrix} 
1 & 0\\
\EE_{(x,u)} [ \phi(x,u)]  & \Xi_K 
 \end{pmatrix} =   \begin{pmatrix} 
 1 & 0\\
\svec( \tilde \Sigma_K)  & \Xi_K 
 \end{pmatrix}
    \#
    is also invertible. Thus, 
   \eqref{eq:large_le} has   unique solution $\vartheta_K^*$. 
   Moreover, to bound the smallest singular value of $\tilde \Xi_K$, we note that the inverse of $\tilde \Xi_K$ can be written as 
   \$
   \tilde \Xi_K^{-1} =   \begin{pmatrix} 
1 & 0\\
 -  \Xi_K^{-1}  \svec( \tilde \Sigma_K)   & \Xi_K ^{-1}
 \end{pmatrix},
   \$
   whose operator norm is bounded via 
   \#\label{eq:upper_bound_inv}
   \big \|  \tilde \Xi_K^{-1} \big  \|^2 \leq 1 + \bigl \|  \Xi_K^{-1}  \svec(  \tilde \Sigma_K) \bigr \|_2^2 +  \| \Xi_K ^{-1} \|^2 .
   \#
   By \eqref{eq:xik}, we have 
   \#\label{eq:upper_bound_inv2}
   &  \Xi_K^{-1}   \svec(  \tilde \Sigma_K)   =   ( I - L^\top \otimes_s L ^\top   )^{-1} ( \tilde\Sigma_K \otimes_s \tilde \Sigma_K   )^{-1}   \svec(  \tilde \Sigma_K)  \notag \\
   & \qquad =  ( I - L^\top \otimes_s L ^\top   )^{-1} ( \tilde\Sigma_K^{-1}  \otimes_s \tilde \Sigma_K^{-1}    )  \svec(  \tilde \Sigma_K)   = ( I - L^\top \otimes_s L ^\top   )^{-1} \svec( \tilde \Sigma_K ^{-1} ). 
   \# 
   
   The following lemma characterizes the eigenvalues of symmetric Kronecker matrices.
   
   \begin{lemma} [Lemma 7.2 in \cite{alizadeh1998primal}]\label{lemma:eigkron}
   	Let $A$ and $B$ 
   	be two matrices in $\RR^{m\times m}$ that can be    diagonalized simultaneously.
   	Moreover, let $\lambda_1, \ldots, \lambda_m$ and $\mu_1, \ldots, \mu_m$ be the eigenvalues of $A$ and $B$, respectively. 
   	Then, the eigenvalues of 
   	$A \otimes _s B$ are given by $\{1/2 \cdot  (\lambda_i \mu_j + \lambda_j \mu_i), i,j\in[m] \}$.
   	\end{lemma}
	
	By Lemma  \ref{lemma:eigkron},  the spectral radius of $L^\top \otimes_s L ^\top$ is bounded by $\rho^2 (L)  = \rho^2(A- BK) < 1$. By \eqref{eq:upper_bound_inv2} we have 
	\#\label{eq:upper_bound_inv3}
	 \bigl \| \Xi_K^{-1}   \svec(  \tilde \Sigma_K)  \bigr \|_2  \leq \bigl [ 1- \rho^2 (L)\bigr ]^{-1} \cdot \| \tilde \Sigma _K ^{-1} \|_F \leq \sqrt{d+k} \cdot\bigl [ 1- \rho^2 (L) \bigr ]^{-1} \cdot  \| \tilde \Sigma_K^{-1} \| . 
	\# 
   Besides, by \eqref{eq:xik} we have 
   \#\label{eq:upper_bound_inv4}
    \| \Xi_K ^{-1} \|  \leq  \bigl \| ( I   -  L^\top \otimes_s L ^\top  )^{-1} \bigr \| \cdot \big  \| \tilde\Sigma_K^{-1}  \otimes_s \tilde \Sigma_K^{-1} \big   \| \leq [ 1- \rho^2(L) \bigr ]^{-1} \cdot \big  \|  \tilde \Sigma_K^{-1} \big \|^2 .
   \#
   Notice that $\| \tilde \Sigma_K^{-1}  \| = 1/ \sigma_{\min} ( \tilde \Sigma_K)$.
   Hence, combining \eqref{eq:upper_bound_inv}, \eqref{eq:upper_bound_inv3}, and \eqref{eq:upper_bound_inv4} we conclude that 
   \$
      \big \|  \tilde \Xi_K^{-1} \big  \|^2  \leq 1 + (d+k) \cdot [ 1- \rho(L)^2 \bigr ]^{-2} \cdot [ \sigma_{\min} ( \tilde \Sigma_K)]^{-2} +  [ 1- \rho(L)^2 \bigr ]^{-2 } \cdot [ \sigma_{\min} ( \tilde \Sigma_K)]^{-4},
   \$
   which implies that
    \$
   \sigma_{\min}( \tilde \Xi_K )  \geq \frac{[1- \rho^2 (A- BK)   ]  \cdot [\sigma_{\min} (\tilde \Sigma_K) ]^2 } { \Bigl (  1+  [ 1- \rho^2 (A- BK) ] ^2 \cdot [\sigma_{\min} (\tilde \Sigma_K) ]^4 + (d+k) \cdot  [\sigma_{\min} (\tilde \Sigma_K) ]^2\Bigr )^{1/2}       } > 0.
   \$
  Moreover, to see that $\sigma_{\min} (\tilde \Sigma_K) $ only depends on $\sigma$ and $\sigma_{\min}(\Psi)$, for any $a\in \RR^d$ and $b \in \RR^k$, we have 
  \$
   \begin{pmatrix} 
  a \\
  b
  \end{pmatrix}^\top \tilde  \Sigma_K 
  \begin{pmatrix} 
  a \\
  b
  \end{pmatrix} & = \EE_{(x,u)\sim \tilde \rho_K} [   ( a^\top x + b^\top u) ^2 ]  = \EE_{x\sim \rho_K,\eta \sim N(0, I_k)} \bigl \{ [ (a -K^\top b)  x + \sigma \cdot \eta ]^2 \bigr \}  \\
  &\geq  \sigma^2 \cdot \| b \|_2^2 + \sigma_{\min} ( \Psi) \cdot \| a - K^\top b \|_2^2  \geq ( \sigma^2 - \sigma_{\min} ( \Psi)  \cdot \| K \|^2 ) \cdot \| b \|_2 ^2  +  \sigma_{\min} ( \Psi) \cdot \| a \|_2^2.
  \$
  Thus, suppose $\sigma^2 $ is sufficiently large such that $\sigma^2 - \sigma_{\min} ( \Psi)  \cdot \| K \|^2 > 0$, $\sigma_{\min} (\tilde  \Sigma_K ) $ is lower bounded by $\min\{ \sigma^2 - \sigma_{\min} ( \Psi)  \cdot \| K \|^2, \sigma_{\min} ( \Psi) \}$. Therefore, we can find a constant $\kappa_K^*$ depending only on $\rho(A-KB)$, $\sigma$, and $\sigma_{\min}(\Psi)$ such that $\sigma_{\min}(\tilde \Xi_K) \geq \kappa_K^*$.
     
   Finally, to obtain an upper bound on $\| \Xi_K\|$, by triangle inequality and Lemma  \ref{lemma:eigkron} we have 
   \$   
    \| \Xi_K \| \leq \bigl \| \tilde \Sigma_K \otimes _s \tilde \Sigma_K \bigr \| \cdot \bigl ( 1 + \| L^\top \otimes _s L^\top \| \bigr  )  \leq \bigl \| \tilde \Sigma_K \bigr \| ^2 \cdot  \bigl (1 + \| L \| ^2 \bigr ) \leq 2 \bigl \| \tilde \Sigma_K \bigr \| ^2   ,
   \$
   where we use the fact that $\rho(L) < 1$. 
   Applying \eqref{eq:bound_cov_norm} to the inequality above, we obtain that 
   \$
    \| \Xi_K \| \leq  2 \bigl [   \sigma ^2 +   ( 1+ \| K \|_{\fro}^2  ) \cdot  \| \Sigma_K \|  \bigr ],
   \$
      which concludes the proof.
\end{proof}

\clearpage
\bibliographystyle{ims}
\bibliography{rl_ref}

\end{document}